\theoremstyle{definition}
\newtheorem{definition}{Definition}
\theoremstyle{plain}
\newtheorem{theorem}{Theorem}
\newtheorem{prop}[theorem]{Proposition}
\setlist{nosep,topsep=-3pt,leftmargin=1em}
\definecolor{mydarkred}{rgb}{0.6,0,0}
\definecolor{mydarkgreen}{rgb}{0,0.6,0}
\DeclareMathOperator*{\argmin}{\arg\min}
\newcommand{\dif}{\mathrm{d}}
\newcommand{\bE}{\mathbb{E}}
\newcommand{\bR}{\mathbb{R}}
\newcommand{\cX}{\mathcal{X}}
\newcommand{\cY}{\mathcal{Y}}
\newcommand{\cS}{\mathcal{S}}
\newcommand{\cD}{\mathcal{D}}
\newcommand{\bx}{\boldsymbol{x}}
\newcommand{\bz}{\boldsymbol{z}}
\newcommand{\bK}{\boldsymbol{K}}
\newcommand{\boldf}{\boldsymbol{f}}
\newcommand{\ptr}{p_\mathrm{tr}}
\newcommand{\pte}{p_\mathrm{te}}
\newcommand{\tr}{\mathrm{tr}}
\newcommand{\te}{\mathrm{te}}
\newcommand{\val}{\mathrm{v}}
\newcommand{\what}{\widehat{w}}
\newcommand{\Jhat}{\widehat{J}}
\newcommand{\JG}{J_\mathrm{G}}
\newcommand{\JGhat}{\widehat{J}_\mathrm{G}}
\newcommand{\alphahat}{\widehat{\alpha}}
\title{Generalizing Importance Weighting to A Universal Solver for Distribution Shift Problems}
\author{%
  Tongtong Fang$^{1}$
  \quad Nan Lu$^{2,3}$
  \quad Gang Niu$^{3}$\thanks{Correspondence to: GN <gang.niu.ml@gmail.com>, TF <fang@ms.k.u-tokyo.ac.jp>}
  \quad Masashi Sugiyama$^{3,1}$\\
  $^{1}$The University of Tokyo, Japan
  \qquad $^{2}$University of Tübingen, Germany 
  \qquad $^{3}$RIKEN, Japan
}
\begin{document}
\maketitle
\setcounter{footnote}{0}

\begin{abstract}
\emph{Distribution shift}~(DS) may have two levels: the distribution itself changes, and the \emph{support} (i.e., the set where the probability density is non-zero) also changes.
When considering the support change between the training and test distributions, there can be four cases: (i)~they exactly match; (ii) the training support is wider (and thus covers the test support); (iii) the test support is wider; (iv) they partially overlap.
Existing methods are good at cases (i) and (ii), while cases (iii) and (iv) are more common nowadays but still under-explored.
In this paper, we generalize \emph{importance weighting}~(IW), a golden solver for cases (i) and (ii), to a universal solver for all cases.
Specifically, we first investigate why IW might fail in cases (iii) and (iv); based on the findings, we propose \emph{generalized IW}~(GIW) that could handle cases (iii) and (iv) and would reduce to IW in cases (i) and (ii).
In GIW, the test support is split into an \emph{in-training}~(IT) part and an \emph{out-of-training}~(OOT) part, and the \emph{expected risk} is decomposed into a weighted classification term over the IT part and a standard classification term over the OOT part, which guarantees the \emph{risk consistency} of GIW.
Then, the implementation of GIW consists of three components: (a) the split of validation data is carried out by the one-class support vector machine, (b) the first term of the \emph{empirical risk} can be handled by any IW algorithm given training data and IT validation data, and (c) the second term just involves OOT validation data.
Experiments demonstrate that GIW is a universal solver for DS problems, outperforming IW methods in cases (iii) and (iv).
\end{abstract}

\section{Introduction}
Deep \emph{supervised classification} has been successful where the training and test data should come from the same distribution \citep{goodfellow2016deep}.
When this assumption does not hold in practice, we suffer from \emph{distribution shift}~(DS) problems and the learned classifier may often generalize poorly \citep{quionero2009dataset,pan2009survey,sugiyama2012machine}.
Let $\bx$ and $y$ be the instance (i.e., input) and class-label (i.e., output) random variables.
Then, DS means that the \emph{underlying joint density} of the training data $\ptr(\bx,y)$ differs from that of the test data $\pte(\bx,y)$.

There are two levels in the DS research.
At the first level, only the change of the data distribution is considered.
With additional assumptions, DS can be reduced into covariate shift $\ptr(\bx)\neq\pte(\bx)$, class-prior shift $\ptr(y)\neq\pte(y)$, class-posterior shift $\ptr(y\mid\bx)\neq\pte(y\mid\bx)$, and class-conditional shift $\ptr(\bx\mid y)\neq\pte(\bx\mid y)$ \citep{quionero2009dataset}.
We focus on joint shift $\ptr(\bx,y)\neq\pte(\bx,y)$, as it is the most general and difficult case of DS.
At the second level, the change of the \emph{support} of the data distribution is also considered, where given any joint density $p(\bx,y)$, its support is defined as the set $\{(\bx,y):p(\bx,y)>0\}$.
More specifically, denote by $\cS_\tr$ and $\cS_\te$ the support of $\ptr(\bx,y)$ and $\pte(\bx,y)$, respectively.
When considering the relationship between $\cS_\tr$ and $\cS_\te$, there can be four cases:
\begin{itemize}[leftmargin=2em]
\item[(i)] $\cS_\tr$ and $\cS_\te$ exactly match, i.e., $\cS_\tr=\cS_\te$;
\item[(ii)] $\cS_\tr$ is wider and covers $\cS_\te$, i.e., $\cS_\tr\supset\cS_\te$ and $\cS_\tr\setminus\cS_\te\neq\emptyset$;
\item[(iii)] $\cS_\te$ is wider and covers $\cS_\tr$, i.e., $\cS_\tr\subset\cS_\te$ and $\cS_\te\setminus\cS_\tr\neq\emptyset$;
\item[(iv)] $\cS_\tr$ and $\cS_\te$ partially overlap, i.e., $\cS_\tr\cap\cS_\te\neq\emptyset$, $\cS_\tr\setminus\cS_\te\neq\emptyset$, and $\cS_\te\setminus\cS_\tr\neq\emptyset$.%
\footnote{When $\cS_\tr$ and $\cS_\te$ differ, they differ by a non-zero probability measure; otherwise, we regard it as case (i).
For example, $\cS_\te\setminus\cS_\tr\neq\emptyset$ in cases (iii) and (iv) means that $\sum_y\int_{\{\bx:(\bx,y)\in\cS_\te\setminus\cS_\tr\}}\pte(\bx,y)\dif\bx>0$.}
\end{itemize}
The four cases are illustrated in Figure~\ref{fig:support}.
We focus on cases (iii) and (iv), as they are more general and more difficult than cases (i) and (ii).
 
\begin{figure}
    \centering
    \includegraphics[width=0.95\textwidth]{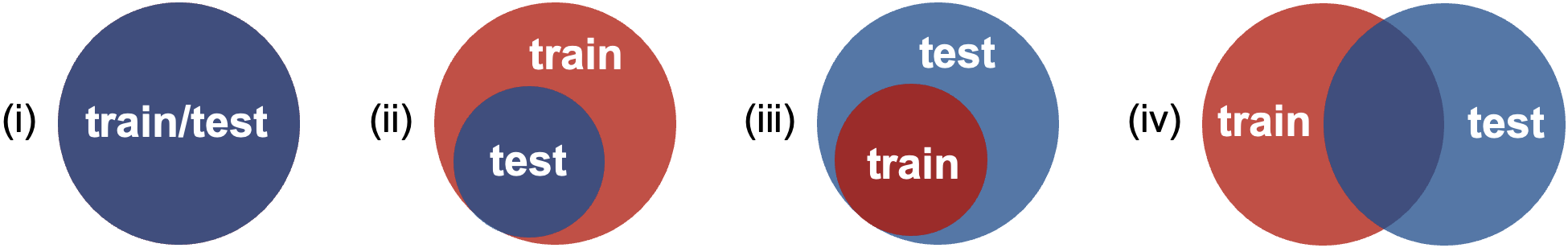}
    \caption{An illustration of the relationship between the training support and the test support.}
    \label{fig:support}
    \vspace{-1em}%
\end{figure}

\paragraph{Problem setting}
Denote by $\cX$ and $\cY$ the input and output domains, where $\cY=\{1,\ldots,C\}$ for $C$-class classification problems.
Let $\boldf:\cX\to\bR^C$ be a classifier (to be trained) and $\ell:\bR^C\times\cY\to(0,+\infty)$ be a loss function (for training $\boldf$).%
\footnote{The positivity of $\ell$, i.e., $\ell(\boldf(\bx),y)>0$ rather than $\ell(\boldf(\bx),y)\ge0$, is needed to prove Theorem~\ref{thm:iw_risk_inconsistent}.
This assumption holds for the popular \emph{cross-entropy loss} and its robust variants, since $\boldf(\bx)$ must stay finite.}
Then, the \emph{risk} is defined as follows \citep{vapnik98SLT}:
\begin{align}\textstyle
\label{eq:risk}%
R(\boldf) = \bE_{\pte(\bx,y)}[\ell(\boldf(\bx),y)],
\end{align}
where $\bE[\cdot]$ denotes the expectation.
In the joint-shift problems, we are given a training set
$\cD_\tr=\{(\bx_i^\tr,y_i^\tr)\}_{i=1}^{n_\tr}\overset{\text{i.i.d.}}\sim\ptr(\bx,y)$
and a validation set
$\cD_\val=\{(\bx_i^\val,y_i^\val)\}_{i=1}^{n_\val}\overset{\text{i.i.d.}}\sim\pte(\bx,y)$,
where $\cD_\tr$ is much bigger than $\cD_\val$, i.e., $n_\tr \gg n_\val$.
The goal is to reliably estimate the risk from $\cD_\tr$ and $\cD_\val$ and train $\boldf$ by minimizing the empirical risk, which should outperform training $\boldf$ from only $\cD_\val$.

\paragraph{Motivation}
\emph{Importance weighting}~(IW) has been a golden solver for DS problems \citep{sugiyama2012machine}, and there are many great off-the-shelf IW methods \citep{huang2007correcting,sugiyama2007covariate,sugiyama2008direct,kanamori2009least}.
Recently, \emph{dynamic IW}~(DIW) was proposed to make IW compatible with stochastic optimizers and thus it can be used for deep learning \citep{fang2020rethinking}.
However, all IW methods including DIW have assumed cases (i) and (ii)---in cases (iii) and (iv), IW methods become problematic.
Specifically, as the importance weights are only used on $\cS_\tr$, even though they become ill-defined on $\cS_\te\setminus\cS_\tr$, IW itself is still well-defined.
Nevertheless, in such a situation, the IW \emph{identity} will become an \emph{inequality} (i.e., Theorem~\ref{thm:iw_risk_inconsistent}), which means that what we minimize for training is no longer an approximation of the original risk $R(\boldf)$ and thus IW may lead to poor trained classifiers (i.e., Proposition~\ref{thm:iw_examples}).
Moreover, some IW-like methods based on bilevel optimization share a similar issue with IW \citep{jiang2017mentornet,ren2018learning,shu2019meta}, since $\boldf$ is only trained from $\cD_\tr$ where $\cD_\val$ is used to determine the importance weights on $\cD_\tr$.
In fact, cases (iii) and (iv) are more common nowadays due to \emph{data-collection biases}, but they are still under-explored.
For example, a class has several subclasses, but not all subclasses are presented in $\cD_\tr$ (see Figure~\ref{fig:toy_data}).
Therefore, we want to generalize IW to a universal solver for all the four cases.

\paragraph{Contributions}
Our contributions can be summarized as follows.
\begin{itemize}
    \item Firstly, we theoretically and empirically analyze when and why IW methods can succeed/may fail.
    We reveal that the objective of IW is good in cases (i) and (ii) and bad in cases (iii) and (iv).
    
    \item Secondly, we propose \emph{generalized IW}~(GIW).
    In GIW, $\cS_\te$ is split into an \emph{in-training}~(IT) part $\cS_\te\cap\cS_\tr$ and an \emph{out-of-training}~(OOT) part $\cS_\te\setminus\cS_\tr$, and its objective consists of a weighted classification term over the IT part and a standard classification term over the OOT part.
    GIW is justified as its objective is good in all the four cases and reduces to IW in cases (i) and (ii).
    Thus, GIW is a \emph{strict generalization} of IW from the objective point of view, and GIW is safer to be used when we are not sure whether the problem to be solved is a good case or a bad case for IW.%
    \footnote{Even though we have divided all DS problems into four cases according to support shift~(SS), we cannot empirically detect tiny or huge SS (under joint shift).
    Given that there is already DS, existing \emph{two-sample test} methods for detecting DS are not good at detecting whether there is SS or not---their results must be positive.}

    \item Thirdly, we provide a practical implementation of GIW:
    (a) following the split of $\cS_\te$, $\cD_\val$ is split into an IT set and an OOT set using the \emph{one-class support vector machine} \citep{scholkopf1999support};
    (b) the IT set, instead of the whole $\cD_\val$, is used for IW;
    and (c) the OOT set directly joins training together with $\cD_\tr$ since no data in $\cD_\tr$ comes from the OOT part.
    
    \item Finally, we design and conduct extensive experiments that demonstrate the effectiveness of GIW in cases (iii) and (iv).
    The experiment design is also a major contribution since no experimental setup is available for reference to simulate case (iii) or (iv) on benchmark datasets.
\end{itemize}

\paragraph{Organization}
The analyses of IW are in Section~\ref{sec:iw}, the proposal of GIW is in Section~\ref{sec:giw}, and the experiments are in Section~\ref{sec:experiments}.
Related work and additional experiments are in the appendices.

\begin{figure}
    \centering
    \begin{minipage}[c]{0.001\textwidth}~\end{minipage}\hspace{1em}%
    \begin{minipage}[c]{0.23\textwidth}\centering\small \hspace{-5em} The data
    \end{minipage}%    
    \begin{minipage}[c]{0.23\textwidth}\centering\small \hspace{-3.5em} Val-only
    \end{minipage}%
    \begin{minipage}[c]{0.23\textwidth}\centering\small \hspace{-2.5em} IW (DIW) \end{minipage}%
    \begin{minipage}[c]{0.23\textwidth}\centering\small \hspace{-1.5em} GIW \end{minipage}\\
    % \begin{minipage}[c]{0.001\textwidth}\flushright\small \rotatebox{90}{\textit{No decision boundary shift}} \end{minipage}\hspace{1em}%
    \begin{minipage}[c]{0.995\textwidth}
        \includegraphics[width=0.226\textwidth]{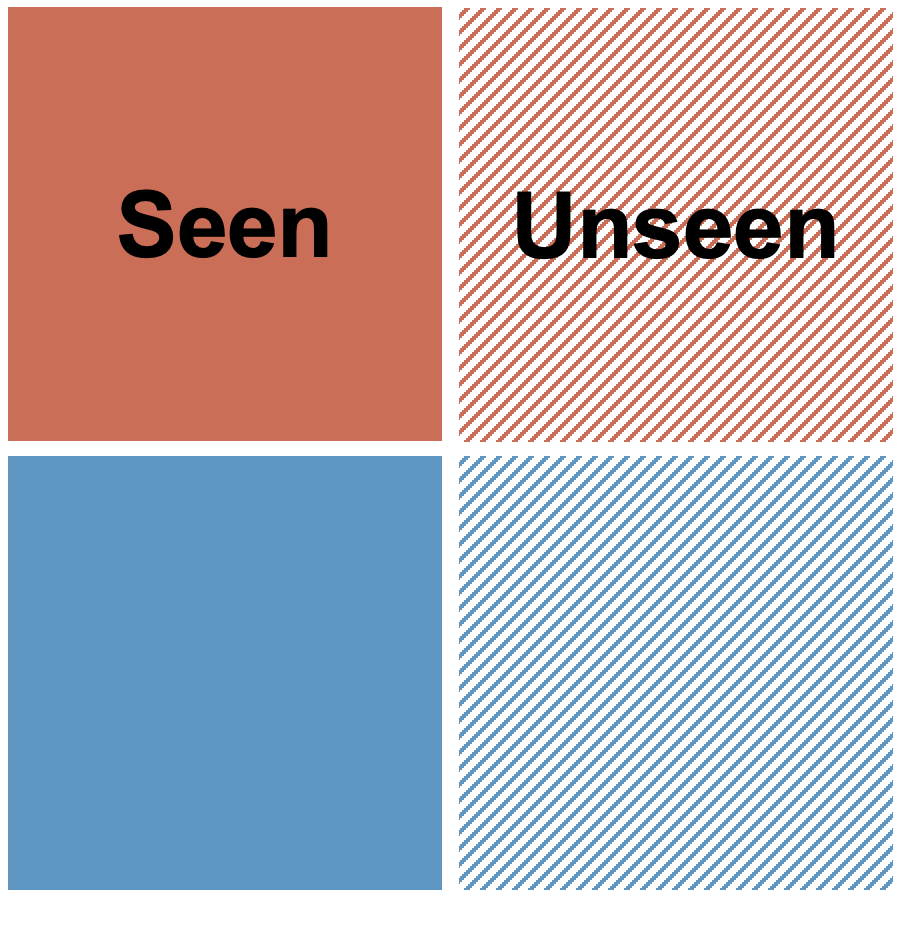}
        \includegraphics[width=0.238\textwidth]{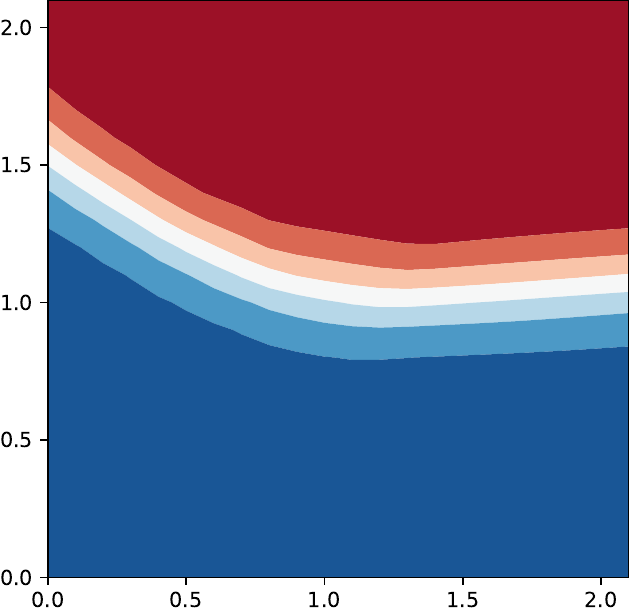}
        \includegraphics[width=0.238\textwidth]{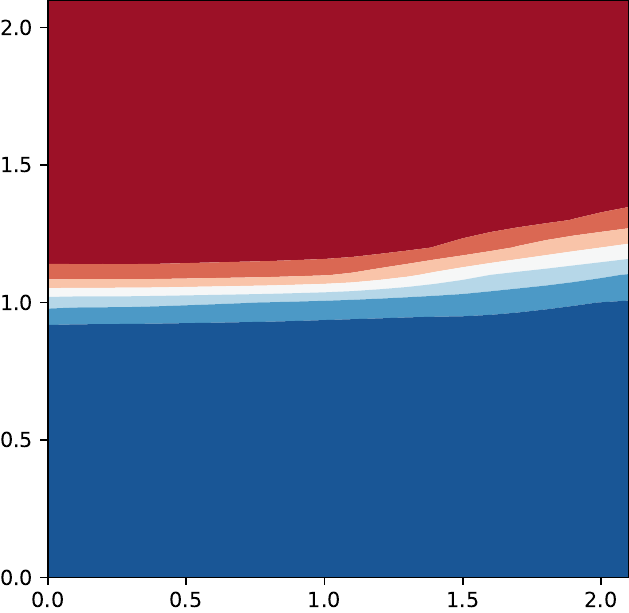}
        \includegraphics[width=0.273\textwidth]{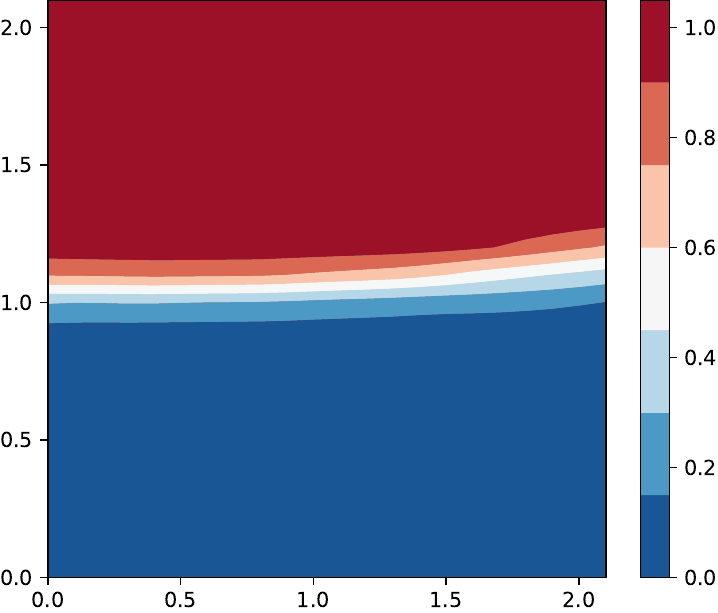}
    \end{minipage}\\
    % \begin{minipage}[c]{0.001\textwidth}\flushright\small \rotatebox{90}{\textit{With decision boundary shift}} \end{minipage}\hspace{1em}%
    \begin{minipage}[c]{0.995\textwidth}
        \includegraphics[width=0.2285\textwidth]{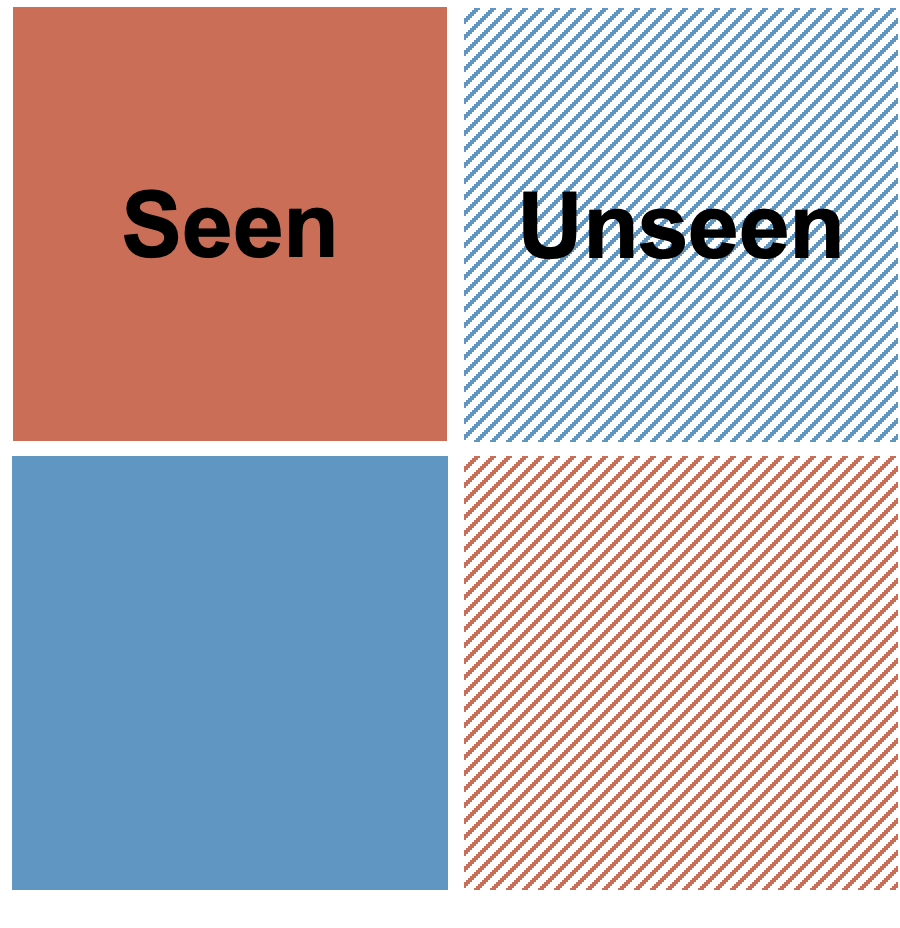}
        \includegraphics[width=0.238\textwidth]{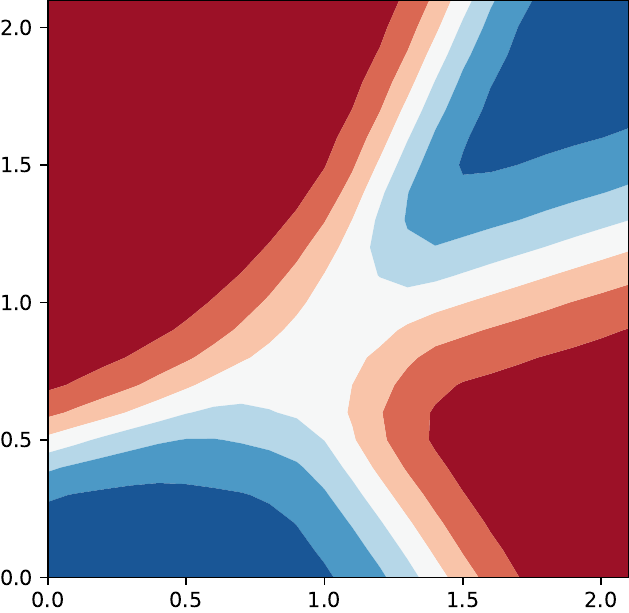}
        \includegraphics[width=0.238\textwidth]{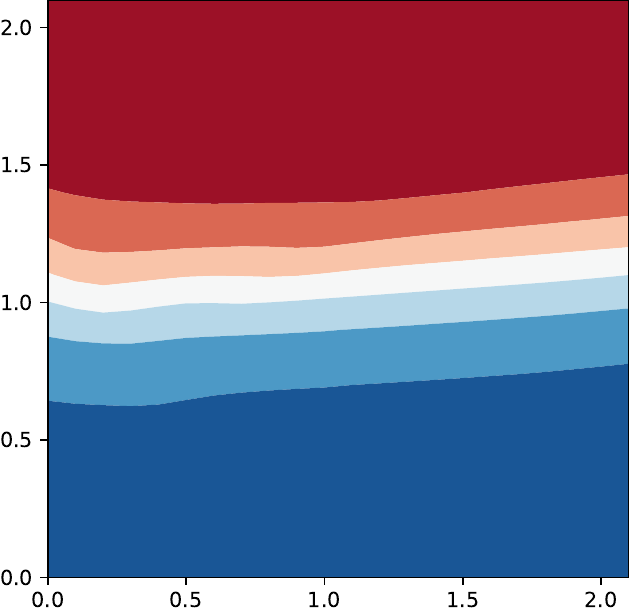}
        \includegraphics[width=0.273\textwidth]{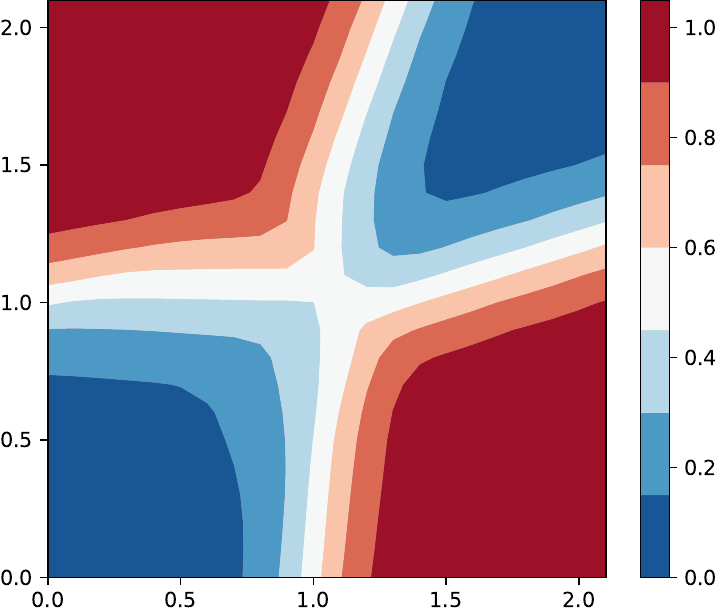}
        % \includegraphics[width=0.2228\textwidth]{figure/toy2.png}
        % \includegraphics[width=0.23\textwidth]{figure/toy_valonly.pdf}
        % \includegraphics[width=0.23\textwidth]{figure/toy_diw.pdf}
        % \includegraphics[width=0.262\textwidth]{figure/toy_giw.pdf}
        % \vspace{1pt}
    \end{minipage}
    \justifying
    {\scriptsize%
    This is binary classification to distinguish red and blue synthetic data that are uniformly distributed in a 2-by-2 grid (consisting of 4 squares in different positions).
    The training distribution $\ptr(\bx,y)$ includes only the left 2 squares, and the test distribution $\pte(\bx,y)$ includes all the 4 squares; there are only 4 validation data, 1 in each square.
    ``Val-only'' means using only the validation data to train the model, ``DIW'' refers to \cite{fang2020rethinking}, and ``GIW'' is the proposed method.
    The learned decision boundaries are plotted to compare those methods. 
    In the top panel, IW and GIW perform well, but in the bottom panel, IW completely fails and GIW still performs well.
    The performance of Val-only is not satisfactory since there are too few validation data.
    More details and discussions about this experiment can be found in the last part of Section~\ref{sec:iw}.
    }
    \caption{Two concrete examples of the success and failure of IW in case (iii).}
    \label{fig:toy_data}
    \vspace{-1em}%
\end{figure}

\section{A deeper understanding of IW}
\label{sec:iw}
First, we review the traditional \emph{importance weighting}~(IW) and its modern implementation \emph{dynamic importance weighting}~(DIW). Then, we analyze when and why IW methods can succeed/may fail.

\paragraph{A review of IW}
Let $w^*(\bx,y)=\pte(\bx,y)/\ptr(\bx,y)$, which is the ratio of the test density $\pte(\bx,y)$ over the training density $\ptr(\bx,y)$, known as the \emph{importance function}.
Then, the \emph{expected objective} of IW can be expressed as
\begin{align}
\label{eq:iw_obj}%
\textstyle
J(\boldf) = \bE_{\ptr(\bx,y)}[w^*(\bx,y)\ell(\boldf(\bx),y)].
\end{align}
In order to empirically approximate $J(\boldf)$ in \eqref{eq:iw_obj}, we need to have an empirical version $\what(\bx,y)$ of $w^*(\bx,y)$, so that the \emph{empirical objective} of IW is%
\footnote{As an empirical risk estimator, $\Jhat(\boldf)$ in Eq.~\eqref{eq:iw_obj_emp} is \emph{not unbiased} to $J(\boldf)$ in Eq.~\eqref{eq:iw_obj}, since $\what(\bx,y)$ is not unbiased to $w^*(\bx,y)$.
As far as we know, in IW, there is no unbiased importance-weight estimator and thus no unbiased risk estimator.
That being said, $\Jhat(\boldf)$ can still be \emph{(statistically) consistent} with $J(\boldf)$ under mild conditions if $\what(\bx,y)$ is consistent with $w^*(\bx,y)$.}
\begin{align}
\label{eq:iw_obj_emp}%
\textstyle
\Jhat(\boldf) = \frac{1}{n_\tr}\sum_{i=1}^{n_\tr}\what(\bx_i^\tr,y_i^\tr)\ell(\boldf(\bx_i^\tr),y_i^\tr).
\end{align}
The original IW method is implemented in two steps: (I) \emph{weight estimation}~(WE) where $\what(\bx,y)$ is obtained and (II) \emph{weighted classification}~(WC) where $\Jhat(\boldf)$ is minimized.
The first step relies on the training data $\cD_\tr$ and the validation data $\cD_\val$, and it can be either estimating the two density functions separately and taking their ratio or directly estimating the density ratio \citep{sugiyama2012density}.

\paragraph{A review of DIW}
The aforementioned two-step approach is very nice when the classifier $\boldf$ is a simple model, but it has a serious issue when $\boldf$ is a deep model \citep{fang2020rethinking}.
Since WE is not equipped with representation learning, in order to boost its expressive power, we need an external feature extractor such as an internal representation learned by WC.
As a result, we are trapped by a \emph{circular dependency}: originally we need $w^*$ to train $\boldf$; now we need a trained $\boldf$ to estimate $w^*$.

DIW \citep{fang2020rethinking} has been proposed to resolve the critical circular dependency and to make IW usable for deep learning.
Specifically, DIW uses a non-linear transformation $\pi$ created from the current $\boldf$ (being trained) and replaces $w^*(\bx,y)$ with $w^*(\bz)=\pte(\bz)/\ptr(\bz)$, where $\bz=\pi(\bx,y)$ is the current \emph{loss-value} or \emph{hidden-layer-output} representation of $(\bx,y)$.
DIW iterates between WE for estimating $w^*(\bz)$ and WC for training $\boldf$ and thus updating $\pi$ in a seamless mini-batch-wise manner.
Given that WE enjoys representation learning inside WC, the importance-weight estimation quality of WE and the classifier training quality of WC can improve each other gradually but significantly.

\paragraph{Risk consistency/inconsistency of IW}
Now, consider how to qualify good or bad expected objectives under different conditions.
To this end, we adopt the concepts of \emph{risk consistency} and \emph{classifier consistency} from the label-noise learning literature \citep{xia2019anchor,xia2020part,yao2020dual}.

\begin{definition}
Given an (expected) objective $J(\boldf)$, we say it is \emph{risk-consistent} if $J(\boldf)=R(\boldf)$ for any $\boldf$, i.e., the objective is equal to the original risk for any classifier.
On the other hand, we say $J(\boldf)$ is \emph{classifier-consistent} if $\argmin_{\boldf}J(\boldf)=\argmin_{\boldf}R(\boldf)$ where the minimization is taken over all measurable functions, i.e., the objective shares the optimal classifier with the original risk.
\end{definition}

In the definition above, risk consistency is conceptually stronger than classifier consistency.
If an objective is risk-consistent, it must also be classifier-consistent;
if it is classifier-consistent, it may sometimes be risk-inconsistent.
Note that a risk-inconsistent objective is not necessarily very bad, as it can still be classifier-consistent.%
\footnote{For example, if $J(\boldf)=R(\boldf)/2$ or if $J(\boldf)=R(\boldf)^2$, minimizing $J(\boldf)$ will give exactly the same optimal classifier as minimizing $R(\boldf)$.}
Hence, when considering expected objectives, risk consistency is a sufficient condition and classifier consistency is a necessary condition for good objectives.

In what follows, we analyze when and why the objective of IW, namely $J(\boldf)$ in \eqref{eq:iw_obj}, can be a good objective or may be a bad objective.

\begin{theorem}
\label{thm:iw_risk_consistent}%
In cases (i) and (ii), IW is risk-consistent.%
\footnote{In fact, if there is only the instance random variable $\bx$ but no class-label random variable $y$, this result is simply the \emph{importance-sampling identity} that can be found in many statistics textbooks.
We prove it here to make our theoretical analyses self-contained and make later theoretical results easier to present/understand.}
\end{theorem}
\vspace*{-1.5em}%
\begin{proof}
Recall that $\cS_\tr=\{(\bx,y):\ptr(\bx,y)>0\}$ and $\cS_\te=\{(\bx,y):\pte(\bx,y)>0\}$.
Under case (i) or (ii), let us rewrite $R(\boldf)$ and $J(\boldf)$ with summations and integrals:
\begin{align*}
    R(\boldf) &\textstyle= \sum_{y=1}^C\int_{\{\bx:(\bx,y)\in\cS_\te\}}\ell(\boldf(\bx),y)\pte(\bx,y)\dif\bx,\\
    J(\boldf) &\textstyle= \sum_{y=1}^C\int_{\{\bx:(\bx,y)\in\cS_\tr\}}\ell(\boldf(\bx),y)w^*(\bx,y)\ptr(\bx,y)\dif\bx\\
        &\textstyle= \sum_{y=1}^C\int_{\{\bx:(\bx,y)\in\cS_\tr\}}\ell(\boldf(\bx),y)\pte(\bx,y)\dif\bx,
\end{align*}
where $w^*(\bx,y)=\pte(\bx,y)/\ptr(\bx,y)$ is always well-defined over $\cS_\tr$ and we safely plugged this definition into the rewritten $J(\boldf)$.
Subsequently, in case (i), $\cS_\tr=\cS_\te$ and thus $J(\boldf)=R(\boldf)$.
In case (ii), $\cS_\tr\supset\cS_\te$ and then we further have
\begin{align*}\textstyle
    J(\boldf) = \sum_{y=1}^C\int_{\{\bx:(\bx,y)\in\cS_\te\}}\ell(\boldf(\bx),y)\pte(\bx,y)\dif\bx
        + \int_{\{\bx:(\bx,y)\in\cS_\tr\setminus\cS_\te\}}\ell(\boldf(\bx),y)\pte(\bx,y)\dif\bx.
\end{align*}
By definition, $\pte(\bx,y)=0$ outside $\cS_\te$ including $\cS_\tr\setminus\cS_\te$, and thus $J(\boldf)=R(\boldf)$.
\end{proof}

\begin{theorem}
\label{thm:iw_risk_inconsistent}%
In cases (iii) and (iv), IW is risk-inconsistent, and it holds that $J(\boldf)<R(\boldf)$ for any $\boldf$.
\end{theorem}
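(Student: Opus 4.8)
The plan is to reuse the rewritten expressions for $R(\boldf)$ and $J(\boldf)$ established in the proof of Theorem~\ref{thm:iw_risk_consistent}, and then isolate the discrepancy that arises precisely on the out-of-training region $\cS_\te\setminus\cS_\tr$. Recall that $w^*(\bx,y)=\pte(\bx,y)/\ptr(\bx,y)$ is only well-defined where $\ptr(\bx,y)>0$, so the IW objective $J(\boldf)$ can only integrate over $\cS_\tr$; substituting $w^*(\bx,y)\ptr(\bx,y)=\pte(\bx,y)$ there yields $J(\boldf)=\sum_{y}\int_{\{\bx:(\bx,y)\in\cS_\tr\}}\ell(\boldf(\bx),y)\pte(\bx,y)\dif\bx$. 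Since $\pte(\bx,y)=0$ on $\cS_\tr\setminus\cS_\te$, this reduces to an integral over the in-training part $\cS_\tr\cap\cS_\te$ alone.

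Next I would split the test support as $\cS_\te=(\cS_\tr\cap\cS_\te)\cup(\cS_\te\setminus\cS_\tr)$ and write $R(\boldf)$ accordingly. Taking the difference term by term, the in-training contributions cancel exactly, leaving $R(\boldf)-J(\boldf)=\sum_{y}\int_{\{\bx:(\bx,y)\in\cS_\te\setminus\cS_\tr\}}\ell(\boldf(\bx),y)\pte(\bx,y)\dif\bx$. This single clean expression is the heart of the argument: the gap between the true risk and the IW objective is exactly the mass that IW cannot see, because no training data populate the OOT region and hence $w^*$ assigns it no weight.

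The final step is to argue this gap is strictly positive for every $\boldf$. On $\cS_\te\setminus\cS_\tr$ we have $\pte(\bx,y)>0$ by definition of the support, and the footnote accompanying the definition of cases (iii) and (iv) guarantees that this region carries non-zero probability measure. Combined with the strict positivity of the loss, $\ell(\boldf(\bx),y)>0$, the integrand is strictly positive on a set of positive measure, so the integral is strictly positive and $J(\boldf)<R(\boldf)$ holds uniformly in $\boldf$.

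I expect the main obstacle to be justifying the \emph{strict} inequality rather than merely $J(\boldf)\le R(\boldf)$. This is exactly where the positivity assumption $\ell>0$ (as opposed to $\ell\ge0$) becomes indispensable, as flagged in the problem-setting footnote: were the loss allowed to vanish, one could have $\ell(\boldf(\bx),y)=0$ throughout the OOT region and recover equality. Care is therefore needed to invoke both the non-zero-measure condition from the footnote and the strict positivity of $\ell$ simultaneously, since neither alone suffices to rule out $J(\boldf)=R(\boldf)$.
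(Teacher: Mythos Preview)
Your proposal is correct and follows essentially the same approach as the paper: both arguments identify $J(\boldf)$ with the integral of $\ell(\boldf(\bx),y)\pte(\bx,y)$ over the in-training part and then show that $R(\boldf)$ carries an additional strictly positive contribution over $\cS_\te\setminus\cS_\tr$. The only cosmetic difference is that the paper treats cases (iii) and (iv) separately, invoking Theorem~\ref{thm:iw_risk_consistent} for case (i) and case (ii) respectively, whereas you give a single unified argument by first reducing $J(\boldf)$ to an integral over $\cS_\tr\cap\cS_\te$; your explicit appeal to the non-zero-measure footnote is also a nice touch that the paper leaves implicit.
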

\vspace*{-1.5em}%
\begin{proof}
Since $w^*(\bx,y)$ is well-defined over $\cS_\tr$ but it becomes ill-defined over $\cS_\te\setminus\cS_\tr$, we cannot naively replace the integral domain in $J(\boldf)$ as in the proof of Theorem~\ref{thm:iw_risk_consistent}.
In case (iii), $\cS_\tr\subset\cS_\te$, and consequently
\begin{align*}
    R(\boldf) &\textstyle= \sum_{y=1}^C\int_{\{\bx:(\bx,y)\in\cS_\tr\}}\ell(\boldf(\bx),y)\pte(\bx,y)\dif\bx\\
        &\textstyle\quad+ \sum_{y=1}^C\int_{\{\bx:(\bx,y)\in\cS_\te\setminus\cS_\tr\}}\ell(\boldf(\bx),y)\pte(\bx,y)\dif\bx.
\end{align*}
According to Theorem~\ref{thm:iw_risk_consistent} for case (i), the first term in the rewritten $R(\boldf)$ equals $J(\boldf)$.
Moreover, the second term is positive, since $\ell(\boldf(\bx),y)>0$ due to the positivity of $\ell$, and $\pte(\bx,y)>0$ over $\cS_\te$ including $\cS_\te\setminus\cS_\tr$.
As a result, in case (iii), $R(\boldf)>J(\boldf)$.

Similarly, in case (iv), we can split $\cS_\te$ into $\cS_\te\cap\cS_\tr$ and $\cS_\te\setminus\cS_\tr$ and decompose $R(\boldf)$ as
\begin{align*}
    R(\boldf) &\textstyle= \sum_{y=1}^C\int_{\{\bx:(\bx,y)\in\cS_\te\cap\cS_\tr\}}\ell(\boldf(\bx),y)\pte(\bx,y)\dif\bx\\
        &\textstyle\quad+ \sum_{y=1}^C\int_{\{\bx:(\bx,y)\in\cS_\te\setminus\cS_\tr\}}\ell(\boldf(\bx),y)\pte(\bx,y)\dif\bx.
\end{align*}
Note that $\cS_\te\cap\cS_\tr\subset\cS_\tr$, so that according to Theorem~\ref{thm:iw_risk_consistent} for case (ii), the first term equals $J(\boldf)$.
Following case (iii), the second term is positive.
Therefore, in case (iv), $R(\boldf)>J(\boldf)$.
\end{proof}

Theorem~\ref{thm:iw_risk_consistent} implies that the objective of IW can be a good objective in cases (i) and (ii).
Theorem~\ref{thm:iw_risk_inconsistent} implies that the objective of IW may be a bad objective in cases (iii) and (iv).
As a consequence, the theorems collectively address when and why IW methods can succeed/may fail.%
\footnote{For any possible implementation, as long as it honestly implements the objective of IW, the quality of the objective will be reflected in the quality of the implementation.}

When the IW objective may be bad and IW methods may fail, whether an IW method fails or not depends on many factors, such as the underlying data distributions, the sampled data sets, the loss, the model, and the optimizer.
To illustrate this phenomenon, here we give two concrete examples belonging to case (iii), where IW has no problem at all in one example and is as poor as random guessing in the other example.

\paragraph{Two concrete examples}
We have seen the examples in Figure~\ref{fig:toy_data}.
In both examples, there are two classes marked with red and blue colors and distributed in four squares.
Each square has a unit area and is the support of a uniform distribution of $\bx$, i.e., $p(\bx,1)=1$ and $p(\bx,0)=0$ if its color is red, and $p(\bx,0)=1$ and $p(\bx,1)=0$ if its color is blue.
There is a margin of 0.1 between two adjacent squares.
The training distribution consists of the two squares on the left, and the test distribution consists of all the four squares.
In the first example, on $\cS_\te\setminus\cS_\tr$, the label is red on the top and blue on the bottom, as same as the label on $\cS_\tr$.
In the second example, on $\cS_\te\setminus\cS_\tr$, the label is blue on the top and red on the bottom, as opposite as the label on $\cS_\tr$.

We experimentally validated whether DIW works or not.
The number of training data was 200.
The number of validation data was only 4: we sampled one random point from each training square and added the center point of each test-only square.
We can see that DIW performs very well in the first example, better than training from only the validation data;
unfortunately, DIW performs very poorly in the second example, even worse than training from only the validation data.

The observed phenomenon should not be limited to DIW but be common to all IW methods.
Here, we analyze why this phenomenon does not depend on the loss, the model, or the optimizer.

\begin{prop}
\label{thm:iw_examples}%
In the first example, IW is classifier-consistent, while in the second example, IW is classifier-inconsistent.%
\footnote{As the classifier $\boldf$ can only access the information on $\cS_\tr$ but must predict the class label on the whole $\cS_\te$, we assume that $\boldf$ would transfer its knowledge about $\ptr(\bx,y)$ to $\pte(\bx,y)$ in the simplest manner.
Furthermore, we simplify $\boldf(\bx)$ into $f(x^{(1)},x^{(2)})$, since it is binary classification where $\bx\in\bR^2$.}
\end{prop}
\vspace*{-1.5em}%
\begin{proof}
Without loss of generality, assume that $\ell$ is \emph{classification-calibrated} \citep{bartlett2006convexity}.%
\footnote{The cross-entropy loss is \emph{confidence-calibrated} and thus classification-calibrated \citep{sugiyama2022machine}.}
Let $(c^{(1)},c^{(2)})$ be the center of $\cS_\te$, and then the four squares are located on the top-left, bottom-left, top-right, and bottom-right of $(c^{(1)},c^{(2)})$.
For convenience, we abbreviate $f(x^{(1)},x^{(2)})$ for $x^{(1)}>c^{(1)},x^{(2)}>c^{(2)}$ as $f(+,+)$, $f(x^{(1)},x^{(2)})$ for $x^{(1)}>c^{(1)},x^{(2)}<c^{(2)}$ as $f(+,-)$, and so on.

Consider the first example. The minimizer of $R(f)$ can be any \emph{Bayes-optimal classifier}, i.e., any $f$ such that $f(\cdot,+)>0$ and $f(\cdot,-)<0$.
Next, on the top-left square, we have $\pte(\bx,1)=1/4$, $\pte(\bx,0)=0$, $\ptr(\bx,1)=1/2$, and $\ptr(\bx,0)=0$, and thus $w^*(\bx,y)=1/2$.
Likewise, on the bottom-left square, we have $w^*(\bx,y)=1/2$.
As a result, $J(f)=\frac{1}{2}\bE_{\ptr(\bx,y)}[\ell(\boldf(\bx),y)]$, meaning that the minimizer of $J(f)$ can be any Bayes-optimal classifier on $\cS_\tr$, i.e., any $f$ such that $f(-,+)>0$ and $f(-,-)<0$.
The simplest manner for $f$ to transfer its knowledge from $\ptr$ to $\pte$ is to have a linear decision boundary and extend it to $\cS_\te\setminus\cS_\tr$, so that $f(\cdot,+)>0$ and $f(\cdot,-)<0$ on $\cS_\te$.
We can see that the set of minimizers is shared and thus IW is classifier-consistent.

Consider the second example.
The minimizer of $J(f)$ is still the same while the minimizer of $R(f)$ significantly changes to any $f$ such that $f(-,+)>0$, $f(-,-)<0$, $f(+,+)<0$, and $f(+,-)>0$.
This non-linear decision boundary is a checkerboard where any two adjacent squares have opposite predictions.
It is easy to see that IW is classifier-inconsistent and its test accuracy is 0.5.
For binary classification with balanced classes, this accuracy is as poor as random guessing.
\end{proof}

\section{Generalized importance weighting (GIW)}
\label{sec:giw}

We have seen two examples where IW is as good/bad as possible in case (iii).
In practice, we cannot rely on the luck and hope that IW would work.
In this section, we propose \emph{generalized importance weighting}~(GIW), which is still IW in cases (i) and (ii) and is better than IW in cases (iii) and (iv).

\subsection{Expected objective of GIW}

The key idea of GIW is to split the test support $\cS_\te$ into the \emph{in-training}~(IT) part $\cS_\te\cap\cS_\tr$ and the \emph{out-of-training}~(OOT) part $\cS_\te\setminus\cS_\tr$.
More specifically, we introduce a third random variable, the support-splitting variable $s\in\{0,1\}$, such that $s$ takes 1 on $\cS_\tr$ and 0 on $\cS_\te\setminus\cS_\tr$.
As a result, the underlying joint density $p(\bx,y,s)$ can be defined by $\pte(\bx,y)$ as%
\footnote{Here, $p(\bx,y,s)$ accepts $(\bx,y)\in\cS_\te\cup\cS_\tr$ rather than $(\bx,y)\in\cS_\te$, and $s$ actually splits $\cS_\te\cup\cS_\tr$ into $\cS_\tr$ and $\cS_\te\setminus\cS_\tr$.
This is exactly what we want, since $\pte(\bx,y)=0$ outside $\cS_\te$.}
\begin{align}
\label{eq:joint-density-giw}
p(\bx,y,s) =
    \begin{cases}
    \pte(\bx,y) & \text{if }(\bx,y)\in\cS_\tr\text{ and }s=1,\text{ or }(\bx,y)\in\cS_\te\setminus\cS_\tr\text{ and }s=0,\\
    0 & \text{if }(\bx,y)\in\cS_\tr\text{ and }s=0,\text{ or }(\bx,y)\in\cS_\te\setminus\cS_\tr\text{ and }s=1.
    \end{cases}
\end{align}
Let $\alpha=p(s=1)$.
Then, the expected objective of GIW is defined as
\begin{align}
\label{eq:giw_obj}%
\textstyle
\JG(\boldf) = \alpha\bE_{\ptr(\bx,y)}[w^*(\bx,y)\ell(\boldf(\bx),y)] + (1-\alpha)\bE_{p(\bx,y\mid s=0)}[\ell(\boldf(\bx),y)].
\end{align}
The corresponding empirical version $\JGhat(\boldf)$ will be derived in the next subsection.
Before proceeding to the empirical objective of GIW, we establish risk consistency of GIW.

\begin{theorem}
\label{thm:giw_risk_consistent}%
GIW is always risk-consistent for distribution shift problems.
\end{theorem}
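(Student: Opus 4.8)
The plan is to show directly that $\JG(\boldf)=R(\boldf)$ for every classifier $\boldf$, which is exactly the definition of risk consistency. The starting point is the decomposition of $R(\boldf)$ that already appears in the proof of Theorem~\ref{thm:iw_risk_inconsistent}: regardless of which case we are in, the test support splits into the IT part $\cS_\te\cap\cS_\tr$ and the OOT part $\cS_\te\setminus\cS_\tr$, so that
\begin{align*}
R(\boldf) = \sum_{y=1}^C\int_{\{\bx:(\bx,y)\in\cS_\te\cap\cS_\tr\}}\ell(\boldf(\bx),y)\pte(\bx,y)\dif\bx
    + \sum_{y=1}^C\int_{\{\bx:(\bx,y)\in\cS_\te\setminus\cS_\tr\}}\ell(\boldf(\bx),y)\pte(\bx,y)\dif\bx.
\end{align*}
The goal is then to match the first summand with the weighted (IW) term of $\JG$ and the second summand with the standard-classification (OOT) term.

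First I would handle the IT term. Since $\cS_\te\cap\cS_\tr\subset\cS_\tr$, the importance function $w^*(\bx,y)=\pte(\bx,y)/\ptr(\bx,y)$ is well-defined on this region, so by the same substitution used in Theorem~\ref{thm:iw_risk_consistent} the first summand equals $\sum_y\int_{\{\bx:(\bx,y)\in\cS_\te\cap\cS_\tr\}}\ell(\boldf(\bx),y)w^*(\bx,y)\ptr(\bx,y)\dif\bx$. The key bookkeeping step is to connect this to the expectation appearing in $\JG$. By the definition \eqref{eq:joint-density-giw}, $p(\bx,y,s=1)=\pte(\bx,y)$ on $\cS_\tr$ and is zero elsewhere, so $\alpha=p(s=1)=\sum_y\int_{\{\bx:(\bx,y)\in\cS_\tr\}}\pte(\bx,y)\dif\bx$ is precisely the total $\pte$-mass of the IT region (the OOT region contributes zero $\pte$-mass outside $\cS_\te$ and the integrand over $\cS_\tr\setminus\cS_\te$ vanishes). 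I would then verify that $\alpha\,\bE_{\ptr(\bx,y)}[w^*(\bx,y)\ell(\boldf(\bx),y)]$ reproduces the IT summand; the cleanest route is to observe that $\bE_{\ptr}[w^*\ell]$ integrates $\pte(\bx,y)\ell$ over $\cS_\tr$, which restricts to $\cS_\te\cap\cS_\tr$ once the zero-$\pte$-mass region $\cS_\tr\setminus\cS_\te$ is discarded, and that multiplication by $\alpha$ versus normalization must be reconciled.

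The OOT term is the main obstacle, because it is exactly where the $\alpha$ normalization has to be tracked carefully rather than waved through. The conditional density is $p(\bx,y\mid s=0)=p(\bx,y,s=0)/p(s=0)=\pte(\bx,y)/(1-\alpha)$ on $\cS_\te\setminus\cS_\tr$, so $(1-\alpha)\bE_{p(\bx,y\mid s=0)}[\ell]=\sum_y\int_{\{\bx:(\bx,y)\in\cS_\te\setminus\cS_\tr\}}\ell(\boldf(\bx),y)\pte(\bx,y)\dif\bx$, which matches the OOT summand of $R(\boldf)$ verbatim. The subtle point I would flag is that the factor $\alpha$ multiplying the IW term in \eqref{eq:giw_obj} is \emph{not} an arbitrary reweighting but is forced to equal the $\pte$-mass of the IT region, and that the same $\alpha$ determines the OOT normalization; the whole identity hinges on these two uses of $\alpha$ being consistent. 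Because $w^*$ is defined through the full $\ptr,\pte$ ratio but the IT integral only covers $\cS_\te\cap\cS_\tr$, I would also need to confirm that the region $\cS_\tr\setminus\cS_\te$ contributes nothing (it does, since $\pte=0$ there), so that $\bE_{\ptr}[w^*\ell]$ effectively integrates over $\cS_\te\cap\cS_\tr$ and not over all of $\cS_\tr$.

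Finally I would assemble the two matched pieces to conclude $\JG(\boldf)=R(\boldf)$ for all $\boldf$, and remark that this holds uniformly across cases (i)–(iv): in cases (i) and (ii) the OOT part is empty so $\alpha=1$ and $\JG$ collapses to $J$, recovering Theorem~\ref{thm:iw_risk_consistent}, while in cases (iii) and (iv) the added OOT term is exactly the positive quantity by which $J(\boldf)$ fell short of $R(\boldf)$ in Theorem~\ref{thm:iw_risk_inconsistent}. This last observation is worth stating explicitly, as it shows GIW repairs precisely the deficit identified earlier and thus genuinely generalizes IW.
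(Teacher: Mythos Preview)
Your approach mirrors the paper's: you split $R(\boldf)$ into the IT and OOT integrals and match each to a term of $\JG(\boldf)$, whereas the paper starts from $\JG(\boldf)$, rewrites each term as an integral of $\ell$ against $p(\bx,y,s=\cdot)$ over $\cS_\te$, and then uses $p(\bx,y,s=1)+p(\bx,y,s=0)=\pte(\bx,y)$ to recover $R(\boldf)$. Structurally these are the same argument run in opposite directions, and your handling of the OOT term is correct and matches the paper's.

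The genuine gap is exactly where you say the $\alpha$ factor ``must be reconciled.'' You correctly compute that $\bE_{\ptr}[w^*\ell]$ equals $\sum_y\int_{\{\bx:(\bx,y)\in\cS_\te\cap\cS_\tr\}}\pte(\bx,y)\ell(\boldf(\bx),y)\,\dif\bx$, which is already the full IT summand of $R(\boldf)$; multiplying by $\alpha<1$ then yields only $\alpha$ times the IT summand, and there is no remaining normalization to absorb it. With $w^*=\pte/\ptr$ as defined for IW, the identity $\alpha\,\bE_{\ptr}[w^*\ell]=(\text{IT summand})$ fails unless $\alpha=1$, giving $\JG(\boldf)=R(\boldf)-(1-\alpha)\cdot(\text{IT summand})<R(\boldf)$ in cases (iii) and (iv). The paper's proof slips at the same place, writing $\alpha\,\pte(\bx,y)=p(\bx,y,s=1)$ on $\cS_\tr$, whereas \eqref{eq:joint-density-giw} gives $p(\bx,y,s=1)=\pte(\bx,y)$ there. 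The fix is that the weight appearing in \eqref{eq:giw_obj} should be $p(\bx,y\mid s=1)/\ptr(\bx,y)=\pte(\bx,y)/\bigl(\alpha\,\ptr(\bx,y)\bigr)$ rather than $\pte/\ptr$; this is precisely what the implementation estimates, since $\what$ is fit from $\cD_\tr$ and $\cD_{\val1}\sim p(\bx,y\mid s=1)$. With that reading, $\alpha\,\bE_{\ptr}[w^*\ell]$ equals the IT summand and your argument closes. So your plan is sound in outline, but the deferred reconciliation step cannot be completed as written and requires this correction.
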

\vspace*{-1.5em}%
\begin{proof}
Let us work on the first term of $\JG(\boldf)$ in \eqref{eq:giw_obj}.
When $(\bx,y)\in\cS_\tr$,
\begin{align*}\textstyle
\alpha w^*(\bx,y)\ptr(\bx,y)
= \alpha\pte(\bx,y)
= p(s=1) p(\bx,y\mid s=1)
= p(\bx,y,s=1),
\end{align*}
where $\pte(\bx,y)=p(\bx,y\mid s=1)$ given $(\bx,y)\in\cS_\tr$ according to \eqref{eq:joint-density-giw}.
Since $p(\bx,y,s=1)=0$ on $\cS_\te\setminus\cS_\tr$, we have
\begin{align}
\alpha\bE_{\ptr(\bx,y)}[w^*(\bx,y)\ell(\boldf(\bx),y)]
% &\textstyle= \sum_{y=1}^C\int_{\{\bx:(\bx,y)\in\cS_\tr\}}\ell(\boldf(\bx),y)p(\bx,y,s=1)\dif\bx\nonumber\\
\label{eq:giw_it_risk}%
&\textstyle= \sum_{y=1}^C\int_{\{\bx:(\bx,y)\in\cS_\te\}}\ell(\boldf(\bx),y)p(\bx,y,s=1)\dif\bx.
\end{align}
Next, for the second term of $\JG(\boldf)$, since $(1-\alpha)p(\bx,y\mid s=0)=p(\bx,y,s=0)$, we have
\begin{align}\textstyle
\label{eq:giw_oot_risk}%
(1-\alpha)\bE_{p(\bx,y\mid s=0)}[\ell(\boldf(\bx),y)]
= \sum_{y=1}^C\int_{\{\bx:(\bx,y)\in\cS_\te\}}\ell(\boldf(\bx),y)p(\bx,y,s=0)\dif\bx.
\end{align}
Note that $p(\bx,y,s=1)+p(\bx,y,s=0)=\pte(\bx,y)$ according to \eqref{eq:joint-density-giw}.
By adding \eqref{eq:giw_it_risk} and \eqref{eq:giw_oot_risk}, we can obtain that $\JG(\boldf)=R(\boldf)$.
This conclusion holds in all the four cases.
\end{proof}

Theorem~\ref{thm:giw_risk_consistent} is the main theorem of this paper.
It implies that the objective of GIW can always be a good objective.
Recall that IW is also risk-consistent in cases (i) and (ii), and it is interesting to see how IW and GIW are connected.
By definition, given fixed $\ptr(\bx,y)$ and $\pte(\bx,y)$, if there exists a risk-consistent objective, it is unique.
Indeed, in cases (i) and (ii), GIW is reduced to IW, simply due to that $\alpha=1$ and $\JG(\boldf)=J(\boldf)$ for any $\boldf$.

\subsection{Empirical objective and practical implementation of GIW}

Approximating $\JG(\boldf)$ in \eqref{eq:giw_obj} is more involved than approximating $J(\boldf)$ in \eqref{eq:iw_obj}.
Following \eqref{eq:iw_obj_emp}, we need an empirical version $\what(\bx,y)$, and we need further to split the validation data $\cD_\val$ into two sets and estimate $\alpha$.
Obviously, how to accurately split the validation data is the most challenging part.
After splitting $\cD_\val$ and obtaining an estimate $\alphahat$, the empirical objective will have two terms, where the first term can be handled by any IW algorithm given training data and IT validation data, and the second term just involves OOT validation data.

To split $\cD_\val$ and estimate $\alpha$, we employ the \emph{one-class support vector machine}~(O-SVM) \citep{scholkopf1999support}.
Firstly, we pretrain a deep network for classification on the training data $\cD_\tr$ a little bit and obtain a \emph{feature extractor} from the pretrained deep network.  
Secondly, we apply the feature extractor on the instances in $\cD_\tr$ and train an O-SVM based on the latent representation of these instances, giving us a score function $g(\bz)$ that could predict whether $\ptr(\bx)>0$ or not, where $\bz$ is the latent representation of $\bx$.
Thirdly, we apply the feature extractor on the instances in $\cD_\val$ and then employ $g(\bz)$ to obtain the IT validation data $\cD_{\val1}=\{(\bx_i^{\val1},y_i^{\val1})\}_{i=1}^{n_{\val1}}$ and the OOT validation data $\cD_{\val2}=\{(\bx_i^{\val2},y_i^{\val2})\}_{i=1}^{n_{\val2}}$.
Finally, $\alpha$ can be naturally estimated as $\alphahat=n_{\val1}/n_{\val}$.

We have two comments on the split of $\cD_\val$.
The O-SVM $g(\bz)$ predicts whether $\ptr(\bx)>0$ or not rather than whether $\ptr(\bx,y)>0$ or not.
This is because the $\bx$-support change is often sufficiently informative in practice: when the $\bx$-support changes, O-SVM can detect it; when the $(\bx,y)$-support changes without changing the $\bx$-support, it will be very difficult to train an O-SVM based on the loss-value representation of $(\bx,y)$ to detect it, but such changes are very rare.
The other comment is about the choice of the O-SVM.
While there are more advanced one-class classification methods \citep{hido2011statistical,zaheer2020old,hu2020hrn,goldwasser2020beyond} (see \citet{perera2021one} for a survey), the O-SVM is already good enough for the purpose (see Appendix~\ref{sec:osvm-score}).

Subsequently, $\cD_{\val1}$ can be viewed as being drawn from $p(\bx,y\mid s=1)$, and $\cD_{\val2}$ can be viewed as being drawn from $p(\bx,y\mid s=0)$.
Based on $\cD_\tr$ and $\cD_{\val1}$, we can obtain either $\what(\bx,y)$ or $\what_i$ for each $(\bx_i^\tr,y_i^\tr)$ by IW.
IW has no problem here since the split of $\cD_\val$ can reduce case (iii) to case (i) and case (iv) to case (ii).
In the implementation, we employ DIW \citep{fang2020rethinking} because it is friendly to deep learning and it is a state-of-the-art IW method.
Finally, the empirical objective of GIW can be expressed as
\begin{align}
\label{eq:giw-obj-emp}%
\textstyle
\JGhat(\boldf) = \frac{n_{\val1}}{n_{\val}n_\tr}\sum_{i=1}^{n_\tr}\what(\bx_i^\tr,y_i^\tr)\ell(\boldf(\bx_i^\tr),y_i^\tr)
+ \frac{1}{n_{\val}}\sum_{j=1}^{n_{\val2}}\ell(\boldf(\bx_j^{\val2}),y_j^{\val2}),
\end{align}
where the two expectations in $\JG(\boldf)$ are approximated separately with $\cD_\tr$ and $\cD_{\val2}$.%
\footnote{In GIW, although $\JG(\boldf)=R(\boldf)$, $\JGhat(\boldf)$ is not an unbiased estimator of $\JG(\boldf)$, exactly the same as what happened in IW.
Nevertheless, $\JGhat(\boldf)$ can still be statistically consistent with $\JG(\boldf)$ under mild conditions if $\what(\bx,y)$ is consistent with $w^*(\bx,y)$.
Specifically, though the outer weighted classification is an optimization problem, the inner weight estimation is an estimation problem.
The statistical consistency of $\what(\bx,y)$ requires \emph{zero approximation error} and thus \emph{non-parametric estimation} is preferred.
For kernel-based IW methods such as \citet{huang2007correcting} and \citet{kanamori2009least}, it holds that as $n_\tr,n_\val\to\infty$, $\what(\bx,y)\to w^*(\bx,y)$ under mild conditions.
If so, we can establish statistical consistency of $\JGhat(\boldf)$ with $\JG(\boldf)$.
If we further assume that the function class of $\boldf$ has a bounded complexity and $\ell$ is bounded and Lipschitz continuous, we can establish statistical consistency of $R(\widehat{\boldf})$ with $R(\boldf^*)$, where $\widehat{\boldf}(\bx)$ and $\boldf^*(\bx)$ are the minimizers of $\JGhat(\boldf)$ and $R(\boldf)$.}

\begin{algorithm}[t]
    \caption{Generalized importance weighting.}
    \label{alg:GIW}
    \begin{minipage}[t]{0.47\textwidth}
        \begin{minipage}[t]{\textwidth}
        \textbf{Require:} 
        model $\boldf_\theta$ parameterized by $\theta$;\\
        \hspace*{3.9em} training data set $\cD_\tr$;\\
        \hspace*{3.9em} validation data set $\cD_\val$;\\
        \hspace*{3.9em} batch sizes $m$, $n_1$, and $n_2$;\\ 
        \hspace*{3.9em} number of iterations $T$
    \end{minipage}
    \vspace{0.1em}
        \begin{algorithmic}[1]
        \Procedure{\textbf{1.}~ValDataSplit}{$\cD_\tr,\cD_\val$}
            \State \texttt{pretrain} $\boldf_\theta$ on $\cD_\tr$
            \State \texttt{forward} the instances of $\cD_\tr$ \& $\cD_\val$
            \State \texttt{retrieve} the transformed $Z_\tr$ \& $Z_\val$
            \State \texttt{train} an O-SVM on $Z_\tr$ as $g(\bz)$
            \State \texttt{compute} $g(\bz)$ for every $\bz$ in $Z_\val$
            \State \texttt{partition} $\cD_\val$ into $\cD_{\val1}$ \& $\cD_{\val2}$
            \State \texttt{estimate} $\hat{\alpha}=|\cD_{\val1}|/|\cD_{\val}|$
            \State \textbf{return} $\cD_{\val1}, \cD_{\val2}, \hat{\alpha}$
        \EndProcedure
        \end{algorithmic}
    \end{minipage}
    \begin{minipage}[t]{0.52\textwidth}
    % \vspace*{-0.8ex}
        \begin{algorithmic}[1]
        \Procedure{\textbf{2.}~ModelTrain}{$\cD_\tr,\cD_{\val1}, \cD_{\val2}, \hat{\alpha}$}
            \For{$t=1$ to $T$}    
                \State \texttt{sample} $S_{\tr}$ of size $m$ from $\cD_\tr$
                \State \texttt{sample} $S_{\val1}$ of size $n_1$ from $\cD_{\val1}$
                \State \texttt{sample} $S_{\val2}$ of size $n_2$ from $\cD_{\val2}$
                \State \texttt{forward} the instances of $S_\tr$ \& $S_{\val1}$
                \State \texttt{compute} the loss values as $L_\tr$ \& $L_{\val1}$
                \State \texttt{match} the distributions of $L_\tr$ and $L_{\val1}$ \hspace*{4em} to obtain an estimated $\what(\bx,y)$
                \State \texttt{weight} $L_\tr$ with the estimated $\what(\bx,y)$
                \State \texttt{forward} the instances of $S_{\val2}$
                \State \texttt{backward} $\JGhat(\boldf_\theta)$, and \texttt{update} $\theta$
            \EndFor
        \State \textbf{return} the final $\boldf_\theta$
        \EndProcedure
        \end{algorithmic}
    \end{minipage}
\end{algorithm}

The practical implementation of GIW is presented in Algorithm~\ref{alg:GIW}.
Here, we adopt the hidden-layer-output representation for O-SVM in \textsc{ValDataSplit} and the loss-value representation for DIW in \textsc{ModelTrain}.
This algorithm design is convenient for both O-SVM and DIW; the hidden-layer-output representation for DIW has been tried and can be found in Section~\ref{sec:ablation}.

\section{Experiments}
\label{sec:experiments}

\begin{table}[t]
  \caption{Specification of benchmark datasets, tasks, distribution shifts, and models.}
  \label{tab:dataset}
  \centering
  \small
  \begin{tabular}{lllll}
    \toprule
    Dataset     & Task                  & Training data             & Test data                 & Model \\
    \midrule
    MNIST       & odd and even digits   & 4 digits (0-3)            & 10 digits (0-9)$^*$       &  LeNet-5 \\
    Color-MNIST & 10 digits             & digits in red             &  digits in red/blue/green &  LeNet-5 \\
    CIFAR-20    & 20 superclasses       & 2 classes per superclass  & 5 classes per superclass  &  ResNet-18 \\
    \bottomrule
  \end{tabular}
  \justifying
  \scriptsize{
  See \citet{lecun1998gradient} for MNIST and \citet{krizhevsky2009learning} for CIFAR-20.
  Color-MNIST is modified from MNIST.
  The model is a modified LeNet-5 \citep{lecun1998gradient} or ResNet-18 \citep{he2016deep}.
  Please find in Appendix~\ref{sec:data-model} the details.
  *All setups in the table are for case (iii); for MNIST in case (iv), the test data consist of 8 digits (2-9).
  }
  \vspace{-1em}
\end{table}

\begin{figure}[t]
    \centering
    \begin{minipage}[c]{0.001\textwidth}~\end{minipage}\hspace{1em}%
    \begin{minipage}[c]{0.245\textwidth}\centering\small \hspace{2em} MNIST, case (iii) \end{minipage}%
    \begin{minipage}[c]{0.245\textwidth}\centering\small \hspace{2em} MNIST, case (iv) \end{minipage}%
    \begin{minipage}[c]{0.245\textwidth}\centering\small \hspace{1.28em} Color-MNIST, case (iii) \end{minipage}%
    \begin{minipage}[c]{0.245\textwidth}\centering\small \hspace{2em} CIFAR-20, case (iii) \end{minipage} \\
    \begin{minipage}[c]{0.001\textwidth}\flushright\small \rotatebox{90}{\textit{IW-like}} \end{minipage}\hspace{1em}%
    \begin{minipage}[c]{0.97\textwidth}
        \includegraphics[width=0.245\textwidth]{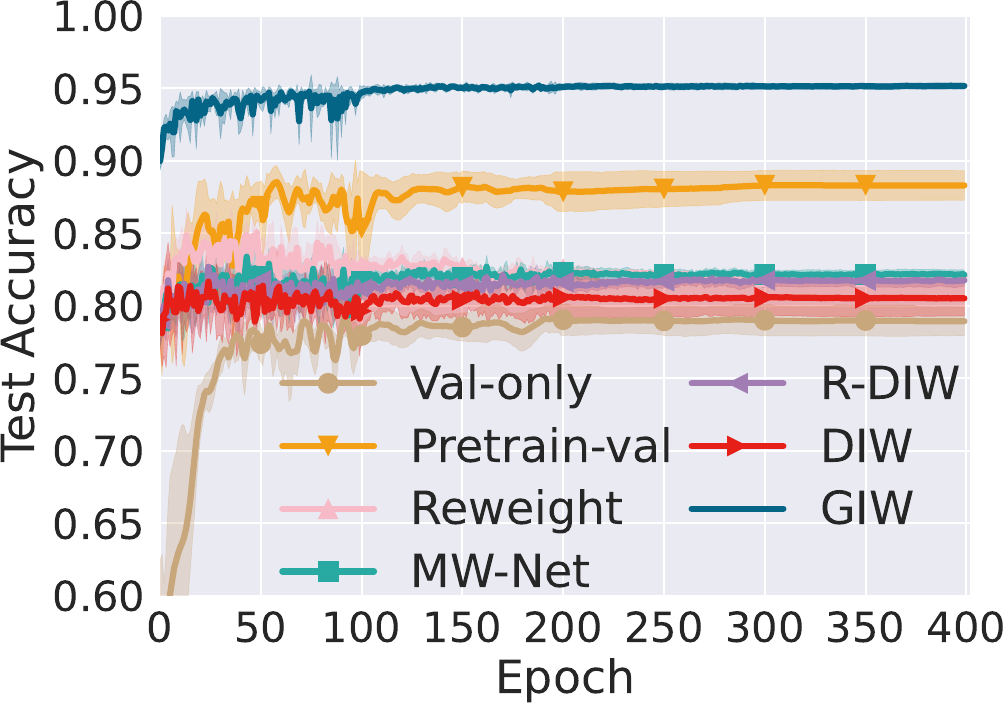}
        \includegraphics[width=0.245\textwidth]{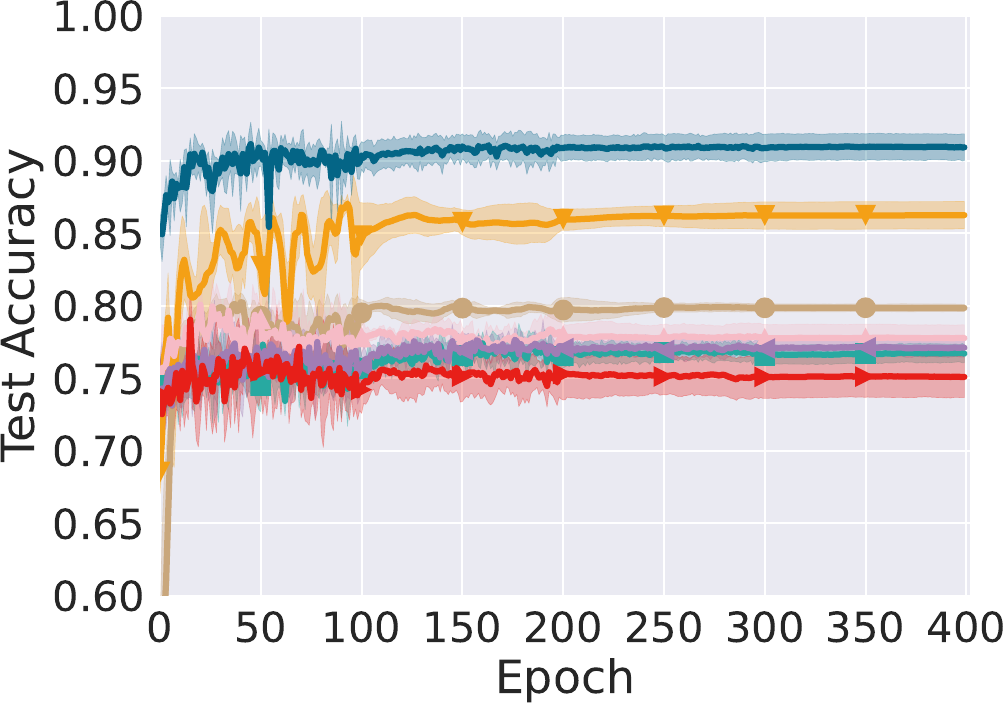}
        \includegraphics[width=0.245\textwidth]{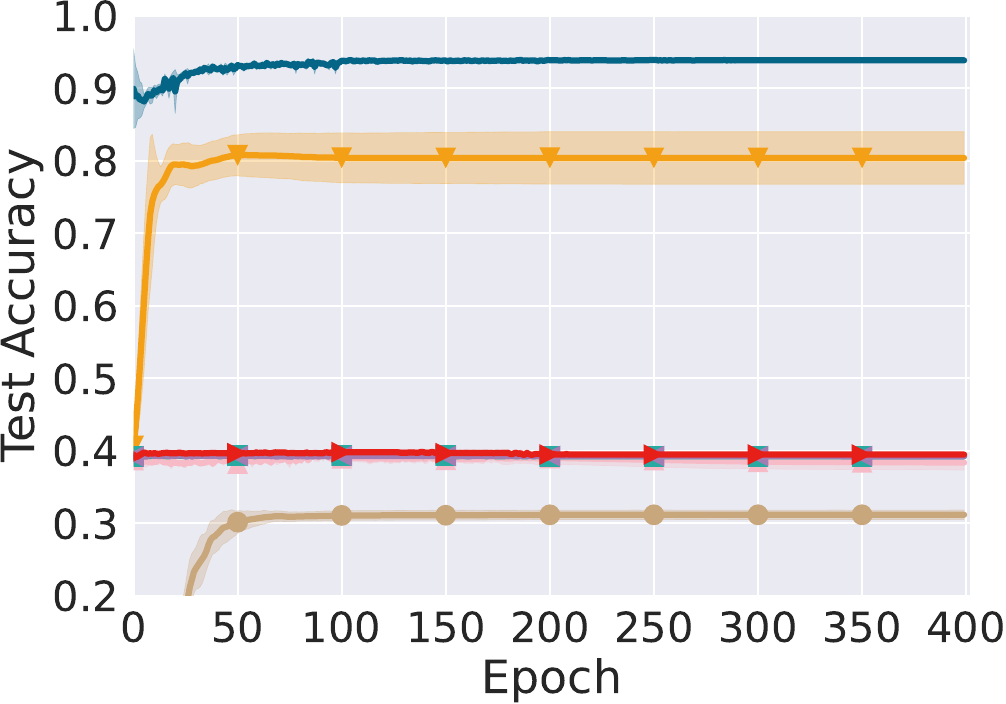}
        \includegraphics[width=0.245\textwidth]{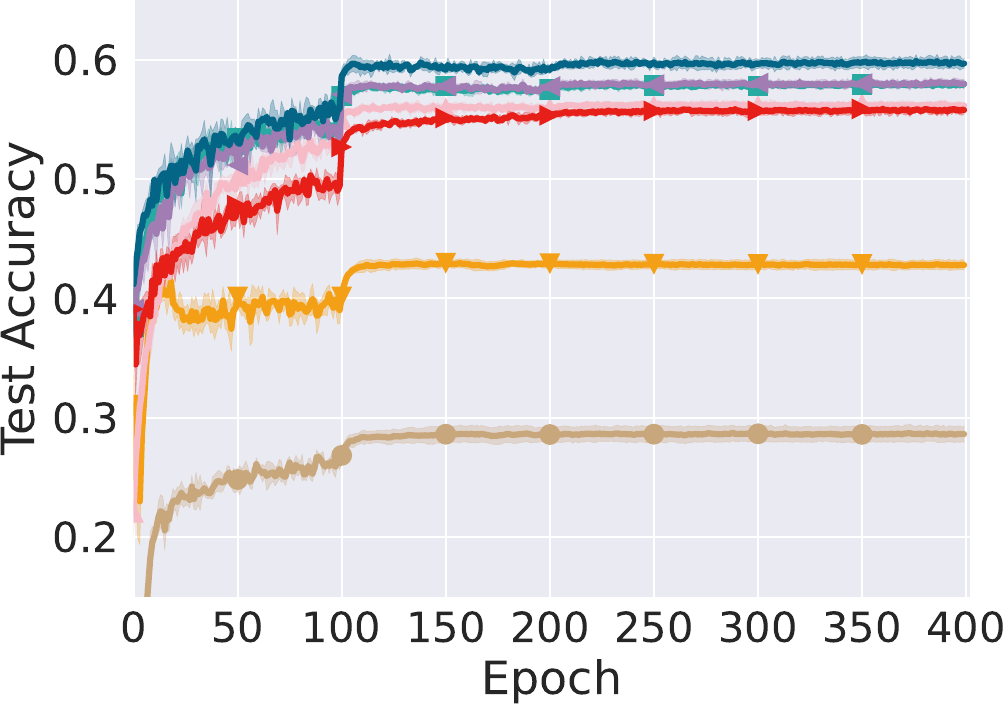}
    \end{minipage}\\
    \begin{minipage}[c]{0.001\textwidth}\flushright\small \rotatebox{90}{\textit{DA}} \end{minipage}\hspace{1em}%
    \begin{minipage}[c]{0.97\textwidth}
        \includegraphics[width=0.245\textwidth]{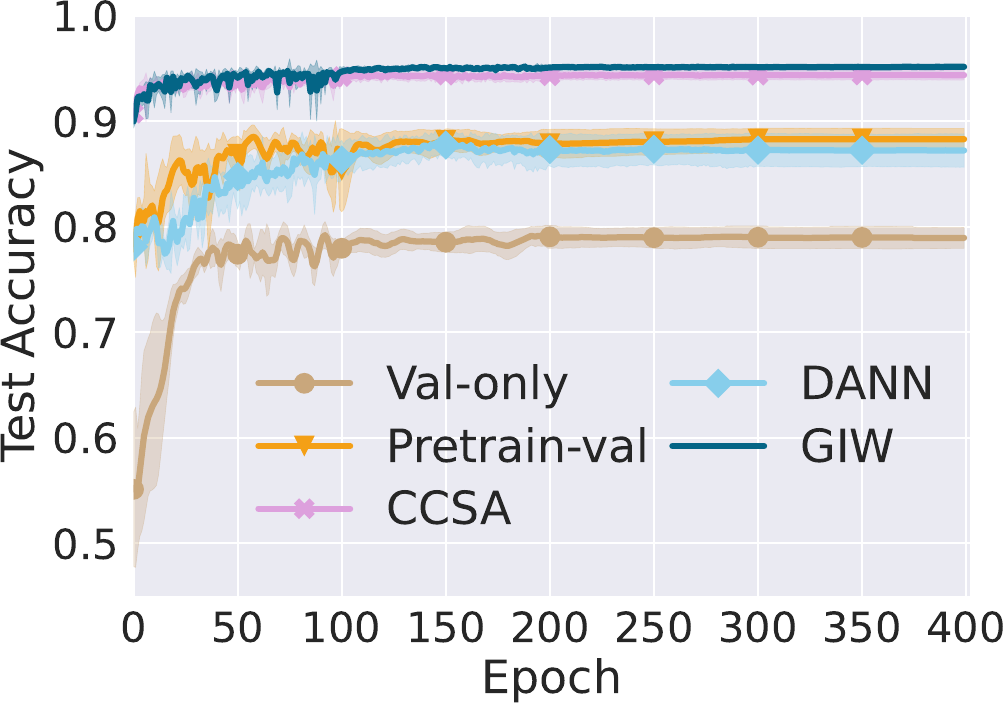}
        \includegraphics[width=0.245\textwidth]{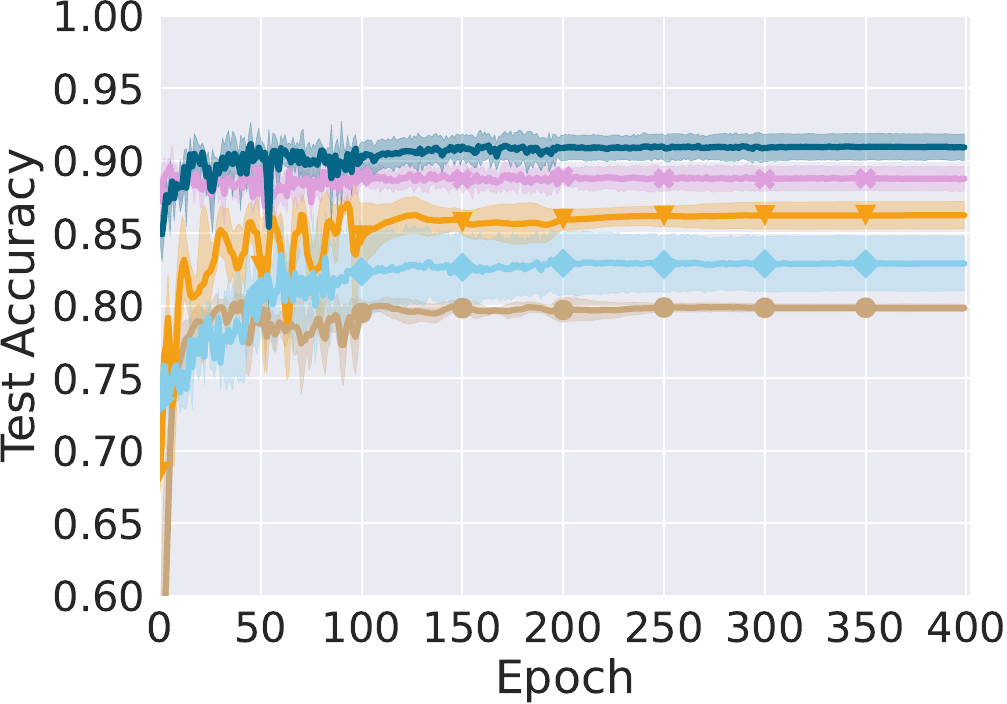}
        \includegraphics[width=0.245\textwidth]{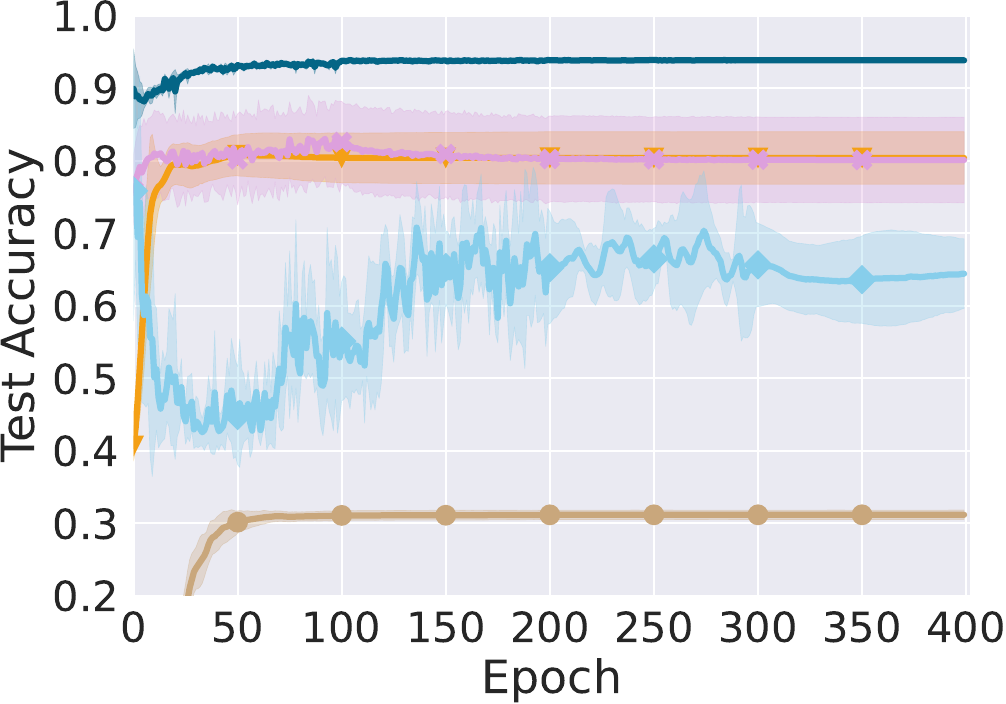}
        \includegraphics[width=0.245\textwidth]{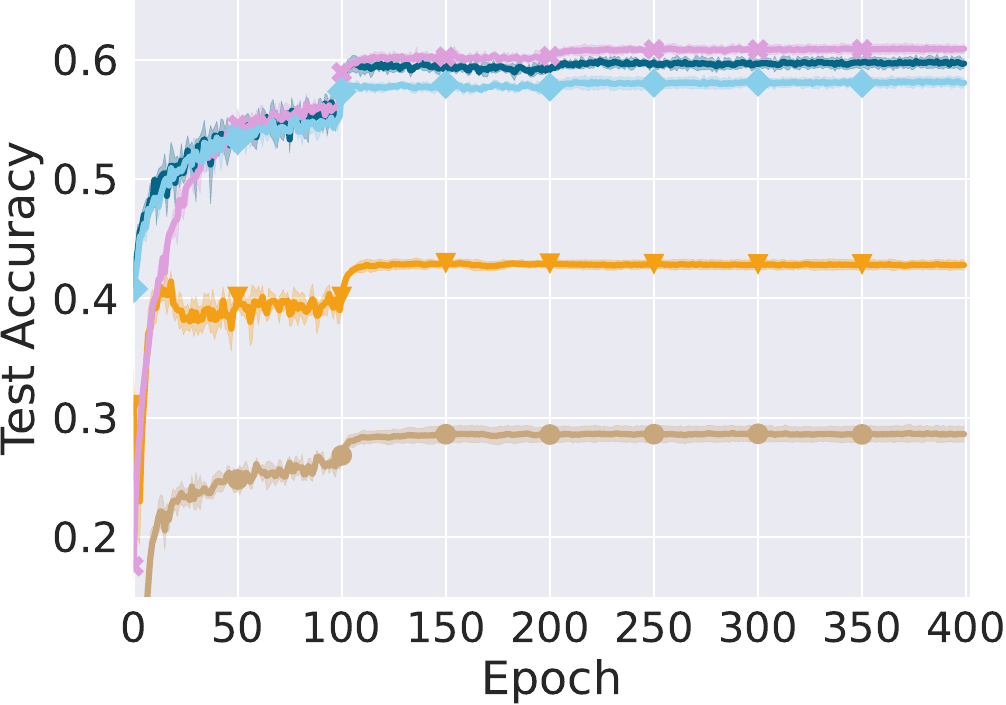}
    \end{minipage}\\
    \caption{Comparisons with IW-like and DA baselines under support shift (5 trails).} 
    \label{fig:result_ss}
    \vspace{-1em}%
\end{figure}

\begin{figure}[t]
    \centering
    \hspace{1.3em}
    \includegraphics[width=0.19\textwidth]{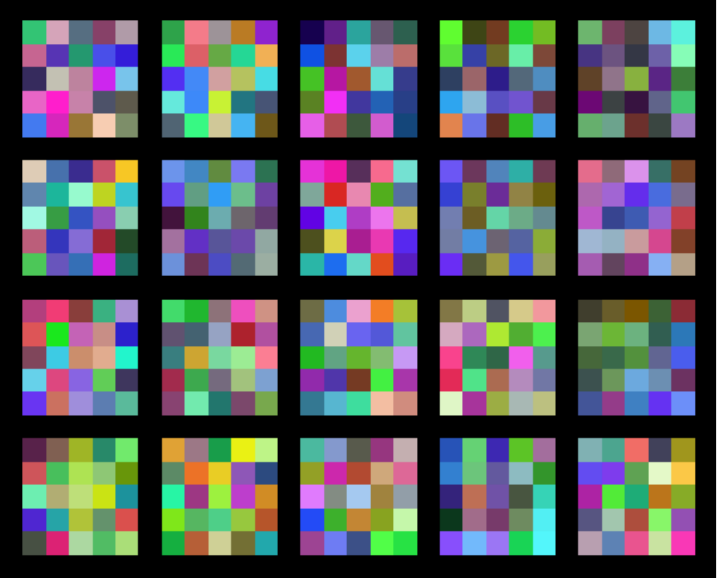}~~~~~~~~
    \includegraphics[width=0.19\textwidth]{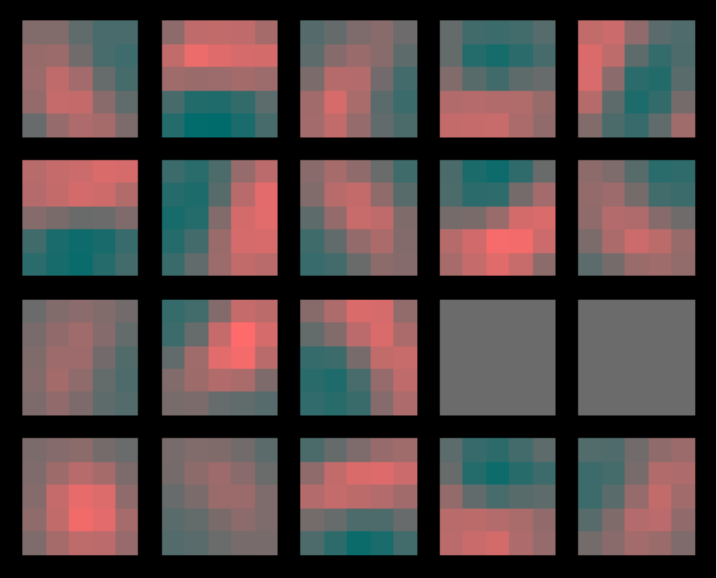}~~~~~~~~
    \includegraphics[width=0.19\textwidth]{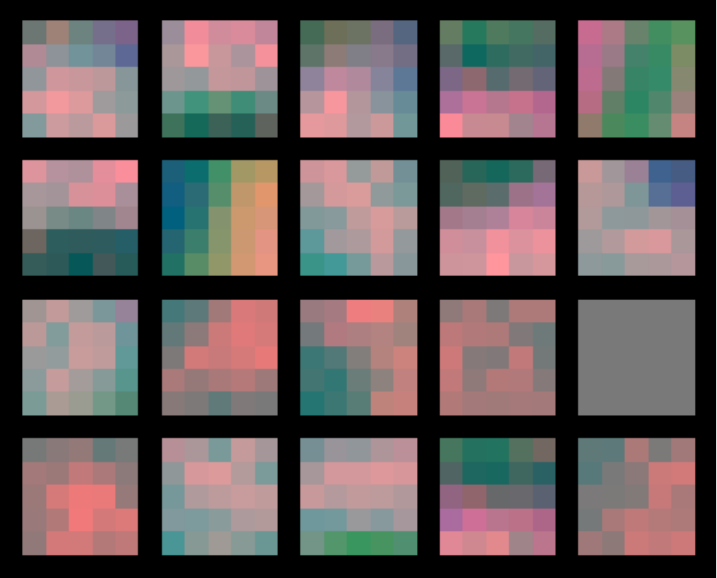}~~~~~~~~
    \includegraphics[width=0.19\textwidth]{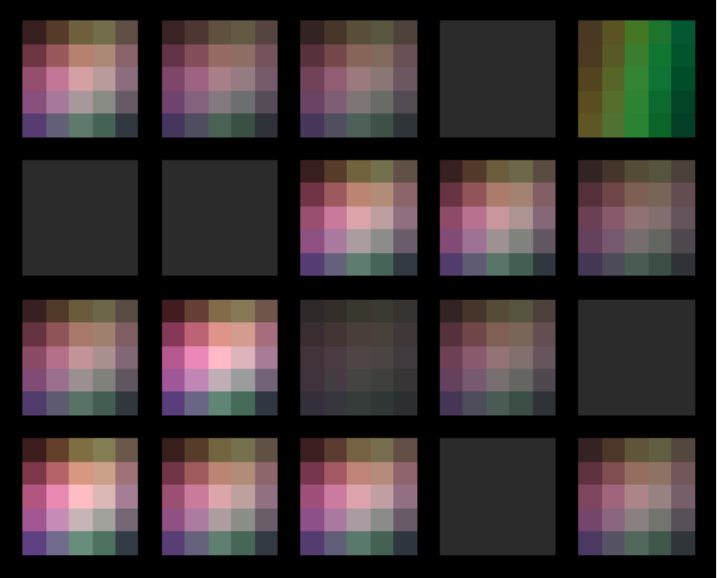}\\
    \hspace{1.3em}
    \begin{minipage}[c]{0.19\textwidth}\centering\small Val-only\end{minipage}~~~~~~~~
    \begin{minipage}[c]{0.19\textwidth}\centering\small DIW\end{minipage}~~~~~~~~
    \begin{minipage}[c]{0.19\textwidth}\centering\small Pretrain-val\end{minipage}~~~~~~~~
    \begin{minipage}[c]{0.19\textwidth}\centering\small GIW\end{minipage}\\
    % \caption{Visualizations of the learned convolutional weights on Color-MNIST under support shift.}
    \caption{Visualizations of the learned convolution kernels on Color-MNIST under support shift.}
    \label{fig:cnn_filter}
    \vspace{-1em}%
\end{figure}

In this section, we empirically evaluate GIW and compare it with baseline methods.%
\footnote{Our implementation of GIW is available at \url{https://github.com/TongtongFANG/GIW}.}
To see how effective it is in cases (iii) and (iv), we designed two distribution shift~(DS) patterns.
In the first pattern, DS comes solely from the mismatch between the training and test supports, and we call it support shift~(SS).
Under SS, it holds that $w^*(\bx,y)$ equals $\alpha$ in case (iii) and 0 or another constant in case (iv), simply due to renormalization after imposing SS.
Hence, the challenge is to accurately split $\cD_\val$.
In the second pattern, there is some genuine DS (e.g., label noise or class-prior shift) on top of SS, and we call it support-distribution shift.
Since $w^*(\bx,y)$ is no longer a constant, we face the challenge to accurately estimate $w^*(\bx,y)$.
Additionally, we conducted an ablation study to better understand the behavior of GIW.
Detailed setups and more results are given in Appendices~\ref{sec:setup} \&~\ref{sec:results}.

The baseline methods involved in our experiments are as follows.
\begin{itemize}
    \item \emph{Val-only}: using only $\cD_\val$ to train the model from scratch.
    \item \emph{Pretrain-val}: first pretraining on $\cD_\tr$ and then training on $\cD_\val$.
    \item \emph{Reweight}: learning to reweight examples \citep{ren2018learning}.
    \item \emph{MW-Net}: meta-weight-net \citep{shu2019meta}, a parametric version of Reweight.
    \item \emph{DIW}: dynamic importance weighting \citep{fang2020rethinking}.
    \item \emph{R-DIW}: DIW where IW is done with \emph{relative density-ratio estimation} \citep{yamada2011relative}.
    \item \emph{CCSA}: classification and contrastive semantic alignment \citep{motiian2017unified}.
    \item \emph{DANN}: domain-adversarial neural network \citep{ganin2016domain}.
\end{itemize}

\subsection{Experiments under support shift}
% Here we present the setups for experiments under support shift on benchmark datasets, shown in Table~\ref{tab:dataset}.
% For MNIST, we classified odd and even digits where the training data contain only 4 digits (0-3), and the test data contain 10 digits (0-9) in case (iii) and 8 digits (2-9) in case (iv). 
% % We sampled 2 data per test data digit as the validation data of size 20 in case (iii) and size 16 in case (iv). 
% We sampled 2 data per test data digit as the validation data. 
% Color-MNIST was modified from MNIST for 10-digit classification where the digits in training data are colored in red while in test/validation data are colored in red/blue/green evenly. We sampled 2 data per digit as the validation data.
% CIFAR-100 has 100 classes that are predefined within 20 superclasses. We call it CIFAR-20 as we use it for 20-superclass classification, where the training set contains only data from 2 out of the 5 classes in each superclass while the test set contains all classes. We sampled 10 data per class as the validation data of size 1000. 

We first conducted experiments under support shift on benchmark datasets.
The setups are summarized in Table~\ref{tab:dataset}.
For MNIST, our task was to classify odd and even digits, where the training set has only 4 digits (0-3), while the test set has 10 digits (0-9) in case (iii) and 8 digits (2-9) in case (iv). 
% We sampled 2 data points per test digit as the validation data. 
For Color-MNIST, our task was to classify 10 digits; the dataset was modified from MNIST in such a way that the digits in the training set are colored in red while the digits in the test/validation set are colored in red/green/blue evenly. 
% We sampled 2 data points per digit as the validation data.
For CIFAR-100, our task was to classify the 20 predefined superclasses and thus we call it CIFAR-20; the training set contains data from 2 out of the 5 classes for each superclass while the test set contains all classes.
For validation set, we sampled 2 data points per test digit for MNIST and Color-MNIST, and 10 data points per class for CIFAR-100.

Figure~\ref{fig:result_ss} shows the results on MNIST, Color-MNIST, and CIFAR-20 under support shift%
\footnote{It is difficult to have the same number of mini-batches per epoch for the training and validation data.
To this end, we adopt the definition of an epoch as a single loop over the training data, not the validation data.}, 
where GIW generally outperforms IW-like and domain adaptation (DA) baselines. 
We also confirmed that $\alpha$ in \eqref{eq:giw_obj} is accurately estimated in Appendix~\ref{sec:osvm-score}. 
To further investigate how GIW works, we visualized the learned convolution kernels (i.e., weights) for Color-MNIST experiments in Figure~\ref{fig:cnn_filter}, where the more observed color represents the larger weights learned on that color channel.
Only GIW recovers the weights of all color channels while learning useful features, however, other methods fail to do so.

% For the Color-MNIST experiments, we further visualized the learned weights/kernels in convolutional layers in Figure~\ref{fig:cnn_filter}, where the intensity of one color represents the weights learned on that colored channel. We find that DIW learns weights only on the red channel and Pretrain-val learns weights mainly on the red channel, which may make them fail on the test data with green/blue color. 
% Val-only has weights on all channels but may fail to learn useful representations. GIW recovers the weights of all colored channels and generalizes the best to the test data. 

\begin{figure}[t]
    \centering
    \begin{minipage}[c]{0.001\textwidth}~\end{minipage}\hspace{1em}%
    \begin{minipage}[c]{0.245\textwidth}\centering\small \hspace{2em} Label noise 0.2 \end{minipage}%
    \begin{minipage}[c]{0.245\textwidth}\centering\small \hspace{2em}  Label noise 0.4 \end{minipage}%
    \begin{minipage}[c]{0.245\textwidth}\centering\small \hspace{2em} Class-prior shift 10 \end{minipage}%
    \begin{minipage}[c]{0.245\textwidth}\centering\small \hspace{2em}
    Class-prior shift 100 \end{minipage} \\
    \begin{minipage}[c]{0.001\textwidth}\flushright\small \rotatebox{90}{\textit{MNIST}} \end{minipage}\hspace{1em}%
    \begin{minipage}[c]{0.97\textwidth}
        % \hspace{-1ex}
        \includegraphics[width=0.245\textwidth]{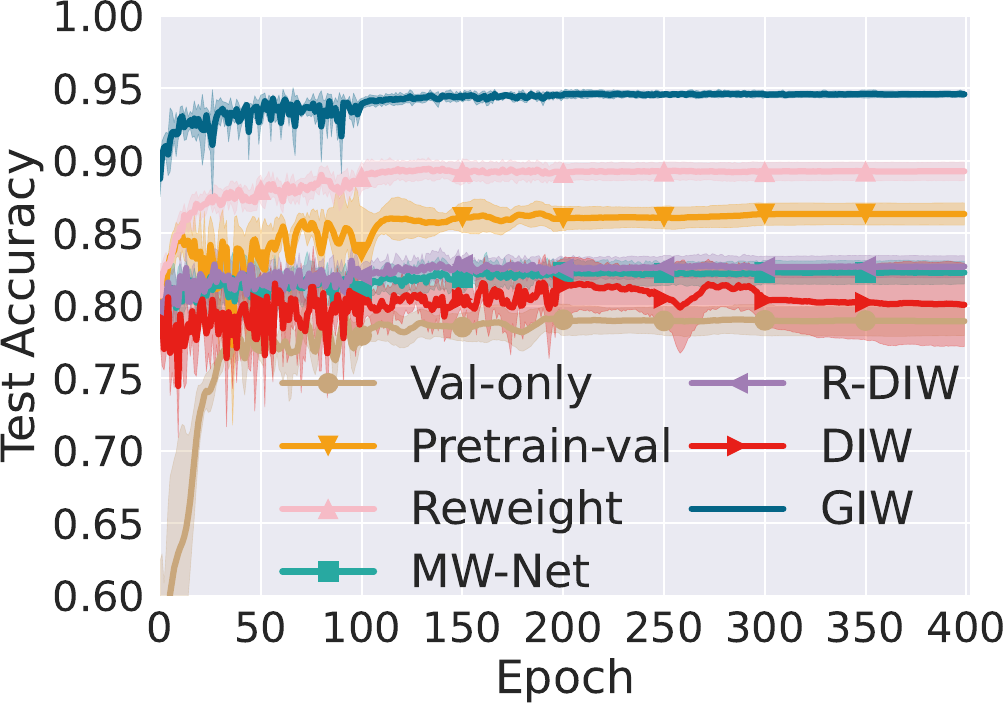}
        % \hspace{-1ex}
        \includegraphics[width=0.245\textwidth]{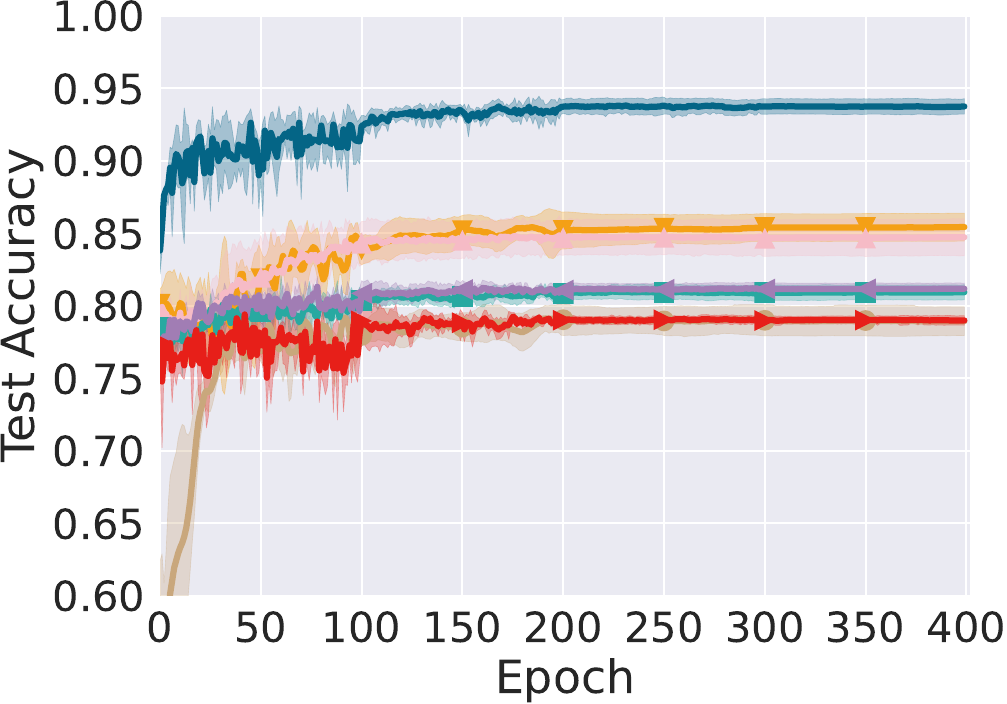}
        \includegraphics[width=0.245\textwidth]{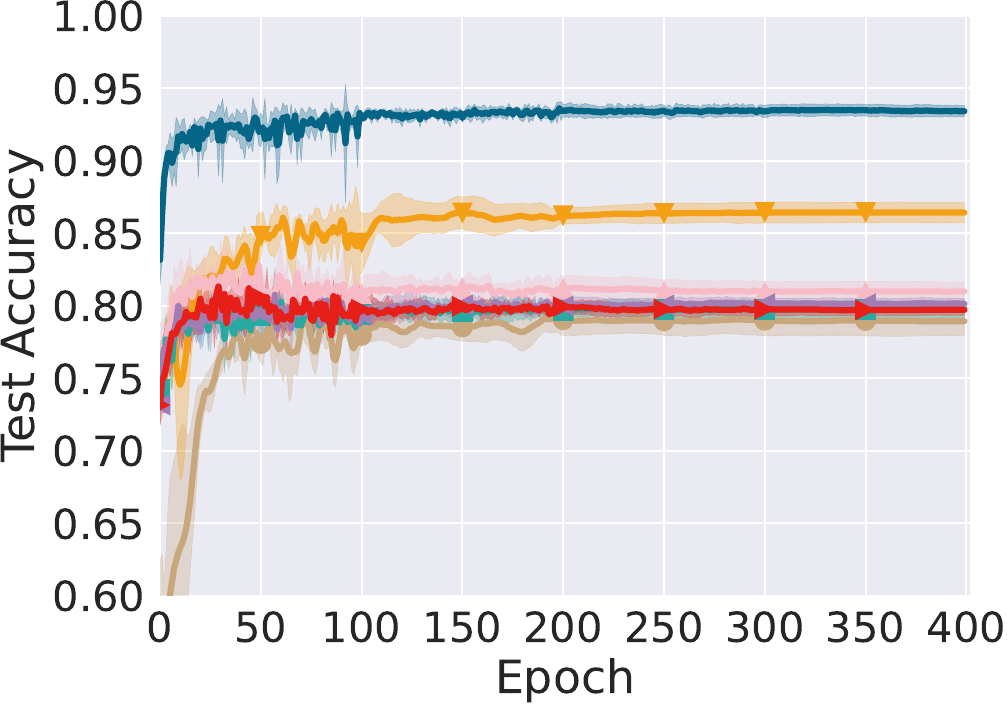}
        \includegraphics[width=0.245\textwidth]{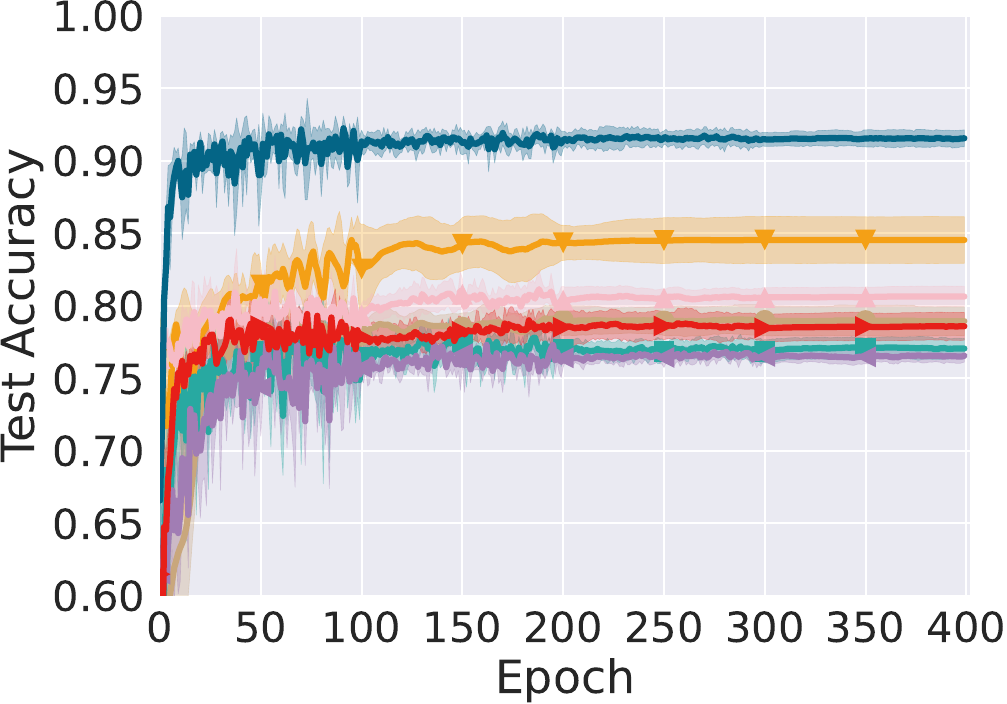}
    \end{minipage}\\
    \begin{minipage}[c]{0.001\textwidth}\flushright\small \rotatebox{90}{\textit{Color-MNIST}} \end{minipage}\hspace{1em}%
    \begin{minipage}[c]{0.97\textwidth}
        % \hspace{-1ex}
        \includegraphics[width=0.245\textwidth]{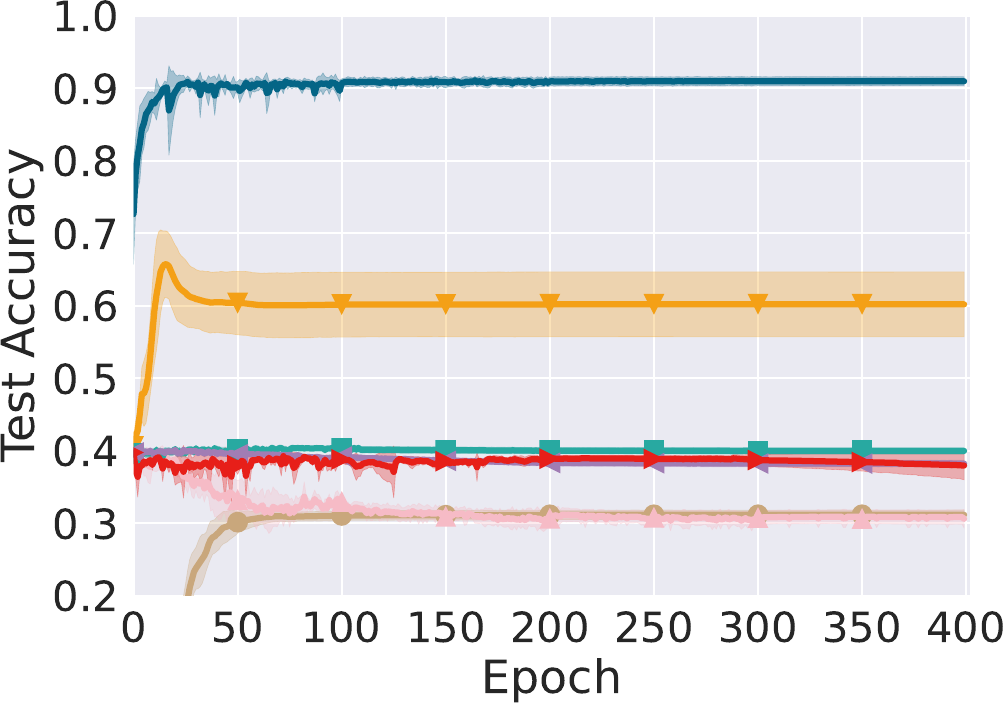}
        % \hspace{-1ex}
        \includegraphics[width=0.245\textwidth]{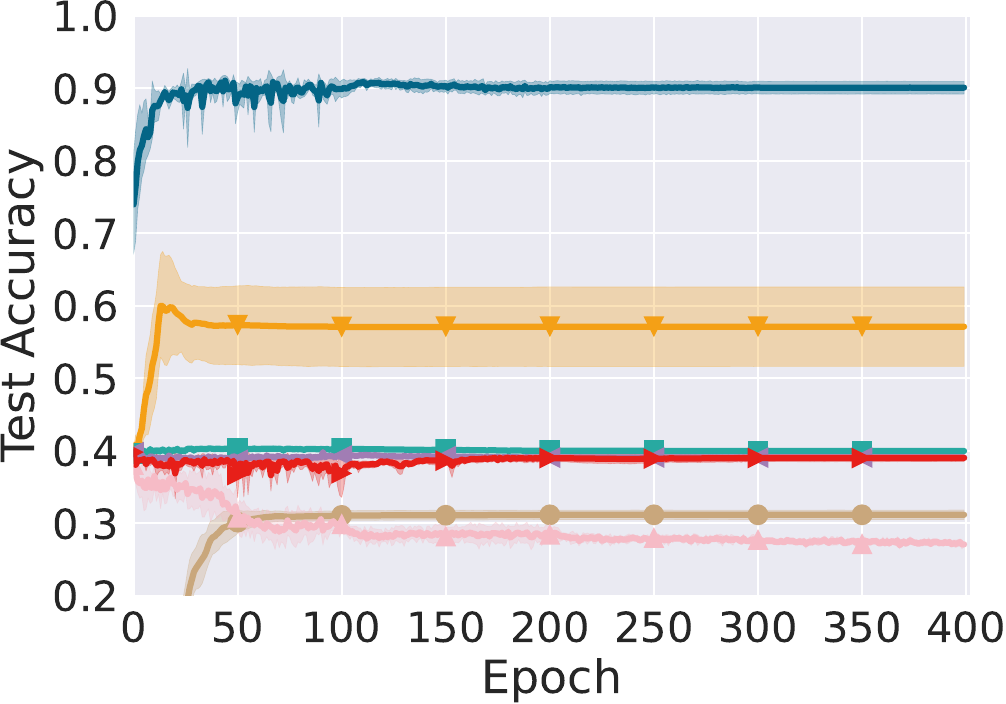}
        \includegraphics[width=0.245\textwidth]{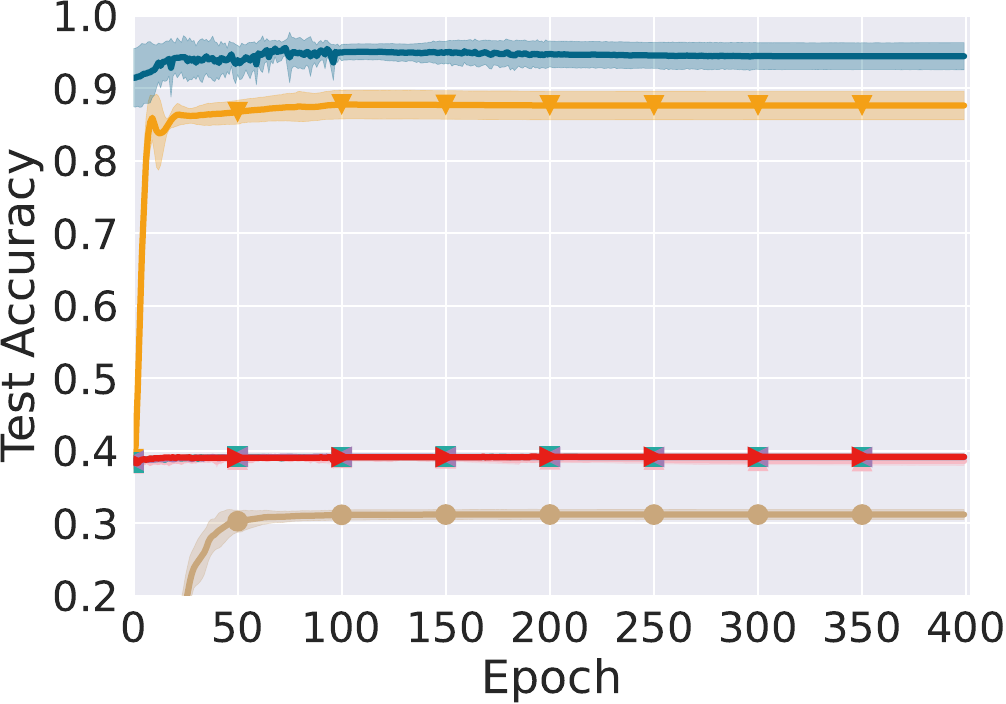}
        \includegraphics[width=0.245\textwidth]{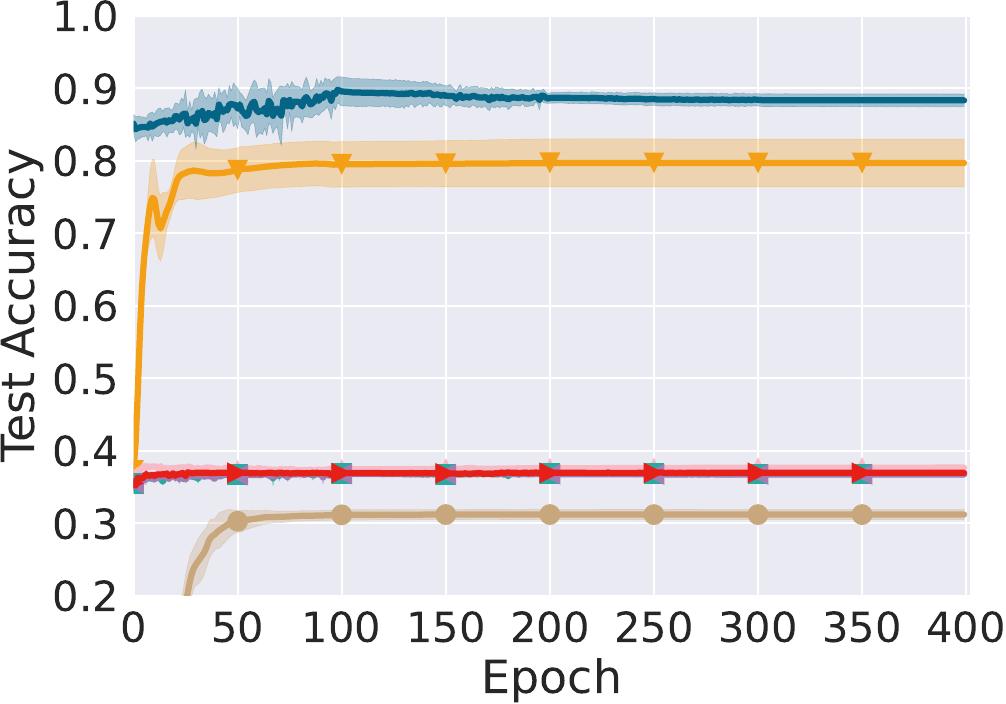}
    \end{minipage}\\
    \begin{minipage}[c]{0.001\textwidth}\flushright\small \rotatebox{90}{\textit{CIFAR-20}} \end{minipage}\hspace{1em}%
    \begin{minipage}[c]{0.97\textwidth}
        % \hspace{-1ex}
        \includegraphics[width=0.245\textwidth]{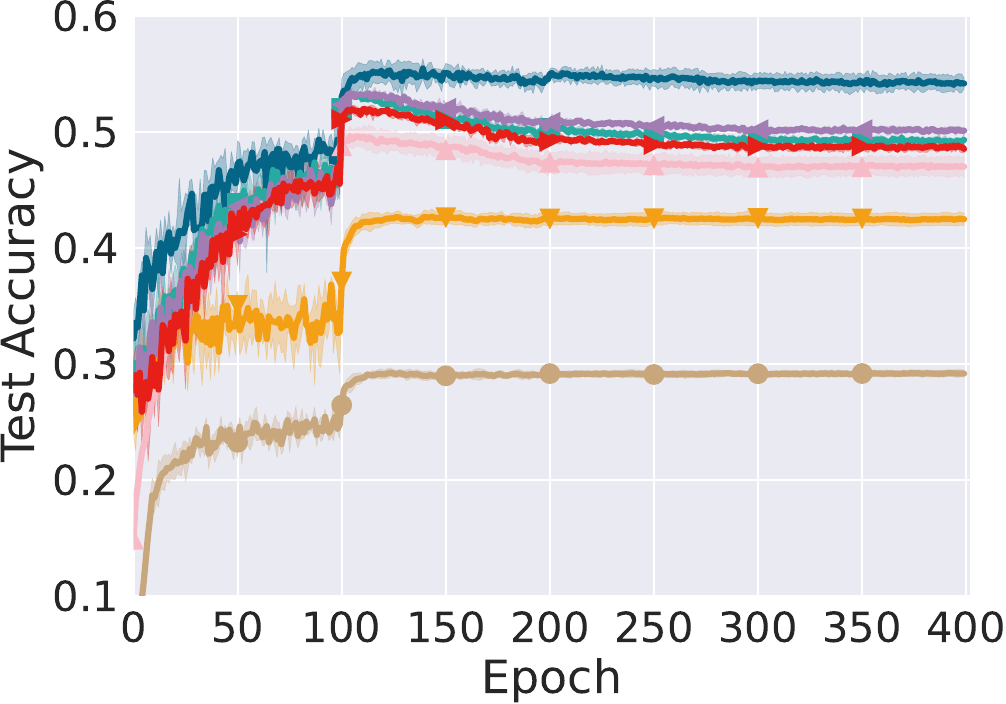}
        % \hspace{-1ex}
        \includegraphics[width=0.245\textwidth]{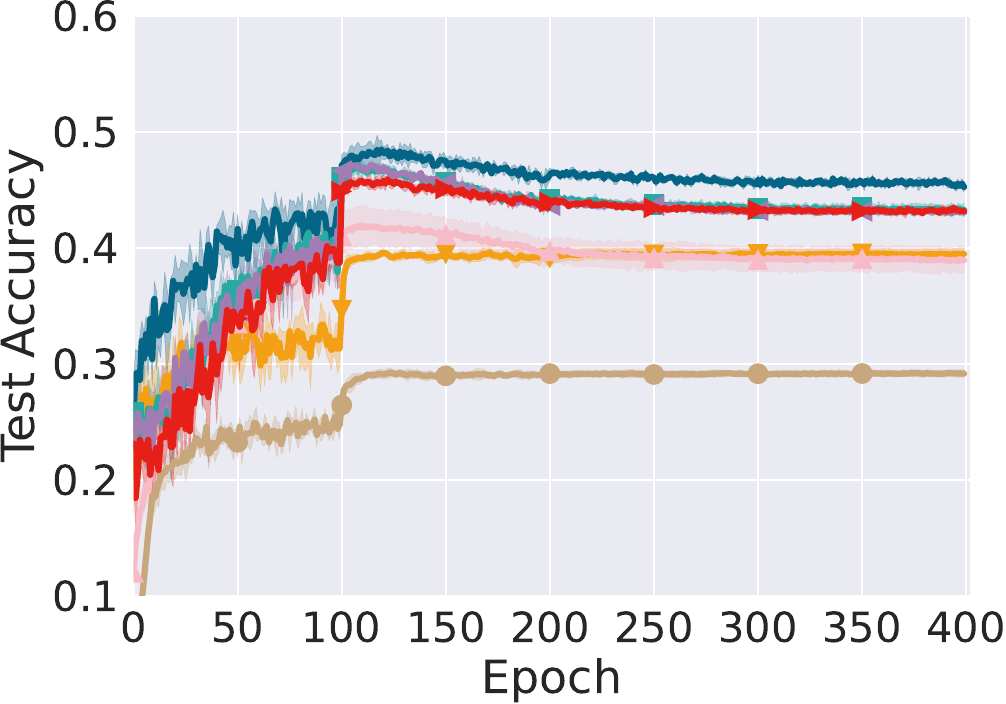}
        \includegraphics[width=0.245\textwidth]{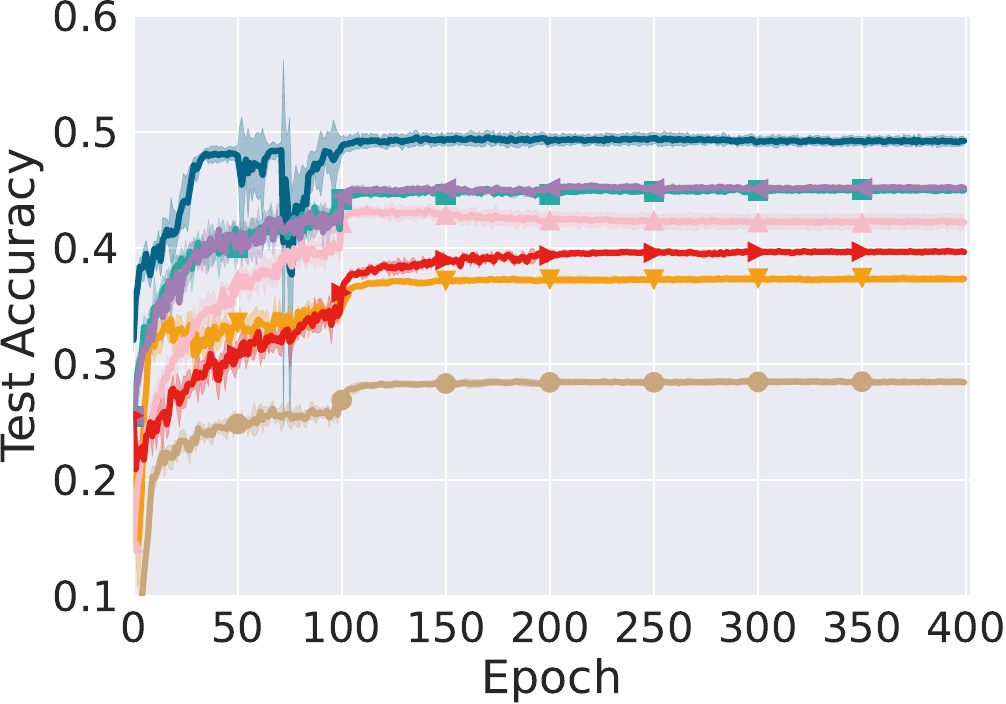}
        \includegraphics[width=0.245\textwidth]{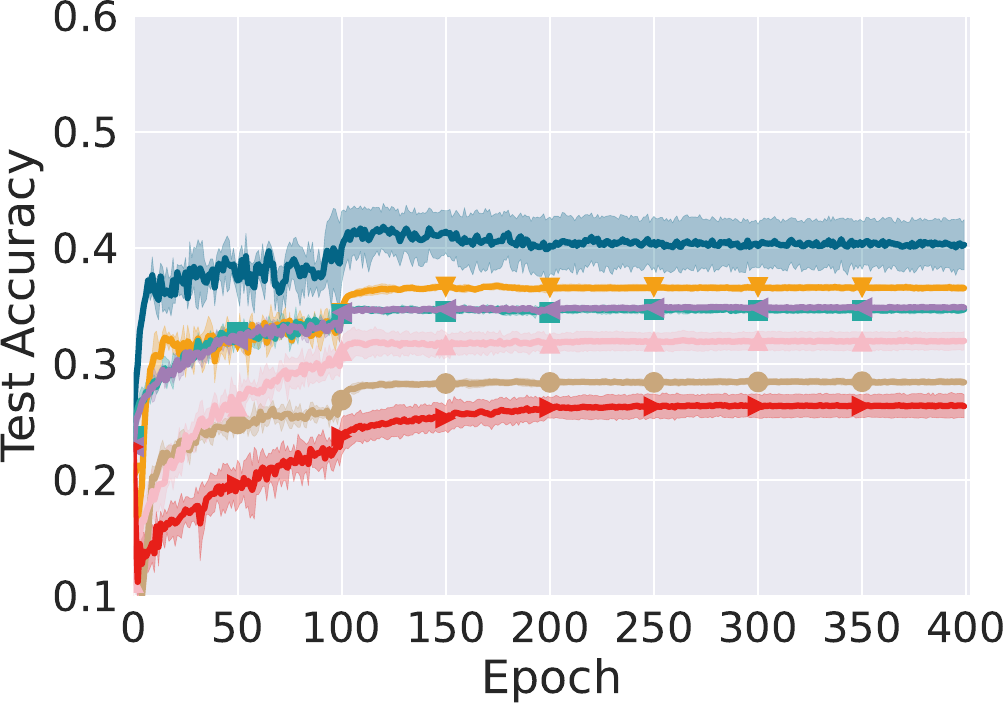}
    \end{minipage}\\
    \caption{Comparisons with IW-like baselines under support-distribution shift (5 trails).} 
    % \caption{Results on MNIST, Color-MNIST, and CIFAR-20 with IW-like baselines under support-distribution shift (5 trails).} 
    \label{fig:result_sd}
    \vspace{-1em}%
\end{figure}

\begin{figure}[t]
    \centering
    \begin{minipage}[c]{0.001\textwidth}~\end{minipage}\hspace{1em}%
    \begin{minipage}[c]{0.245\textwidth}\centering\small \hspace{2em} Label noise 0.2 \end{minipage}%
    \begin{minipage}[c]{0.245\textwidth}\centering\small \hspace{2em}  Label noise 0.4 \end{minipage}%
    \begin{minipage}[c]{0.245\textwidth}\centering\small \hspace{2em} Class-prior shift 10 \end{minipage}%
    \begin{minipage}[c]{0.245\textwidth}\centering\small \hspace{2em}
    Class-prior shift 100 \end{minipage} \\
    \begin{minipage}[c]{0.001\textwidth}\flushright\small \rotatebox{90}{\textit{MNIST}} \end{minipage}\hspace{1em}%
    \begin{minipage}[c]{0.97\textwidth}
        % \hspace{-1ex}
        \includegraphics[width=0.245\textwidth]{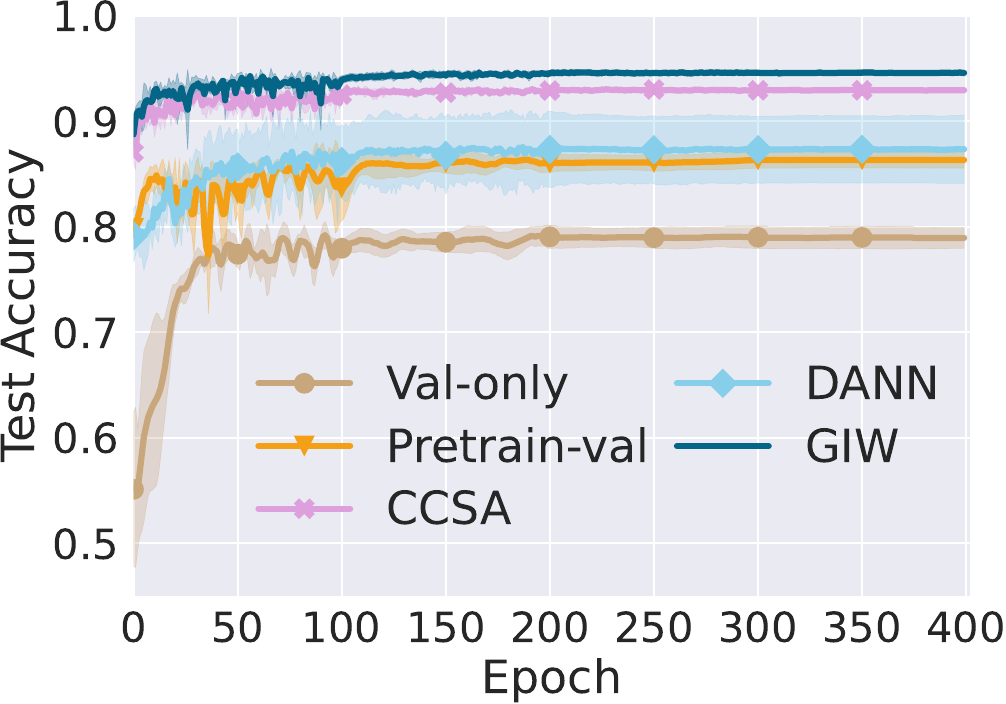}
        % \hspace{-1ex}
        \includegraphics[width=0.245\textwidth]{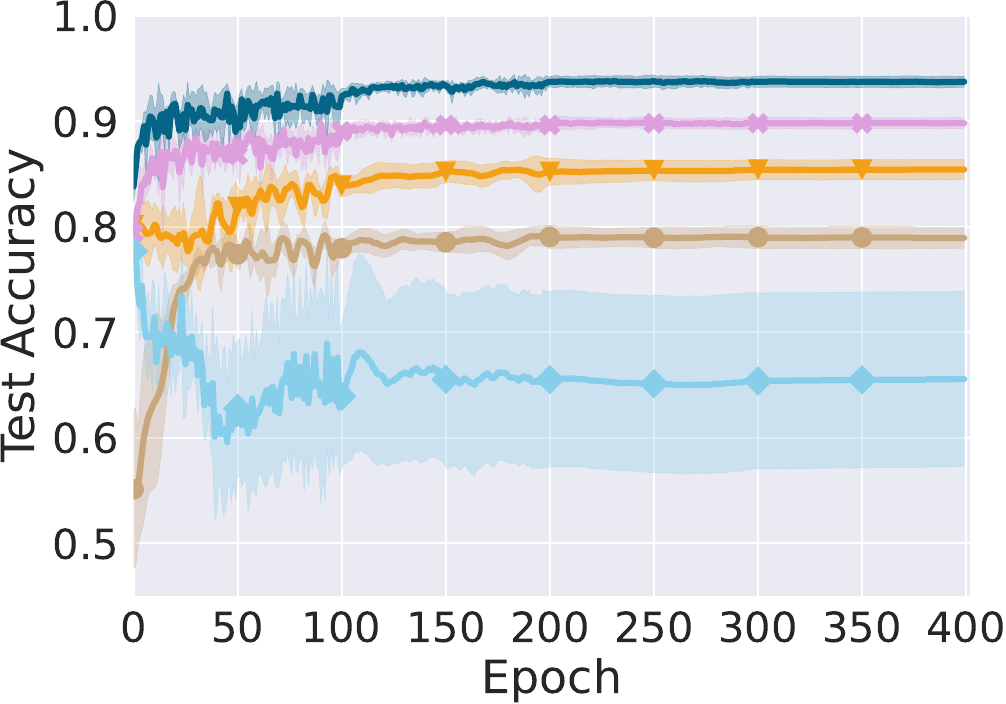}
        \includegraphics[width=0.245\textwidth]{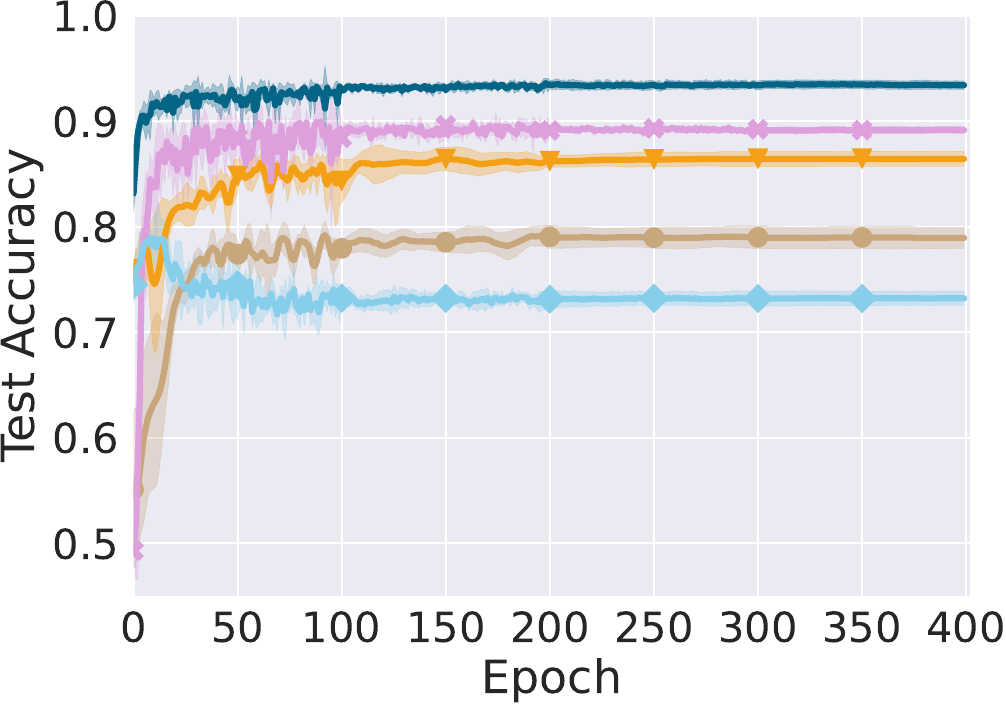}
        \includegraphics[width=0.245\textwidth]{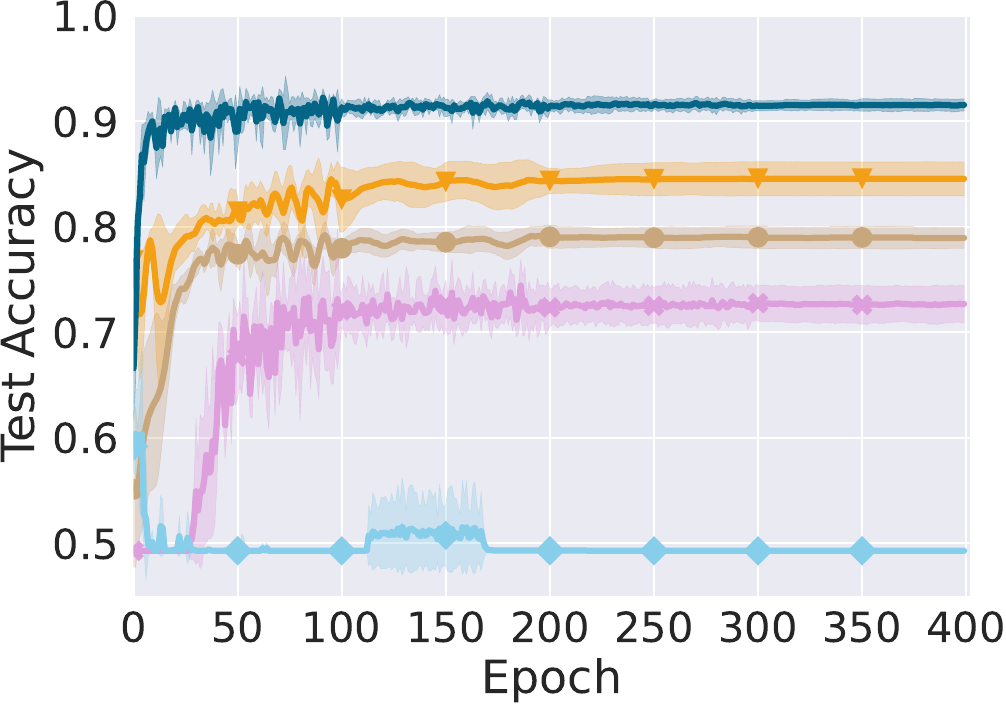}
    \end{minipage}\\
    \begin{minipage}[c]{0.001\textwidth}\flushright\small \rotatebox{90}{\textit{Color-MNIST}} \end{minipage}\hspace{1em}%
    \begin{minipage}[c]{0.97\textwidth}
        % \hspace{-1ex}
        \includegraphics[width=0.245\textwidth]{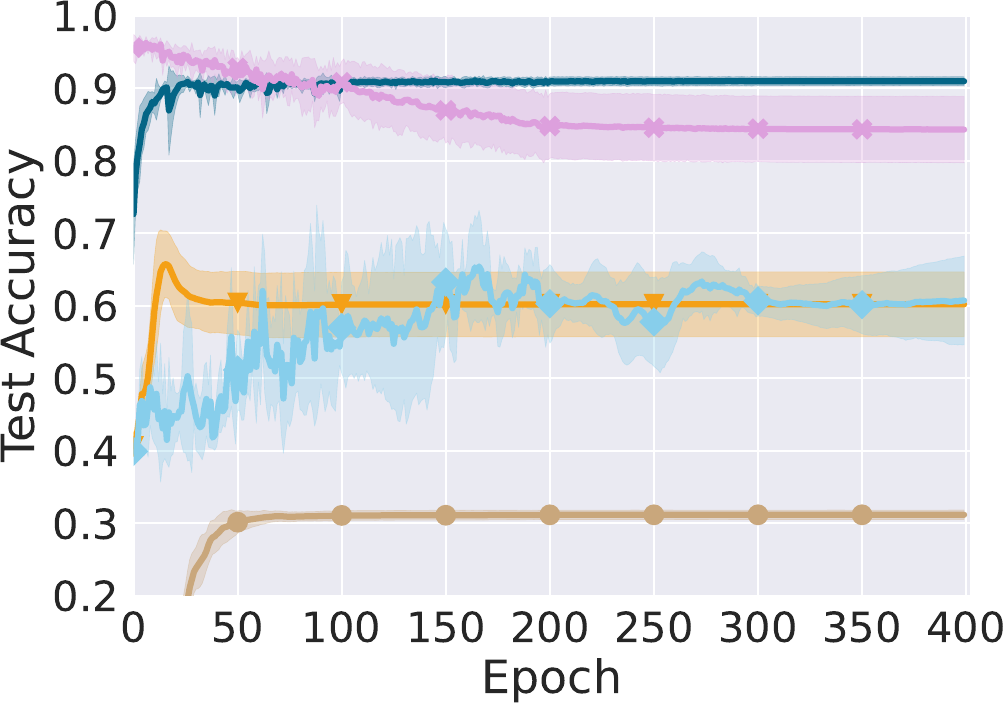}
        % \hspace{-1ex}
        \includegraphics[width=0.245\textwidth]{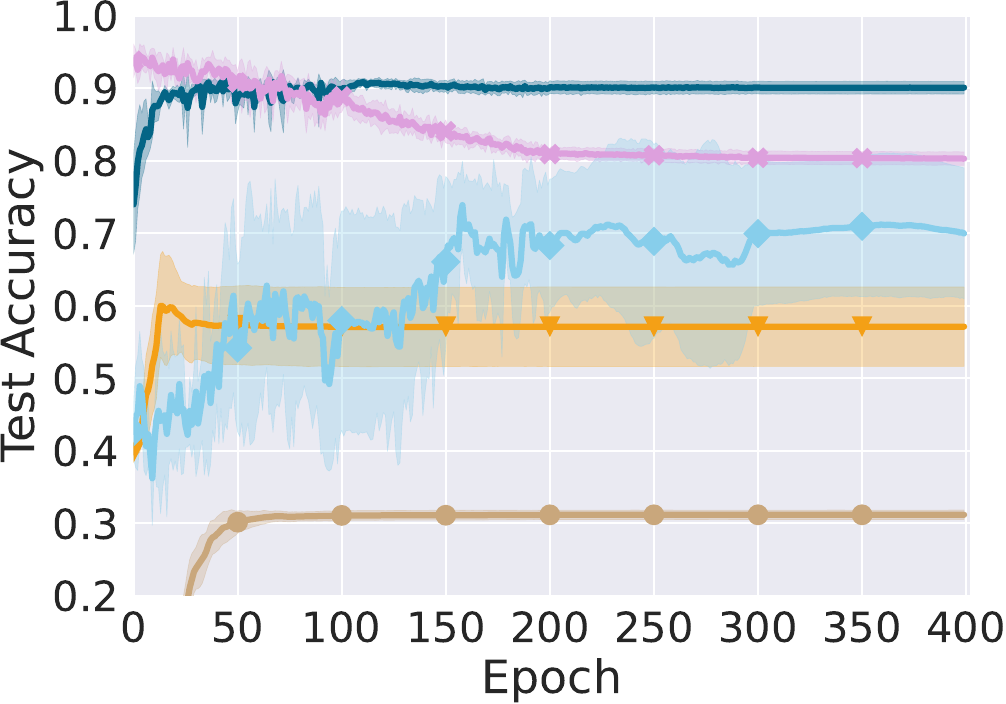}
        \includegraphics[width=0.245\textwidth]{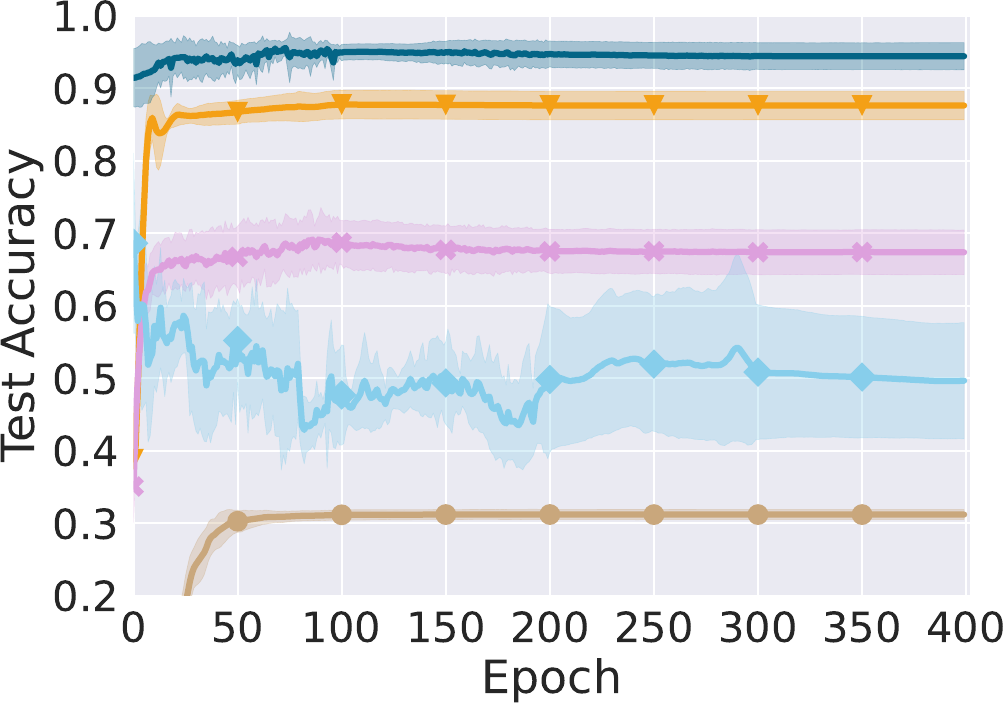}
        \includegraphics[width=0.245\textwidth]{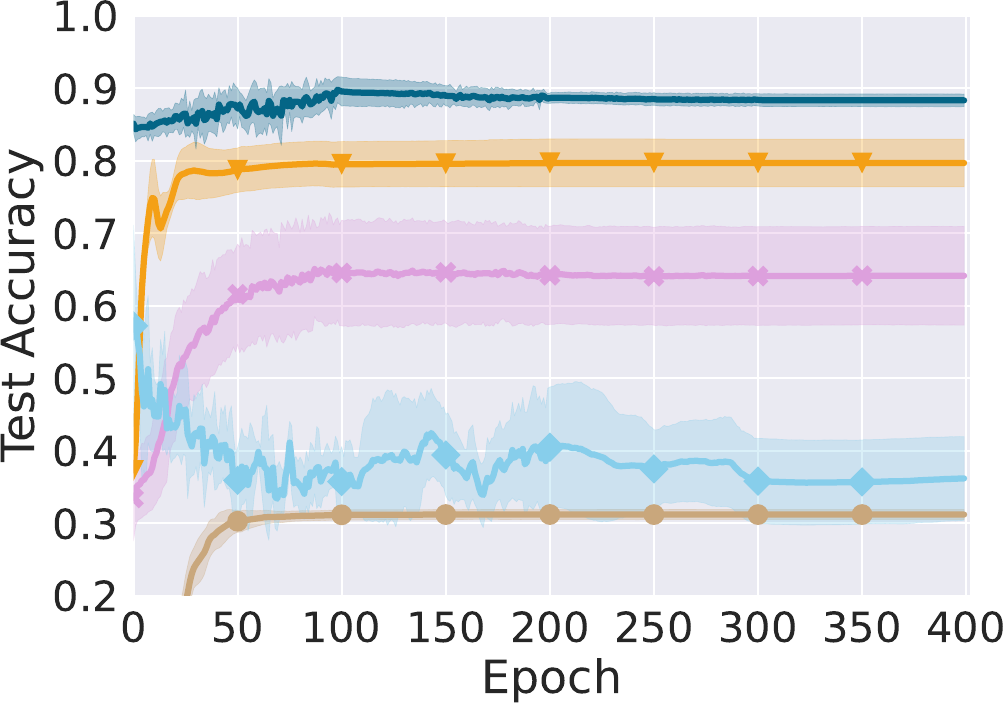}
    \end{minipage}\\
    \begin{minipage}[c]{0.001\textwidth}\flushright\small \rotatebox{90}{\textit{CIFAR-20}} \end{minipage}\hspace{1em}%
    \begin{minipage}[c]{0.97\textwidth}
        % \hspace{-1ex}
        \includegraphics[width=0.245\textwidth]{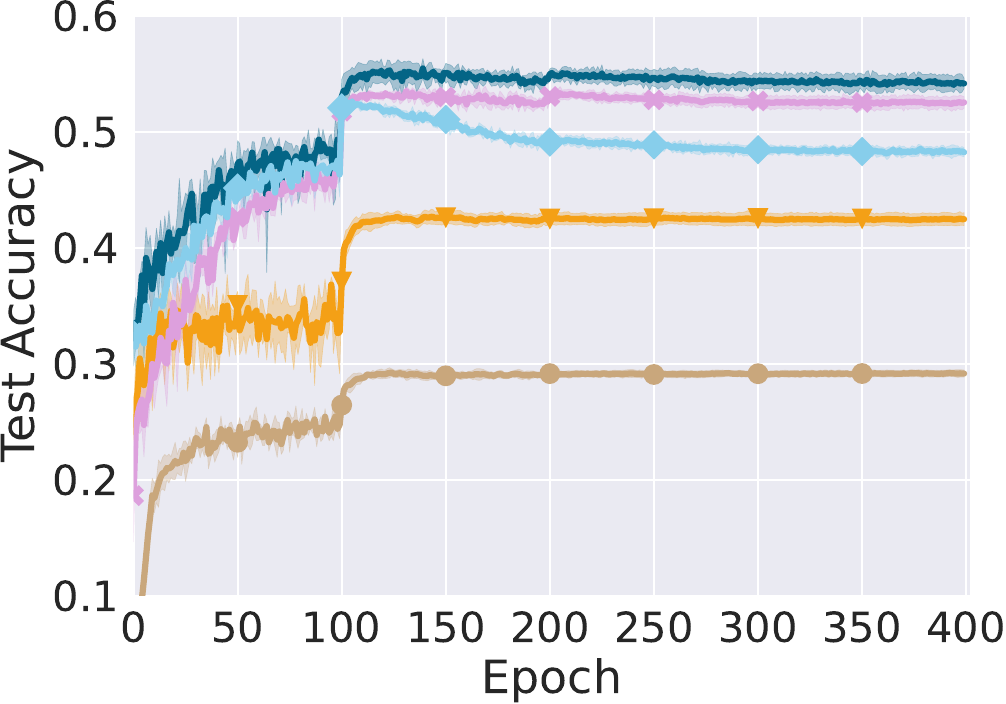}
        % \hspace{-1ex}
        \includegraphics[width=0.245\textwidth]{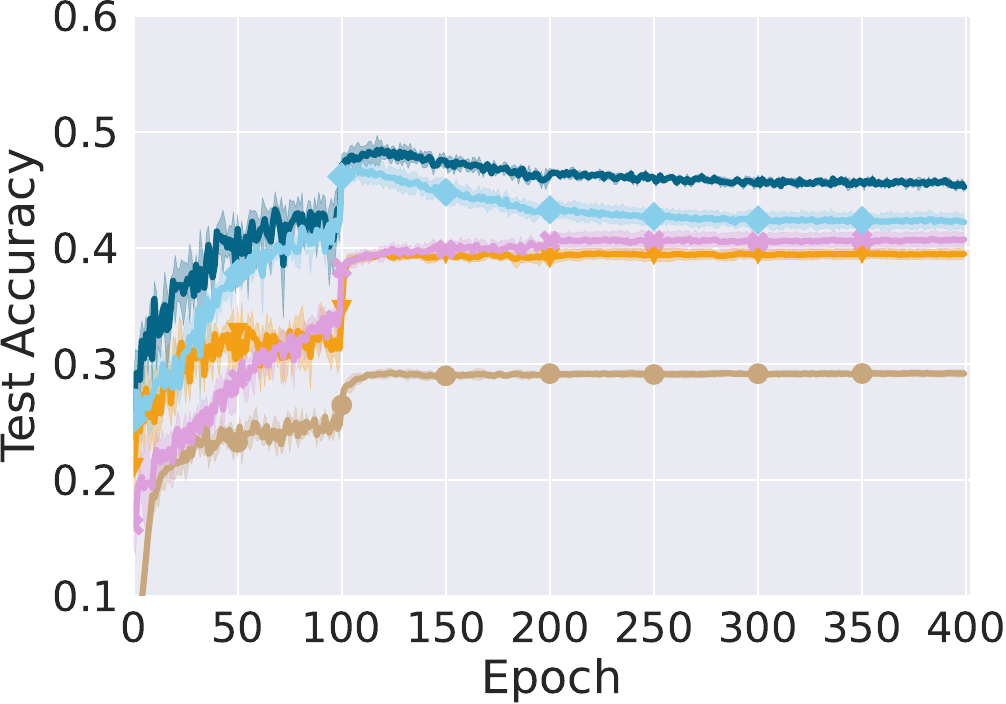}
        \includegraphics[width=0.245\textwidth]{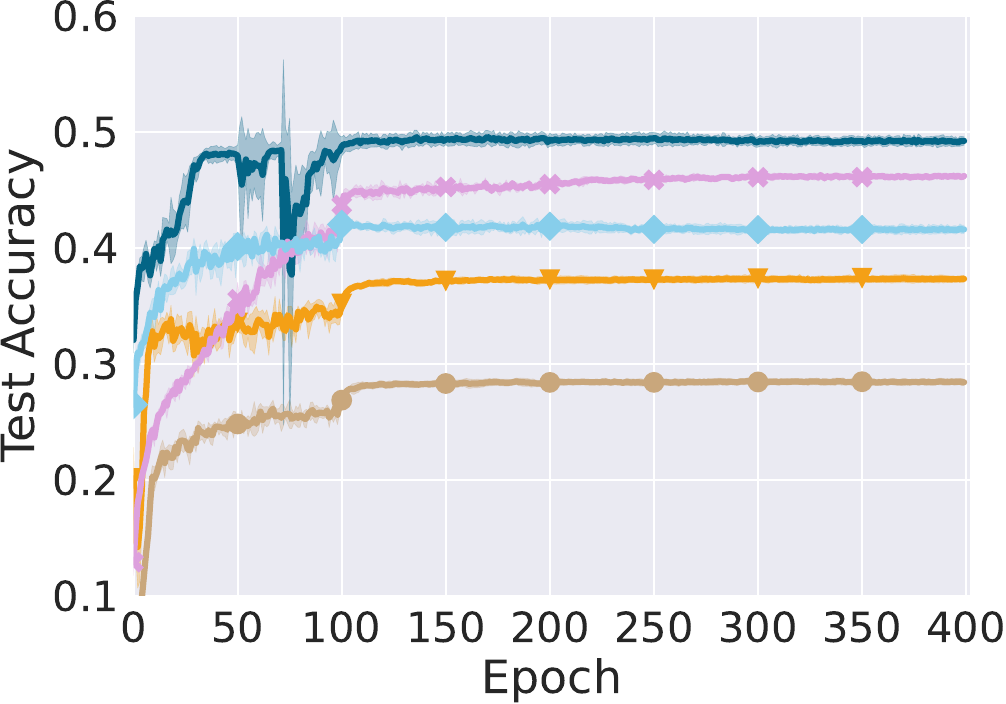}
        \includegraphics[width=0.245\textwidth]{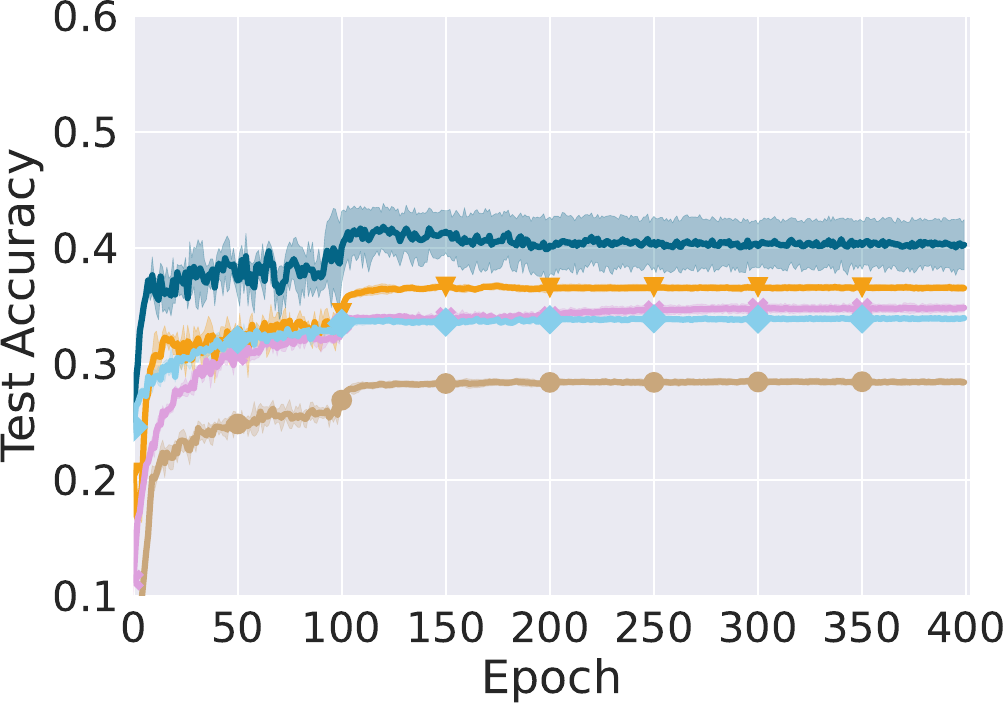}
    \end{minipage}\\
    \caption{Comparisons with DA baselines under support-distribution shift (5 trails).} 
    % \caption{Results on MNIST, Color-MNIST, and CIFAR-20 with DA baselines under support-distribution shift (5 trails).} 
    \label{fig:result_sd_da}
    \vspace{-1ex}%
\end{figure}

\subsection{Experiments under support-distribution shift}
We further imposed additional distribution shift, i.e., adding label noise or class-prior shift, on top of the support shift following the same setup in Table~\ref{tab:dataset}.
Here we only show the results in case (iii) and defer the results in case (iv) to Appendix~\ref{sec:results-iv} due to the space limitation.

\textbf{Label-noise experiments}~~~
In addition to the support shift, we imposed label noise by randomly flipping a label to other classes with an equal probability, i.e., the noise rate. The noise rates are set as $\{0.2, 0.4\}$ and the corresponding experimental results are shown in Figure~\ref{fig:result_sd} and~\ref{fig:result_sd_da}. We can see that compared with baselines, GIW performs better and tends to be robust to noisy labels.  

\textbf{Class-prior-shift experiments}~~~
On top of the support shift, we imposed class-prior shift by reducing the number of training data in half of the classes to make them minority classes (as opposed to majority classes). The sample size ratio per class between the majority and minority classes is defined as $\rho$, chosen from $\{10, 100\}$. To fully use the data from the minority class, we did not split the validation data in class-prior-shift experiments and used all validation data in optimizing the two terms in \eqref{eq:giw-obj-emp}. In Figure~\ref{fig:result_sd} and~\ref{fig:result_sd_da}, we can see that GIW consistently performs better than the baselines. 
Note that though domain adaptation (DA) baselines (i.e., CCSA \& DANN) may achieve comparable performance to GIW under the support shift in Figure~\ref{fig:result_ss}, their effectiveness declines significantly when confronting additional distribution shifts (e.g., label noise or class-prior shift) in Figure~\ref{fig:result_sd_da}.

\begin{figure}[t]
    \centering
    \subcaptionbox{Validation split error\label{fig:ab_val_split}}{\includegraphics[width=0.245\textwidth]{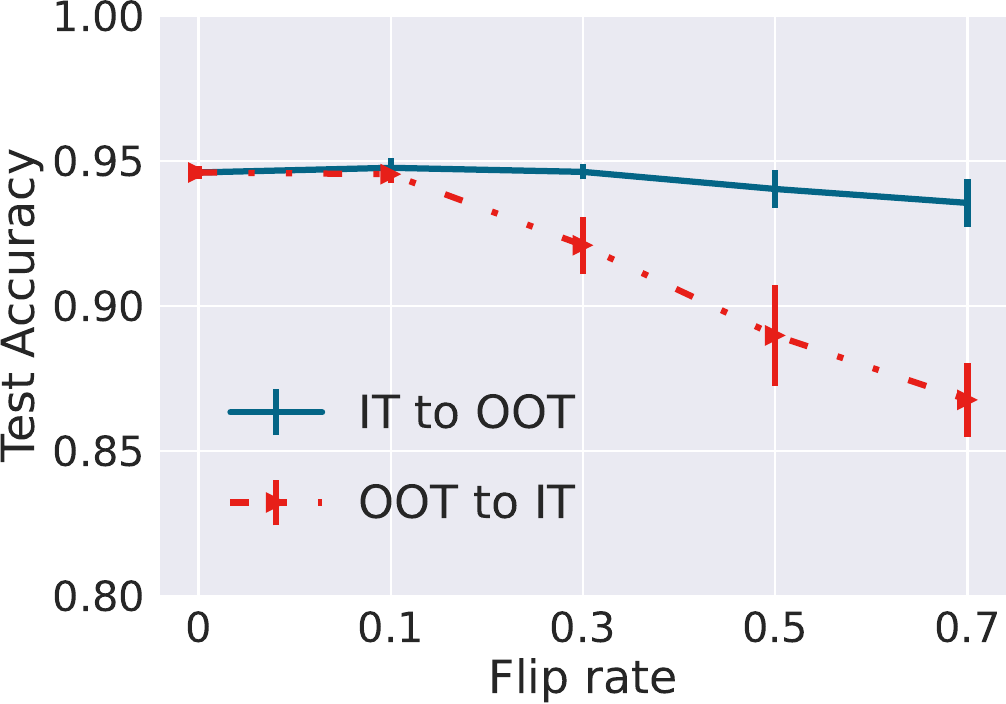}}
    \subcaptionbox{Training data size\label{fig:ab_tr}}{\includegraphics[width=0.245\textwidth]{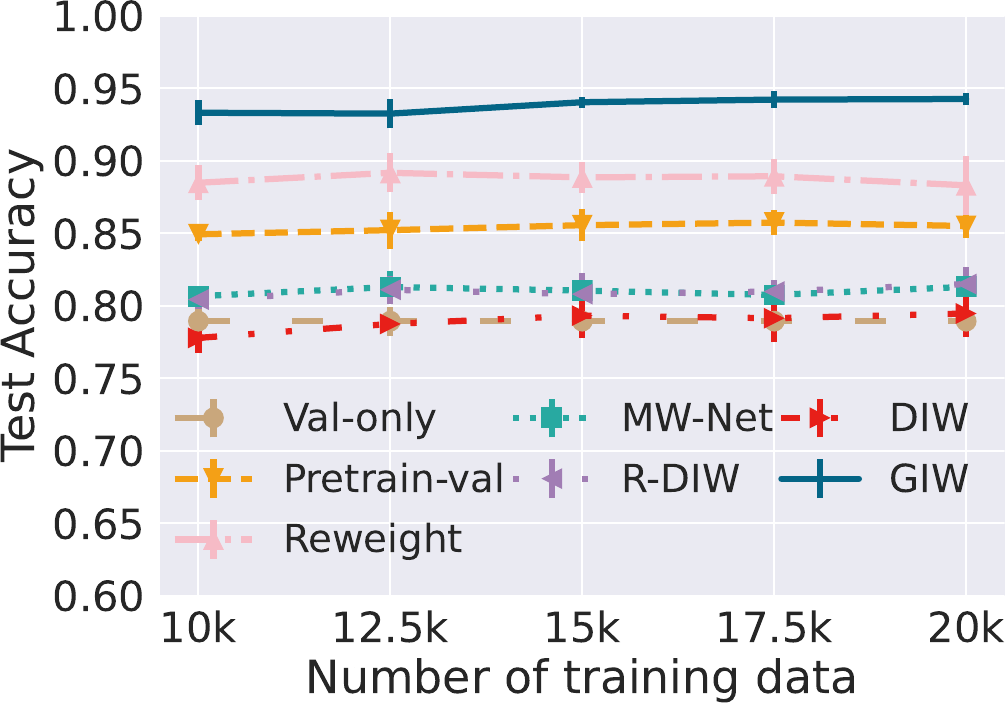}}
    \subcaptionbox{Validation data size\label{fig:ab_val}}{\includegraphics[width=0.245\textwidth]{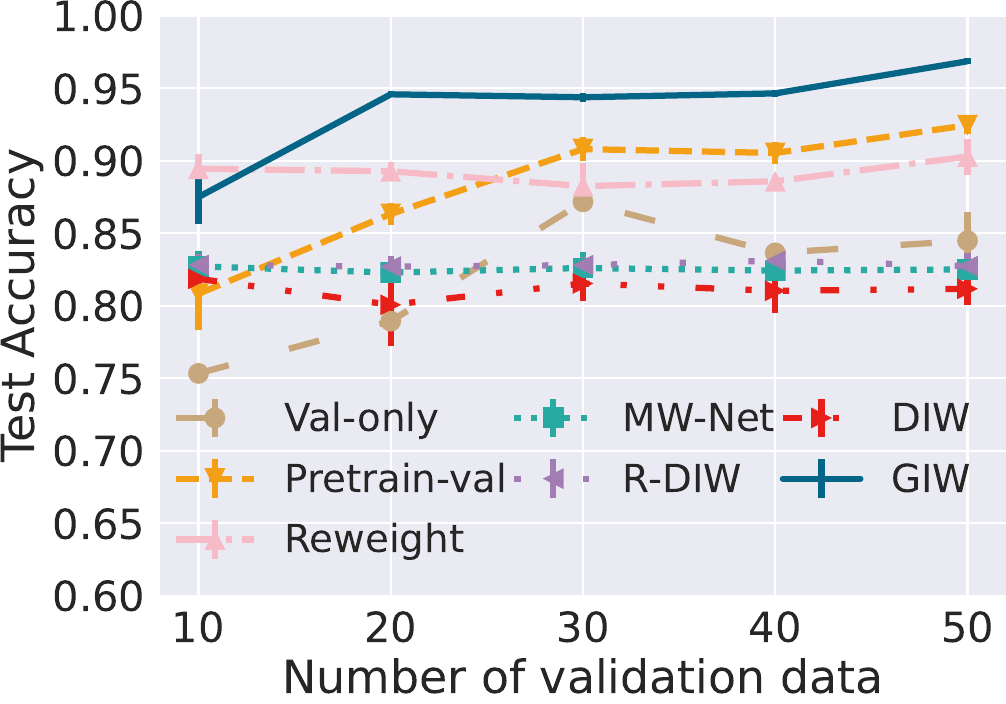}}
    \subcaptionbox{-F/-L representation\label{fig:ab_f_ln02}}{\includegraphics[width=0.245\textwidth]{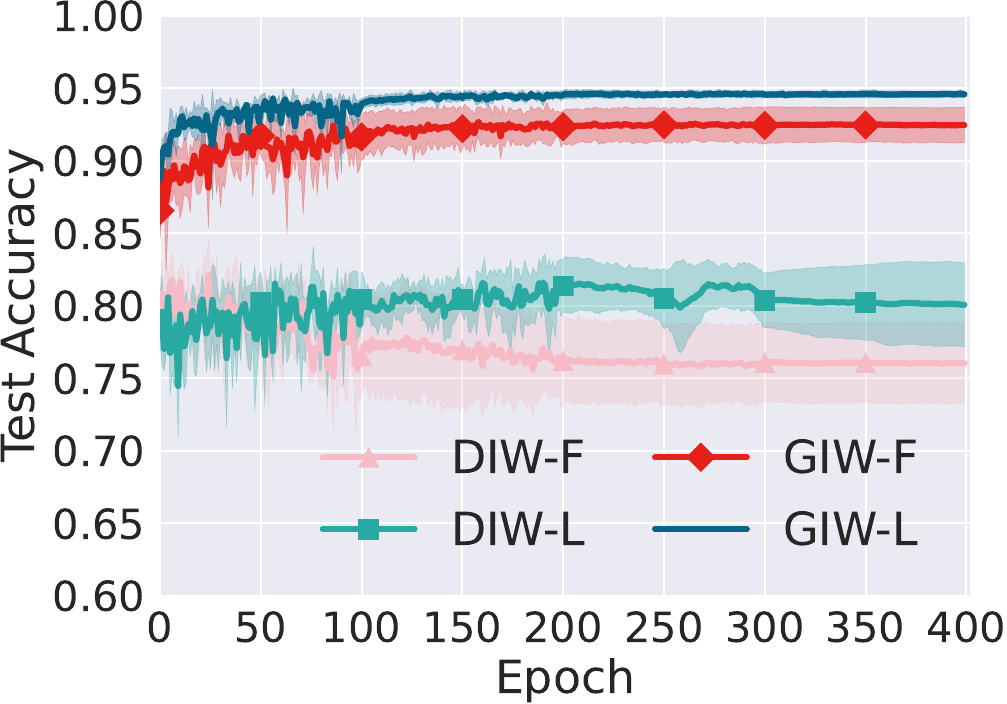}}
    \caption{Impact of different validation data splits, sample sizes, and data representations.}
    \label{fig:ablation}
    \vspace{-1em}%
\end{figure}

% \vspace{-0.5em}%
\subsection{Ablation study}
\label{sec:ablation}
Finally, we performed an ablation study on MNIST under 0.2 label noise. 
Figure~\ref{fig:ab_val_split} and Table~\ref{tab:osvm_ab} in Appendix~\ref{sec:osvm-ablation} present the negative impact of validation data split errors by randomly flipping the IT/OOT data into the OOT/IT part. 
We can see that GIW is reasonably robust to split errors, and flipping OOT to IT is more problematic than the other direction since it makes the empirical risk estimator of GIW more similar to that of the standard IW. 
In Figure~\ref{fig:ab_tr}, the performance of all methods remains consistent across different values of $n_\tr \in \{10000, 12500, 15000, 17500, 20000\}$. 
In Figure~\ref{fig:ab_val}, GIW performs better with more validation data, e.g., when $n_\val$ increases from 10 to 20.
Moreover, we compare the loss-value (-L) with the hidden-layer-output (-F) representation of data used in the DIW and GIW methods.
Figure~\ref{fig:ab_f_ln02} shows GIW-L outperforms others.

\section{Conclusions}
\label{sec:conclusions}
We characterized distribution shift into four cases according to support shift to gain a deeper understanding of IW.
Consequently, we found that IW is provably good in two cases but provably poor in the other two cases.
Then, we proposed GIW which is a strict generalization of IW and is provably favorable in all the four cases.
GIW is safer to be used in practice as it is difficult to know in which case the problem to be solved is.
That said, there are still some potential limitations and thus future directions to work on:
GIW requires \emph{exactly test-distributed} validation data, which is restrictive;
it requires \emph{not very small} validation data, which is demanding;
a small amount of OOT validation data \emph{joins training directly} and is used for validation simultaneously, which might lead to imperceptible overfitting as well as overoptimism about the OOT part of the test support.

\begin{ack}
TF was supported by JSPS KAKENHI Grant Number 23KJ0438 and the Institute for AI and Beyond, UTokyo. NL was funded by the Deutsche Forschungsgemeinschaft (DFG, German Research Foundation) under Germany’s Excellence Strategy -- EXC number 2064/1 -- Project number 390727645. MS was supported by the Institute for AI and Beyond, UTokyo.
\end{ack}

{\small\bibliography{myref}}

\clearpage
\begin{center}
\LARGE Supplementary Material
\end{center}

\appendix
\section{Related work}
In this section, we discuss relevant prior studies for addressing distribution shift problems, including importance weighting (IW), IW-like methods, and domain adaptation (DA). 

\paragraph{Importance weighting (IW)}
IW has been a powerful tool for mitigating the influence of distribution shifts. The general idea is first to estimate the importance, which is the test over training density ratio, and then train a classifier by weighting the training losses according to the importance. Numerous IW methods have been developed in this manner, utilizing different techniques for importance estimation. 

The \emph{kernel mean matching}~(KMM) approach \citep{huang2007correcting} learns the importance function by matching the distributions of training and test data in terms of the maximum mean discrepancy in a reproducing kernel Hilbert space, while the \emph{Kullback-Leibler importance estimation procedure}~(KLIEP) \citep{sugiyama2008direct} employs the KL divergence for density fitting and the \emph{least-squares importance fitting}~(LSIF) \citep{kanamori2009least} employs squared loss for importance fitting. The \emph{unconstrained LSIF}~(uLSIF) \citep{kanamori2009least} is an approximation version of LSIF that removes the non-negativity constraint in optimization, allowing for more efficient computation.
To boost the performance of such traditional IW methods, \emph{dynamic importance weighting}~(DIW) \citep{fang2020rethinking} is recently proposed to make them compatible with stochastic optimizers, thereby facilitating their effective integration with deep learning frameworks. 

However, in order to establish a well-defined notion of importance, all IW methods including DIW assume cases (i) and (ii), while they become problematic in cases (iii) and (iv).

% \paragraph{IW variant}
% A notable IW invariant is the \emph{relative unconstrained least-squares importance fitting} (RuLSIF) \citep{yamada2011relative}, which considers a smoothed and bounded extension of the importance. Instead of estimating the importance $w^*(\bx,y)=\pte(\bx,y)/\ptr(\bx,y)$, they estimate the $\eta$-relative importance $w^*_{\eta}(\bx,y)=\pte(\bx,y)/\left(\eta\pte(\bx,y)+(1-\eta)\ptr(\bx,y)\right)$ where $0\le\eta\le 1$. Though the relative importance definition gives more flexibility and allows for cases (iii) and (iv), the hyperparameter $\eta$ needs to be carefully chosen in order to guarantee good generalization capability. 

\paragraph{IW-like methods}
A relevant IW invariant is the \emph{relative unconstrained least-squares importance fitting} (RuLSIF) \citep{yamada2011relative}, which considers a smoothed and bounded extension of the importance. Instead of estimating the importance $w^*(\bx,y)=\pte(\bx,y)/\ptr(\bx,y)$, they estimate the $\eta$-relative importance $w^*_{\eta}(\bx,y)=\pte(\bx,y)/\left(\eta\pte(\bx,y)+(1-\eta)\ptr(\bx,y)\right)$ where $0\le\eta\le 1$.
While the relative importance is well-defined in cases (iii) and (iv), experiments have demonstrated that it is inferior to GIW, since its training does not incorporate any out-of-training (OOT) data.
% While the relative importance is well-defined in cases (iii) and (iv), experiments have demonstrated that it is inferior to GIW, primarily because its training does not incorporate any data from the out-of-training (OOT) part.
% Though the relative importance definition gives more flexibility and allows for cases (iii) and (iv), the hyperparameter $\eta$ needs to be carefully chosen in order to guarantee good generalization capability. 

Moreover, some reweighting approaches based on bilevel optimization look like DIW in the sense of iterative training between weighted classification on the training data for learning the classifier and weight estimation with the help of a small set of validation data for learning the weights \citep{jiang2017mentornet,ren2018learning,shu2019meta}. However, they encounter a similar issue as IW and RuLSIF, where validation data is solely used for learning the weights, while the training data (without any OOT data) is used for training the classifier. 
This makes them hard to handle the cases (iii) and (iv).

% They also share the similar issue with IW and RuLSIF, as $\cD_\val$ is used to determine the importance weights on $\cD_\tr$ and $\boldf$ is trained from $\cD_\tr$.

\paragraph{Domain adaptation (DA)}
DA relates to DS problems where the $\pte(\bx, y)$ and the $\ptr(\bx, y)$ are usually named as target and source domain distributions \citep{ben2007analysis}, or in-domain and out-of-domain distributions \citep{duchi2016statistics}. 
It can be categorized into \emph{supervised} DA (SDA) or \emph{unsupervised} DA (UDA): the former has labeled test data while the latter has unlabeled test data.  
The setting of SDA is similar as that of GIW.
One representative SDA work is \emph{classification and contrastive semantic alignment} (CCSA) \citep{motiian2017unified} method. In CCSA, a contrastive semantic alignment loss is added to the classification loss, for minimizing the distances between the samples that come from the same class and maximizing the distances between samples from different classes. 
% In DA, $\pte(\bx, y)$ and $\ptr(\bx, y)$ are usually named as target and source domain distributions \citep{ben2007analysis}, or in-domain and out-of-domain distributions \citep{duchi2016statistics}. 
% It can be categorized into \emph{supervised} DA (SDA) or \emph{unsupervised} DA (UDA), according to the validation data from $\pte(\bx, y)$ or $\pte(\bx)$. SDA has a similar setting as GIW, while UDA is more popular in DA. 
In DA research, UDA is more popular than SDA. 
Based on different assumptions, UDA involves learning domain-invariant \citep{ganin2016domain} or conditional domain-invariant features \citep{gong2016domain}, or giving pseudo labels to the target domain data \citep{saito2017asymmetric}.

Note that DA can refer to either certain problem settings or the corresponding learning methods (or both). When regarding it as problem settings, SDA is exactly the same as joint shift and UDA is fairly similar to covariate shift, which assumes $p(y|\bx)$ doesn't change too much between the training and test domain. When regarding it as learning methods, the philosophy of both SDA and UDA is to find good representations to link the source and target domains and transfer knowledge from the source domain to the target domain, which is totally different from the philosophy of IW.

\section{Supplementary information on experimental setup}
\label{sec:setup}
In this section, we present supplementary information on the experimental setup. All experiments were implemented using PyTorch 1.13.1\footnote{\url{https://pytorch.org}} and carried out on NVIDIA Tesla V100 GPUs\footnote{\url{https://www.nvidia.com/en-us/data-center/v100/}}. 

\subsection{Datasets and base models}
\label{sec:data-model}
\paragraph{MNIST}
MNIST \citep{lecun1998gradient} is a 28*28 grayscale image dataset for 10 hand-written digits (0--9). The original dataset includes 60,000 training data and 10,000 test data. See \url{http://yann.lecun.com/exdb/mnist/} for more details.

In the experiments, we converted it for binary classification to classify even/odd digits as follows:
\begin{itemize}
    \item Class 0: digits `0', `2', `4', `6', and `8';
    \item Class 1: digits `1', `3', `5', `7', and `9'.
\end{itemize}
In our setup, the training data only included 4 digits (0-4). The test data could access all digits (0-9) in case (iii) and 8 digits (2-9) in case (iv). Since the number of training data was reduced, we added two data augmentations to the training and validation data: random rotation with degree 10 and random affine transformation with degree 10, translate of (0.1, 0.1) and scale of (0.9, 1.1). Note that the data augmentations were only added during procedure 2 \textsc{ModelTrain} in Algorithm~\ref{alg:GIW}.
%transforms.RandomAffine(degrees=10, translate=(0.1, 0.1), scale=(0.9, 1.1)),

Accordingly, we modified LeNet-5 \citep{lecun1998gradient} as the base model for MNIST:
\begin{itemize}[leftmargin=10em]
    \item[0th (input) layer:] (32*32)-
    \item[1st layer:] C(5*5,6)-S(2*2)-
    \item[2nd layer:] C(5*5,16)-S(2*2)-
    \item[3rd layer:] FC(120)-
    \item[4th to 5th layer:] FC(84)-2,
\end{itemize}
where C(5*5,6) represents a 5*5 convolutional layer with 6 output channels followed by ReLU, S(2*2) represents a max-pooling layer with a filter of size 2*2, and FC(120) represents a fully connected layer with 120 outputs followed by ReLU, etc. The hidden-layer-output representation of data used in the implementation was the normalized output extracted from the 3rd layer. 

\paragraph{Color-MNIST}
Color-MNIST was modified from MNIST for 10 hand-written digit classification, where the digits in training data were colored in red and the digits in test/validation data were colored in either red, green or blue evenly. See Figure~\ref{fig:color-mnist} for a plot of the training data and validation data. We did not add any data augmentation for experiments on Color-MNIST.

\begin{figure}[t]
    \centering
    \includegraphics[width=0.46\textwidth]{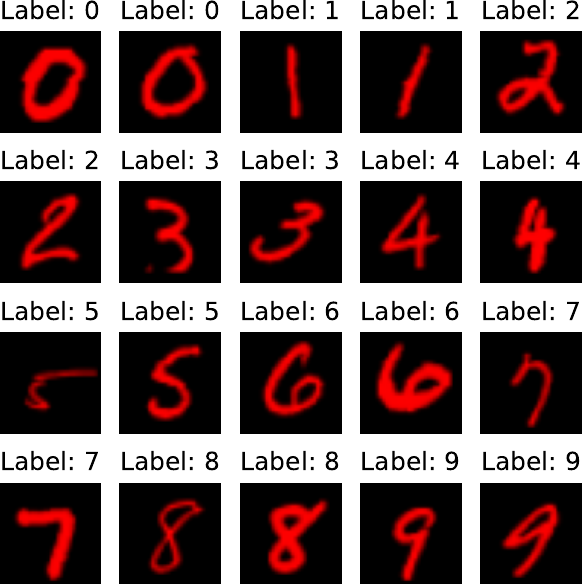}~~~~~
    \includegraphics[width=0.46\textwidth]{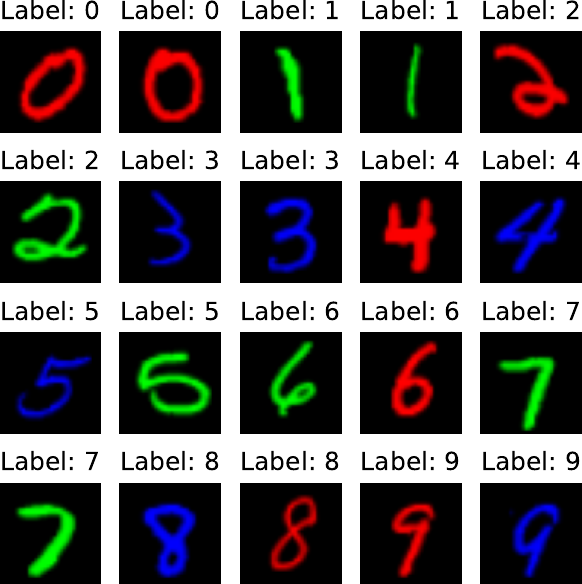}\\
    \begin{minipage}[c]{0.46\textwidth}\centering Training data\end{minipage}~~~~~
    \begin{minipage}[c]{0.46\textwidth}\centering Validation data\end{minipage}\\
    \vspace{0.2em}
    \caption{A plot of the training data and validation data in Color-MNIST dataset.}
    \label{fig:color-mnist}
\end{figure}

To process RGB input data, we modified LeNet-5 as the base model for Color-MNIST:
\begin{itemize}[leftmargin=10em]
    \item[0th (input) layer:] (32*32*3)-
    \item[1st layer:] C(5*5,20)-S(2*2)-
    \item[2nd layer:] C(5*5,50)-S(2*2)-
    \item[3rd layer:] FC(120)-
    \item[4th to 5th layer:] FC(84)-10,
\end{itemize}
where the abbreviations and the way of extracting the hidden-layer-output representation of data were the same as that in MNIST. 

\paragraph{CIFAR-20} 
CIFAR-100 \citep{krizhevsky2009learning} is a 32*32 colored image dataset in 100 classes, grouped in 20 superclasses. It contains 50,000 training data and 10,000 test data. We call this dataset CIFAR-20 since we use it for 20-superclass classification---the predefined superclasses and classes as shown below, where each superclass includes five distinct classes. See \url{https://www.cs.toronto.edu/~kriz/cifar.html} for more details. 

\begin{table}[ht]
  % \caption{Superclass and class defined in CIFAR-20. Only data in the classes in bold are seen by the training data; all data can be seen by the test data. }
  % \centering
  \begin{tabular}{ll}
    \textbf{Superclass} & \textbf{Class} \\
    aquatic mammals & (beaver, dolphin), otter, seal, whale     \\
    fish & (aquarium fish, flatfish), ray, shark, trout     \\
    flowers &	(orchids, poppies), roses, sunflowers, tulips    \\
    food containers &	(bottles, bowls), cans, cups, plates    \\
    fruit and vegetables &	(apples, mushrooms), oranges, pears, sweet peppers    \\
    household electrical devices &	(clock, computer keyboard), lamp, telephone, television    \\
    household furniture &	(bed, chair), couch, table, wardrobe    \\
    insects &	(bee, beetle), butterfly, caterpillar, cockroach    \\
    large carnivores &	(bear, leopard), lion, tiger, wolf    \\
    large man-made outdoor things &	(bridge, castle), house, road, skyscraper    \\
    large natural outdoor scenes &	(cloud, forest), mountain, plain, sea    \\
    large omnivores and herbivores &	(camel, cattle), chimpanzee, elephant, kangaroo    \\
    medium-sized mammals &	(fox, porcupine), possum, raccoon, skunk    \\
    non-insect invertebrates &	(crab, lobster), snail, spider, worm    \\
    people &	(baby, boy), girl, man, woman    \\
    reptiles &	(crocodile, dinosaur), lizard, snake, turtle    \\
    small mammals &	(hamster, mouse), rabbit, shrew, squirrel    \\
    trees &	(maple, oak), palm, pine, willow    \\
    vehicles 1 &	(bicycle, bus), motorcycle, pickup truck, train    \\
    vehicles 2 &	(lawn-mower, rocket), streetcar, tank, tractor
  \end{tabular}
\end{table}

In our setup, the training data only included the data in 2 out of the 5 classes per superclass, i.e., the classes in ( ) were seen by the training data as shown above. The test data included the data in all classes. Since the number of training data was reduced, we added several data augmentations to the training and validation data: random horizontal flip, random vertical flip, random rotation of degree 10 and random crop of size 32 with padding 4. Same as that in MNIST experiments, the data augmentations were only added during procedure 2 \textsc{ModelTrain} in Algorithm~\ref{alg:GIW}.

As for the base model for CIFAR-20, we adopted ResNet-18 \citep{he2016deep} as follows:
\begin{itemize}[leftmargin=10em]
    \item[0th (input) layer:] (32*32*3)-
    \item[1st to 5th layers:] C(3*3, 64)-[C(3*3, 64), C(3*3, 64)]*2-
    \item[6th to 9th layers:] [C(3*3, 128), C(3*3, 128)]*2-
    \item[10th to 13th layers:] [C(3*3, 256), C(3*3, 256)]*2-
    \item[14th to 17th layers:] [C(3*3, 512), C(3*3, 512)]*2-
    \item[18th layer:] Global Average Pooling-20,
\end{itemize}
where [ $\cdot$, $\cdot$ ] denotes a building block \citep{he2016deep} and [$\cdot$]*2 means 2 such blocks, etc. Batch normalization \citep{ioffe2015batch} was applied after convolutional layers. The hidden-layer-output representation of data was the normalized output after pooling operation in the 18th layer.

\subsection{Experiments under support shift}
\label{sec:exp-ss}
For all compared methods except Val-only, we pre-trained the model for 10 epochs as the initialization. For the one-class support vector machine (O-SVM) \citep{scholkopf1999support}, we adopted the implementation from scikit-learn\footnote{\url{https://scikit-learn.org/stable/modules/generated/sklearn.svm.OneClassSVM.html}}, where the radial basis function (RBF) kernel was used: $k(\bx_i,\bx_j) = e^{-{\gamma\left \| \bx_i-\bx_j \right \|^{2}}}$ with $\gamma=10000$. All other hyperparameters about O-SVM were set as the default. For the distribution matching by dynamic importance weighting (DIW) \citep{fang2020rethinking}, we again used the RBF kernel where $\gamma$ was the median distance between the training data. And we used $\bK + \omega I$ as the kernel matrix $\bK$, where $I$ was an identity matrix and $\omega$ was set to be 1e-05. The upper bound of weights was set as 50. 

In all experiments under support shift, Adam \citep{kingma15iclr} was used as the optimizer, the learning rate was 0.0005, decaying every 100 epochs by multiplying a factor of 0.1, and the batch size was set as 256. 
For MNIST, Color-MNIST, and CIFAR-20 experiments, the weight decay was set as 0.005, 0.002, and 0.0001, respectively. 

\subsection{Experiments under support-distribution shift}
For experiments under support-distribution shift, all the setups and hyperparameters about the initialization, O-SVM, and distribution matching by DIW were the same as that in Section~\ref{sec:exp-ss}. Moreover, the same that Adam was used as the optimizer, the learning rate was 0.0005, decaying every 100 epochs by multiplying a factor of 0.1, and the batch size was set as 256. Next we show the setups and hyperparameters specific to the support-distribution shift. 

\paragraph{Label-noise experiments}
On top of the support shift in Section~\ref{sec:exp-ss}, we added a symmetric label noise to the training data, where a label may flip to all other classes with an equal probability (this probability was defined as the noise rate, set as $\{0.2, 0.4\}$). The type of label noise and the noise rate were unknown to the model.
For MNIST, Color-MNIST, and CIFAR-20 experiments, the weight decay was set as 0.005, 0.002, and 0.008, respectively. 

\paragraph{Class-prior shift experiments}
We induced class-prior shift in the training data by randomly sampling half of the classes as minority classes (other classes were the majority classes) and reducing the number of samples in minority classes. The sample size ratio per class between the majority and minority classes was $\rho$, chosen from $\{10, 100\}$. The randomly selected minority classes in class-prior shift experiments were shown as follows:
\begin{itemize}
    \item MNIST: class of odd digits;
    \item Color-MNIST: digits `1', `2', `6', `7', and `8';
    \item CIFAR-20: superclasses of `fish', `fruit and vegetables', `household electrical device', `household furniture', `large carnivores', `large omnivores and herbivores', `medium-sized mammals', `people', `small mammals', and `vehicles 1'.
\end{itemize}
% cifar-20: [4, 14, 18, 5, 12, 11, 16, 6, 8, 1]
For MNIST, Color-MNIST, and CIFAR-20 experiments, the weight decay was set as 0.005, 1e-05, and 1e-07, respectively. Since we did not split the validation data in class-prior shift experiments, we set the $\alpha$ in \eqref{eq:giw_obj} as 0.5 for class-prior shift experiments on all datasets.

\section{Supplementary experimental results}
\label{sec:results}
In this section, we present supplementary experimental results, including the histogram plots of the learned O-SVM score, more ablation study results on the validation data split error, visualizations of convolution kernels for all methods under label noise, additional experimental results for case (iv), and the summary of classification accuracy. 

\subsection{On the learned O-SVM score}
\label{sec:osvm-score}%

\begin{figure}[t]
    \centering
    \subcaptionbox{MNIST\label{fig:osvm-score-mnist}}{\includegraphics[width=0.32\textwidth]{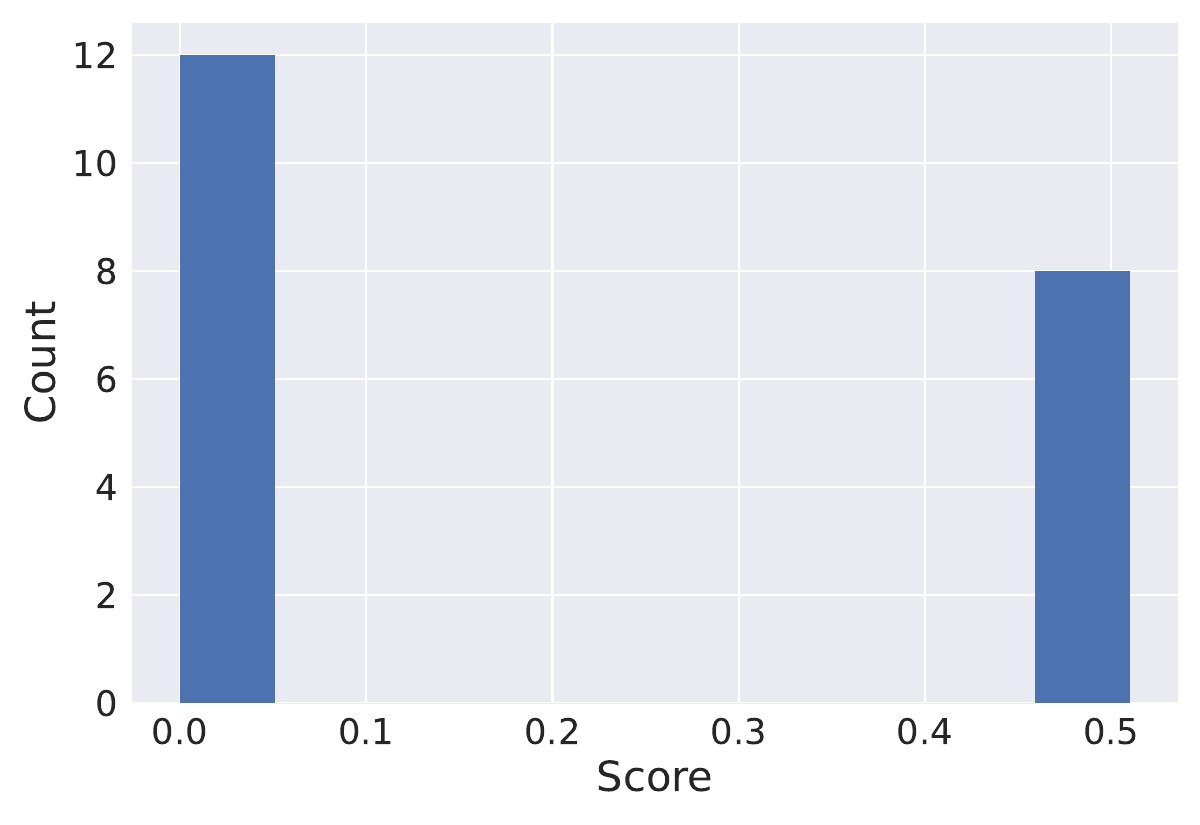}}
    \subcaptionbox{Color-MNIST\label{fig:osvm-score-cmnist}}{\includegraphics[width=0.32\textwidth]{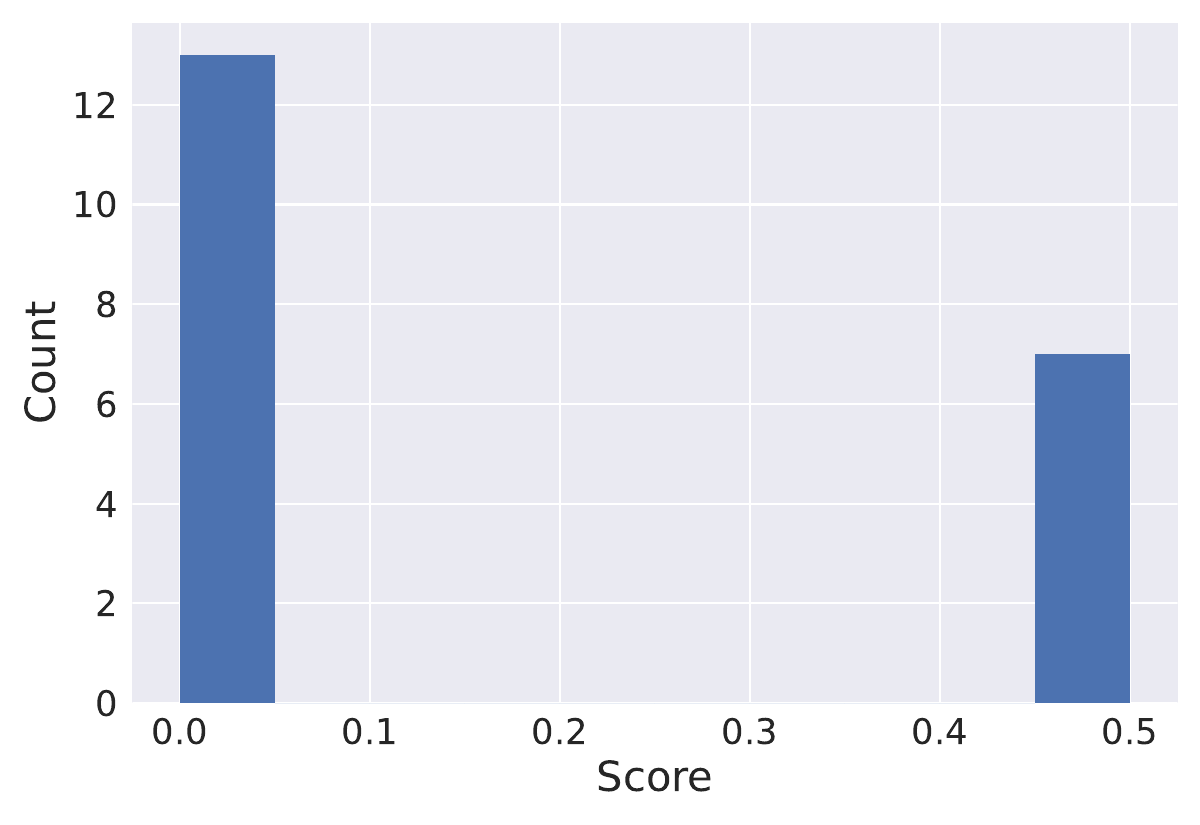}}
    \subcaptionbox{CIFAR-20\label{fig:osvm-score-cifar}}{\includegraphics[width=0.32\textwidth]{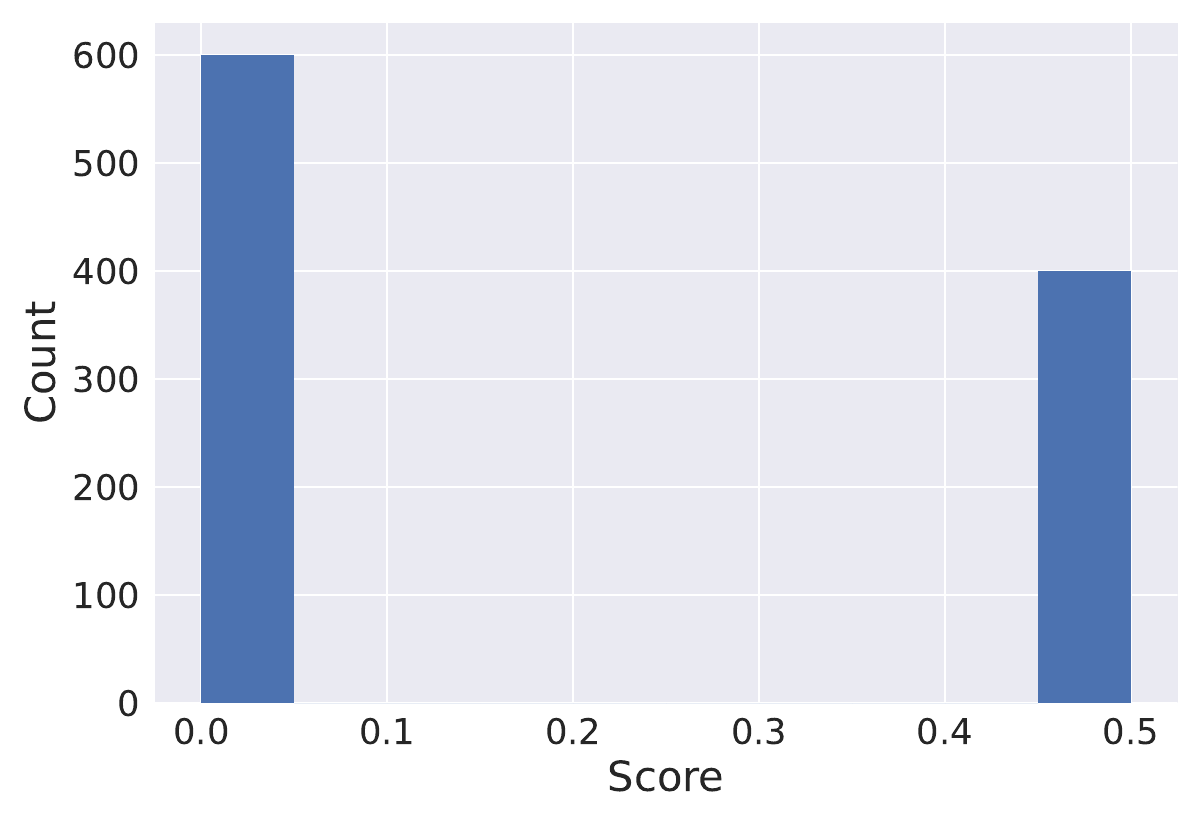}}
    \caption{Histogram plots of the learned O-SVM score under support shift.}
    \label{fig:osvm-score}
    \vspace{-1em}%
\end{figure}

Figure~\ref{fig:osvm-score} shows the histogram plots of the learned O-SVM score on MNIST, Color-MNIST, and CIFAR-20 under support shift. From the histogram plots, we observe that the score distribution consists of two peaks without overlapping; therefore, any value between the two peaks (e.g., 0.4) could be made as a threshold to split the validation data into two parts. If the score of validation data is higher than the threshold, then the data is identified as an (in-training) IT validation data; otherwise, it is an (out-of-training) OOT validation data. 

After splitting the validation data, $\alpha$ in \eqref{eq:giw_obj} is estimated as the ratio of the sample size between the IT validation data and the whole validation data, i.e., $\alphahat=n_{\val1}/n_{\val}$. For example, in Figure~\ref{fig:osvm-score-mnist}, $\alphahat = \frac{8}{8+12} = 0.4$, which is equal to the true value in MNIST experiments. Similarly, it can be verified that the $\alpha$ in Figure~\ref{fig:osvm-score-cmnist} and ~\ref{fig:osvm-score-cifar} are also accurately estimated. 

\subsection{Additional results on the validation data split error}
\label{sec:osvm-ablation}%
As mentioned in the main paper in Figure~\ref{fig:ab_val_split}, generally GIW is quite robust to split errors and OOT$\rightarrow$IT is more problematic than the other direction.
Here we present additional results on MNIST under 0.4 label noise in Table~\ref{tab:osvm_ab}. 
We can further observe that as the label noise increases from 0.2 to 0.4, IT$\rightarrow$OOT is more susceptible to such negative impacts than the other direction. 

\begin{table}
    \begin{center}
    \caption{Mean accuracy (standard deviation) in percentage on MNIST over the last ten epochs under support shift with label noise (5 trials). OOT/IT is short for out-of-training/in-training data. OOT $\rightarrow$ IT means OOT data flips to IT data and vice versa. LN is short for label noise. Flip rate is the percentage of OOT/IT data that randomly flipped to IT/OOT data.}
    \label{tab:osvm_ab}
    \footnotesize
    \begin{tabular}{c|cc|cc}
        \toprule
        \multirow{2}{*}{Flip rate} & \multicolumn{2}{c|}{OOT $\rightarrow$ IT} & \multicolumn{2}{c}{IT $\rightarrow$ OOT} \\
        \cmidrule(l){2-3} \cmidrule(l){4-5}
        % \midrule
         & LN 0.2 & LN 0.4 & LN 0.2 & LN 0.4 \\
        \midrule
        0 & 94.60 (0.22) & 93.74 (0.53) & 94.60 (0.22) & 93.74 (0.53) \\
        0.1 & 94.54 (0.31) & 94.04 (0.54) & 94.76 (0.33) & 93.66 (0.44)\\
        0.3 & 92.09 (0.99) & 92.38 (1.14) & 94.62 (0.26) & 92.78 (0.90) \\
        0.5 & 88.98 (1.73) & 88.83 (2.26) & 94.03 (0.64) & 92.25 (1.36) \\
        0.7 & 86.76 (1.28) & 85.94 (2.14) & 93.55 (0.82) & 90.53 (1.22)\\
        \bottomrule
    \end{tabular}
    \end{center}
    \vspace{-1em}%
\end{table}

\subsection{Visualizations of convolution kernels under label noise}
Then, we visualized the learned convolution kernels for all methods on Color-MNIST under 0.2 label noise. From Figure~\ref{fig:cnn_filter_ln}, we can see that the results aligned with the discussions about Figure~\ref{fig:cnn_filter}.
Previous IW or IW-like methods (i.e., DIW, R-DIW, Reweight and MW-Net), DA methods (i.e., CCSA and DANN), and Pretrain-val learned most weights on the red channel, which may cause the failure on the test data with green/blue color. Although Val-only had weights on all color channels, it may fail to learn useful data representation due to its limited training data. Only GIW could successfully recover the weights on all color channels while capturing the data representation. 

\begin{figure}
    \centering
    \includegraphics[width=0.3\textwidth]{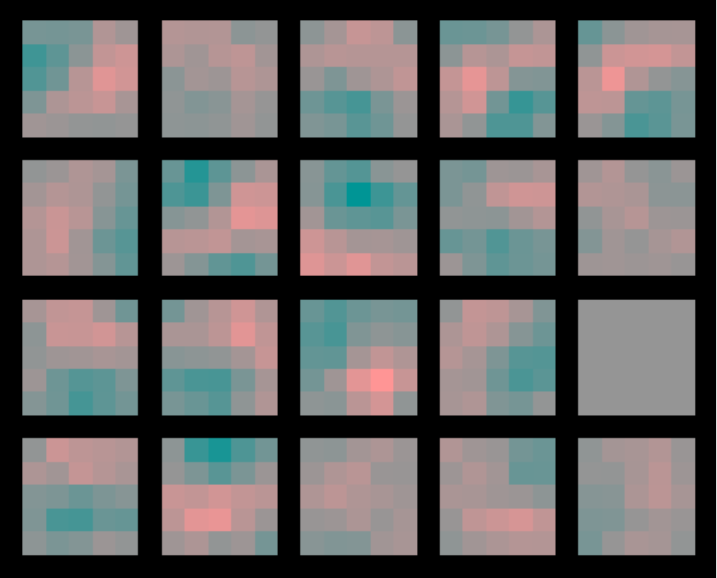}~~~~~
    \includegraphics[width=0.3\textwidth]{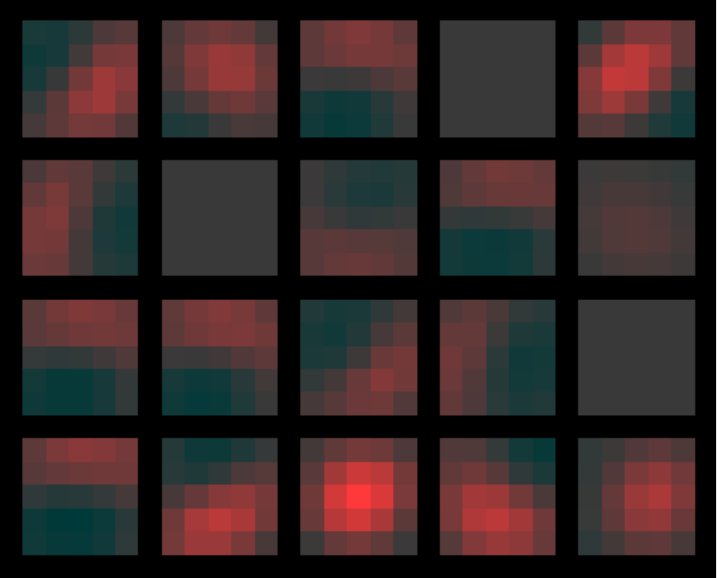}~~~~~
    \includegraphics[width=0.3\textwidth]{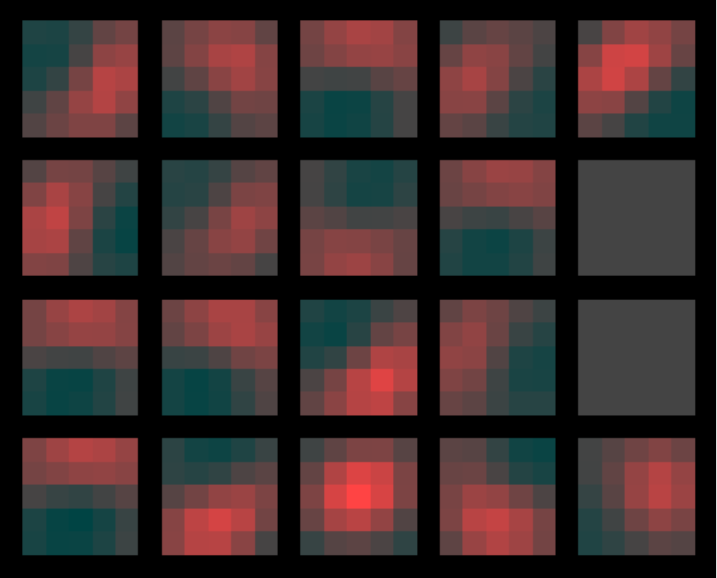}\\
    \begin{minipage}[c]{0.3\textwidth}\centering Reweight\end{minipage}~~~~~
    \begin{minipage}[c]{0.3\textwidth}\centering MW-Net\end{minipage}~~~~~
    \begin{minipage}[c]{0.3\textwidth}\centering R-DIW\end{minipage}\\
    \vspace{1em}
    \includegraphics[width=0.3\textwidth]{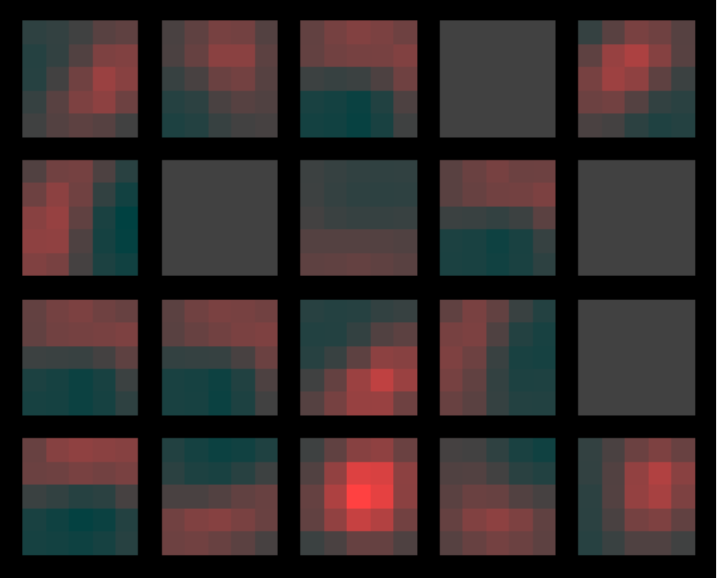}~~~~~
    \includegraphics[width=0.3\textwidth]{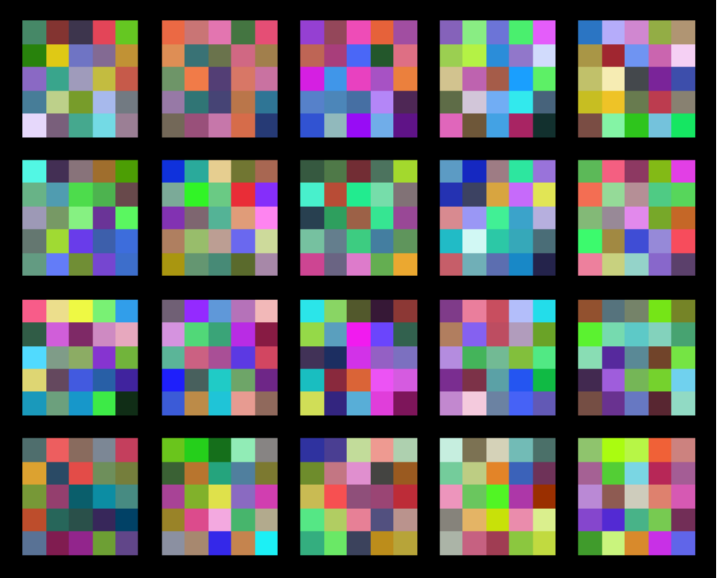}~~~~~
    \includegraphics[width=0.3\textwidth]{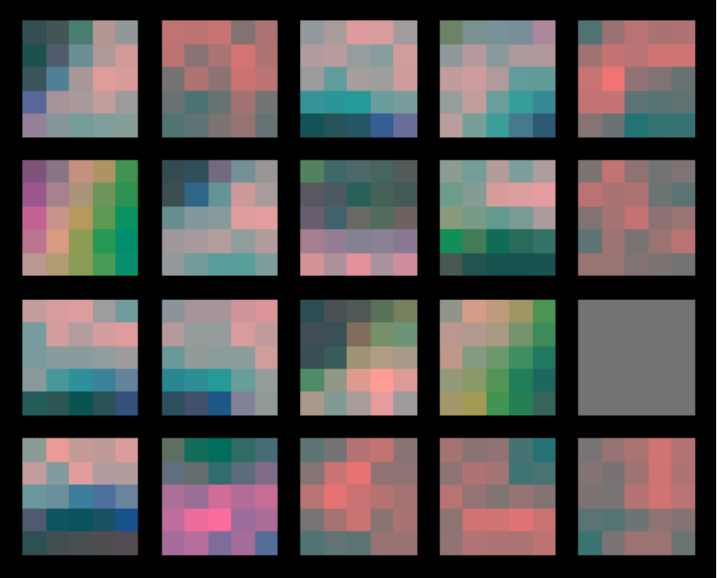}\\
    \begin{minipage}[c]{0.3\textwidth}\centering DIW\end{minipage}~~~~~
    \begin{minipage}[c]{0.3\textwidth}\centering Val-only\end{minipage}~~~~~
    \begin{minipage}[c]{0.3\textwidth}\centering Pretrain-val\end{minipage}\\
    \vspace{1em}
    \includegraphics[width=0.3\textwidth]{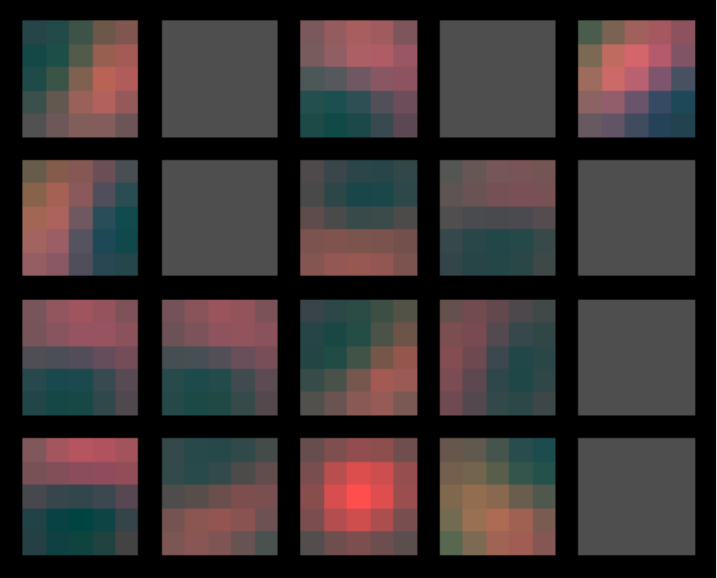}~~~~~
    \includegraphics[width=0.3\textwidth]{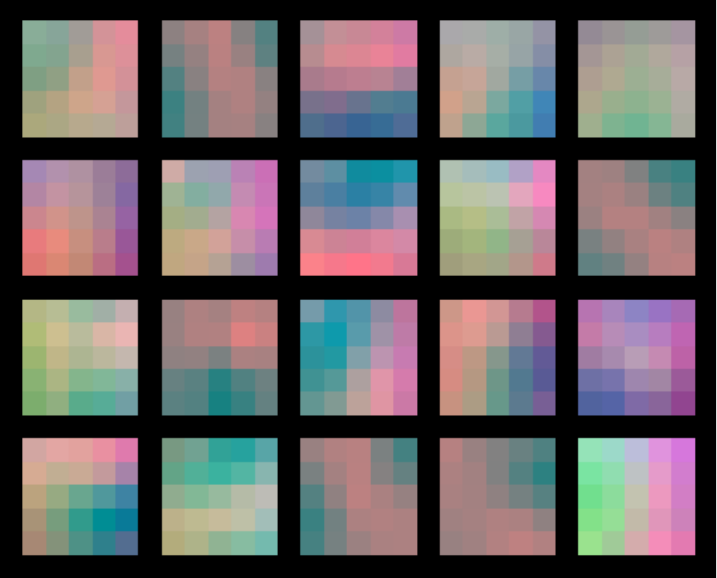}~~~~~
    \includegraphics[width=0.3\textwidth]{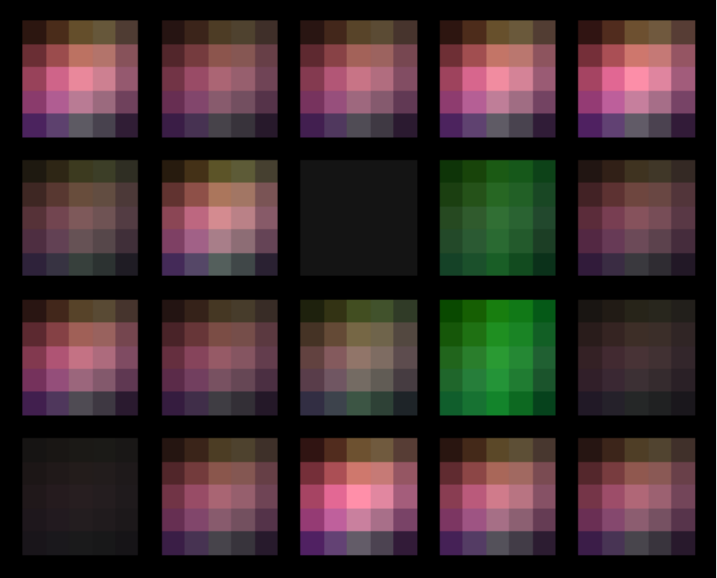}\\
    \begin{minipage}[c]{0.3\textwidth}\centering CCSA\end{minipage}~~~~~
    \begin{minipage}[c]{0.3\textwidth}\centering DANN\end{minipage}~~~~~
    \begin{minipage}[c]{0.3\textwidth}\centering GIW\end{minipage}\\
    \vspace{0.5em}
    \caption{Visualizations of the learned convolution kernels on Color-MNIST under 0.2 label noise.}
    \label{fig:cnn_filter_ln}
\end{figure}

\subsection{Additional experimental results for case (iv) }
\label{sec:results-iv}%
Here we present the results on MNIST in case (iv) under distribution-support shift, comparing with IW-like and domain adaptation (DA) baselines. From Figure~\ref{fig:result_ss_iv}, we can see that GIW outperforms other methods by a large margin in case (iv) under both label-noise and class-prior-shift settings. 

\begin{figure}
    \centering
    \begin{minipage}[c]{0.001\textwidth}~\end{minipage}\hspace{1em}%
    \begin{minipage}[c]{0.245\textwidth}\centering\small \hspace{2em} Label noise 0.2 \end{minipage}%
    \begin{minipage}[c]{0.245\textwidth}\centering\small \hspace{2em}  Label noise 0.4 \end{minipage}%
    \begin{minipage}[c]{0.245\textwidth}\centering\small \hspace{2em} Class-prior shift 10 \end{minipage}%
    \begin{minipage}[c]{0.245\textwidth}\centering\small \hspace{2em}
    Class-prior shift 100 \end{minipage} \\
    \begin{minipage}[c]{0.001\textwidth}\flushright\small \rotatebox{90}{\textit{IW-like}} \end{minipage}\hspace{1em}%
    \begin{minipage}[c]{0.97\textwidth}
        \includegraphics[width=0.245\textwidth]{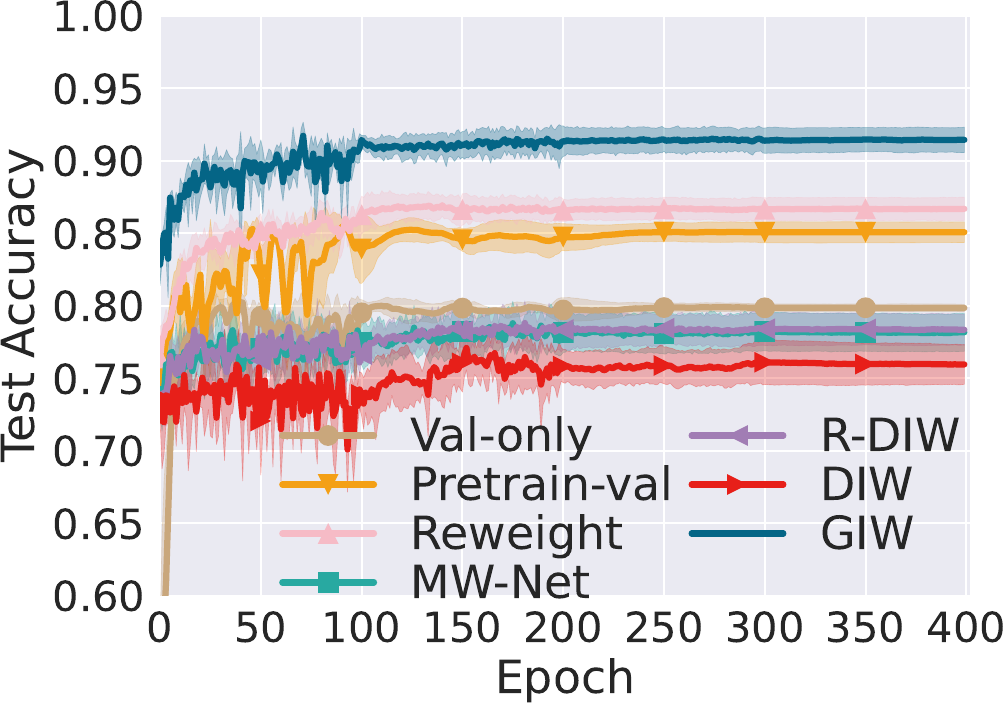}
        \includegraphics[width=0.245\textwidth]{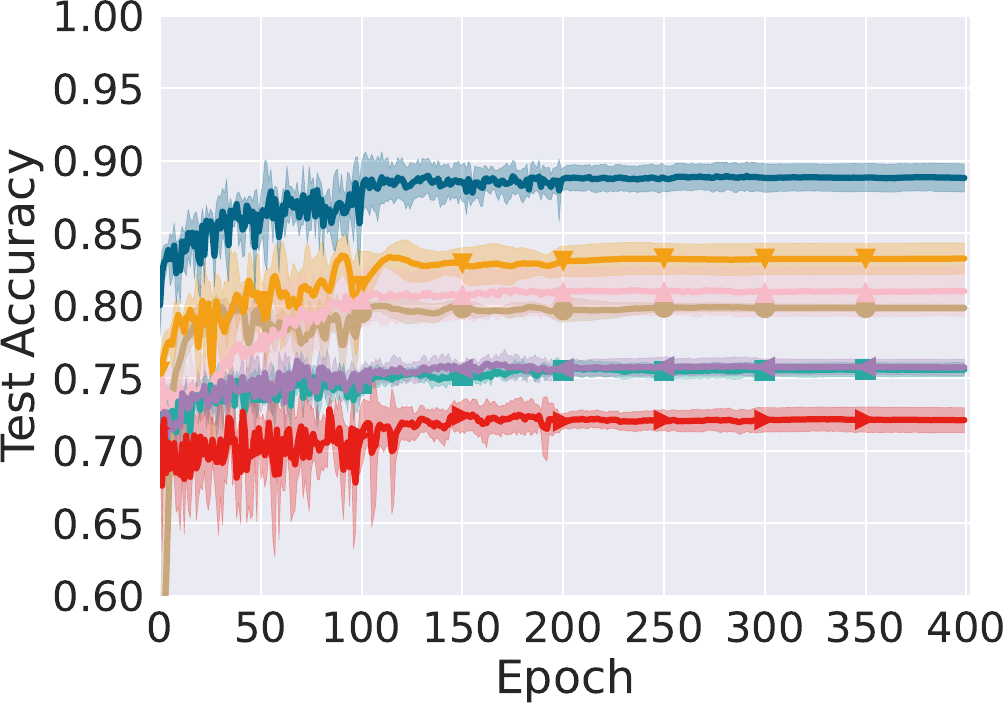}
        \includegraphics[width=0.245\textwidth]{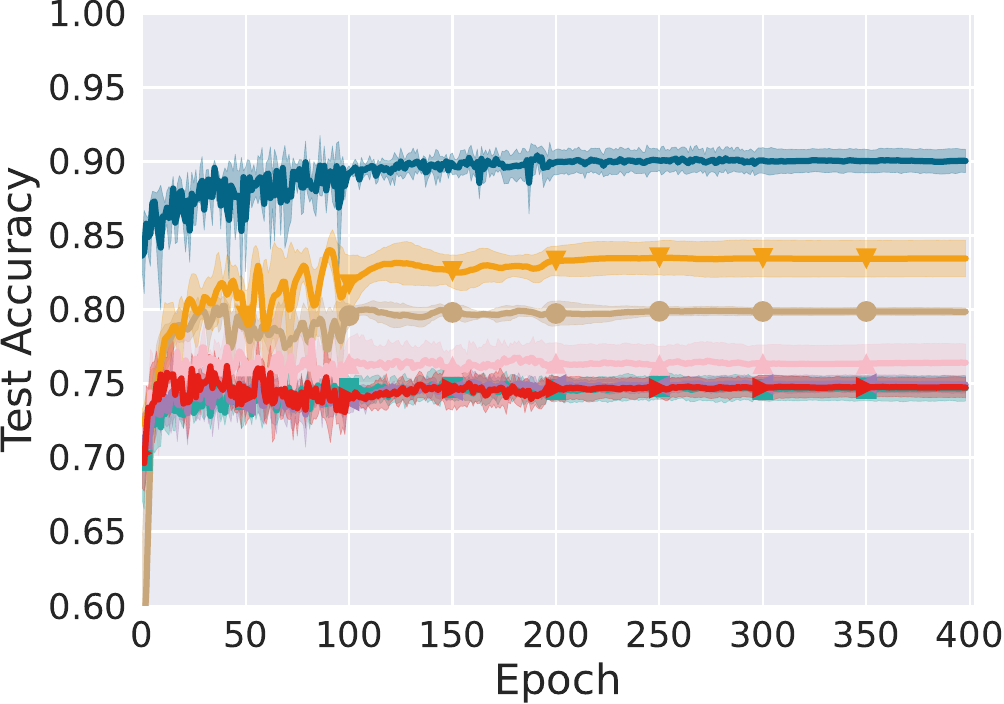}
        \includegraphics[width=0.245\textwidth]{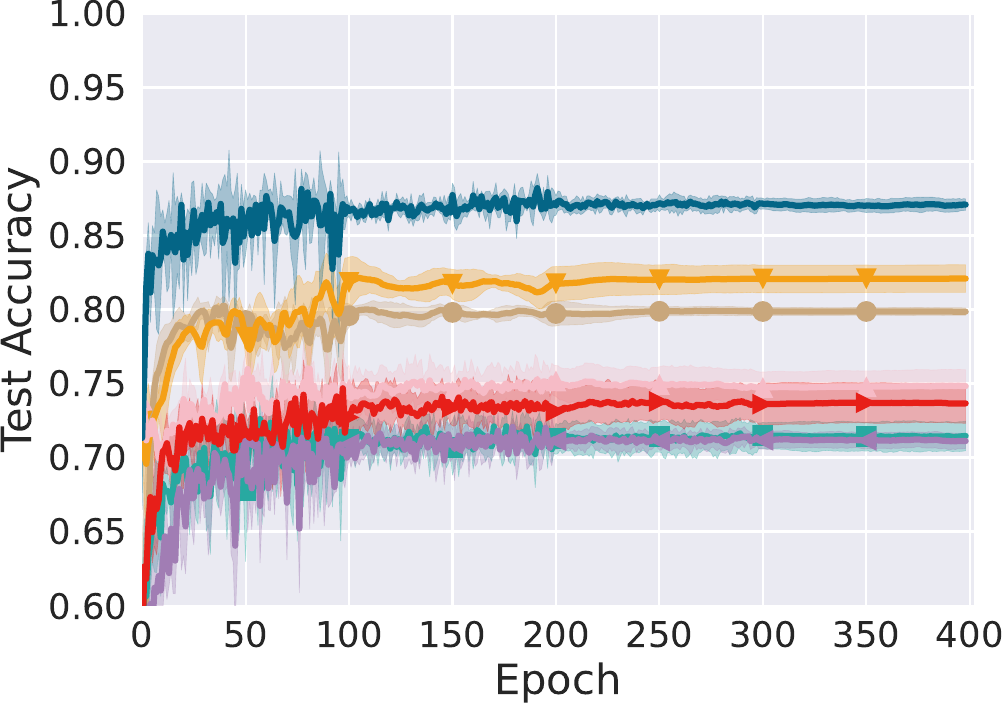}
    \end{minipage}\\
    \begin{minipage}[c]{0.001\textwidth}\flushright\small \rotatebox{90}{\textit{DA}} \end{minipage}\hspace{1em}%
    \begin{minipage}[c]{0.97\textwidth}
        \includegraphics[width=0.245\textwidth]{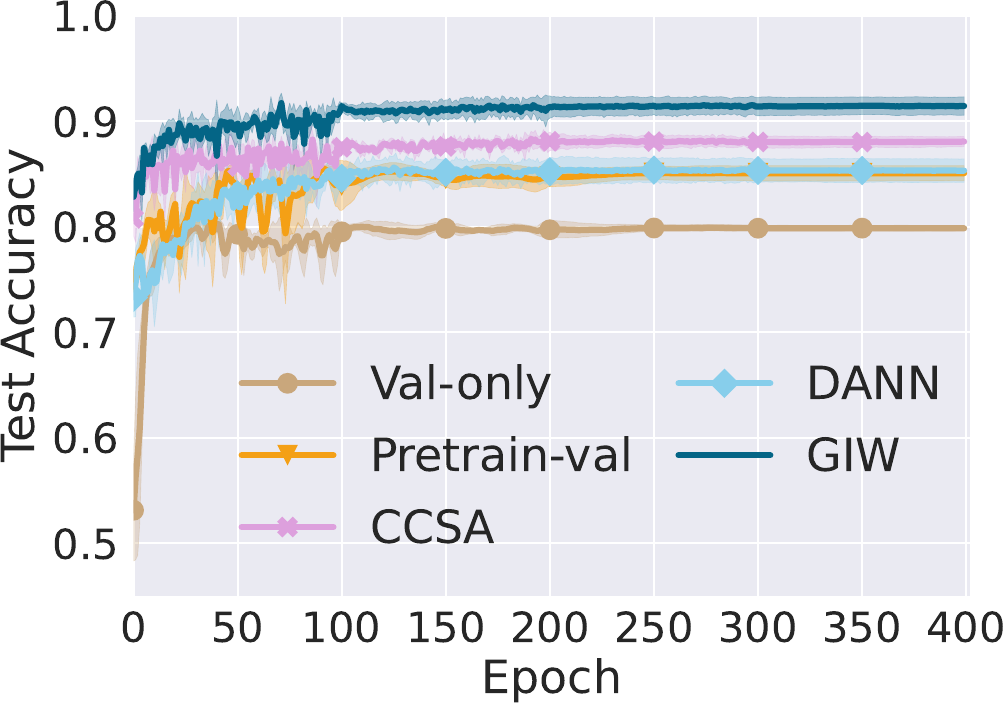}
        \includegraphics[width=0.245\textwidth]{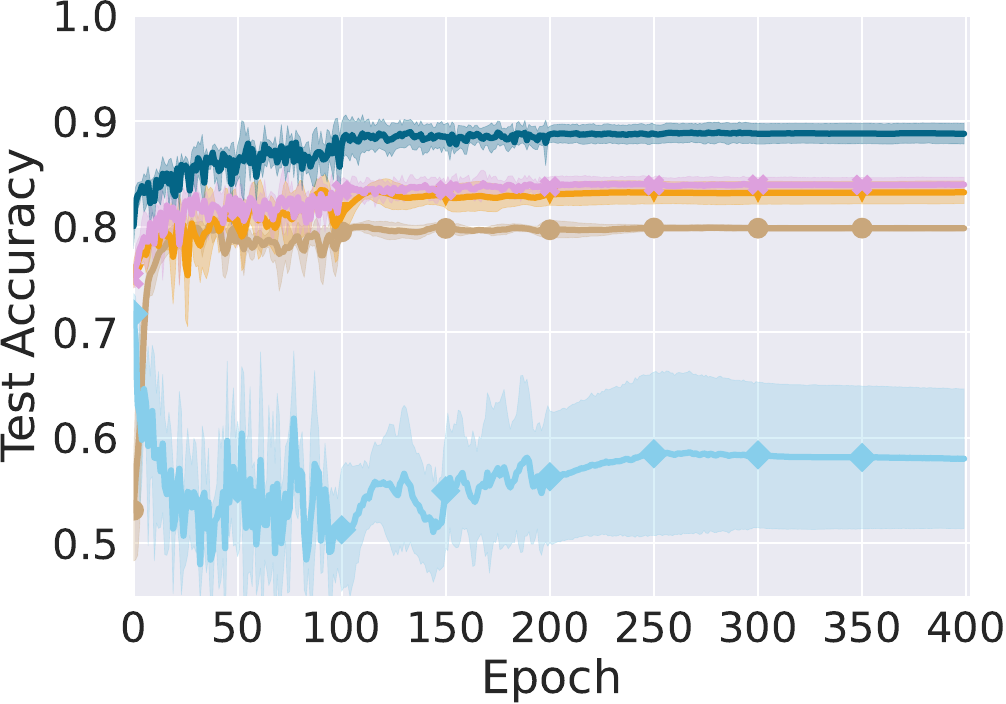}
        \includegraphics[width=0.245\textwidth]{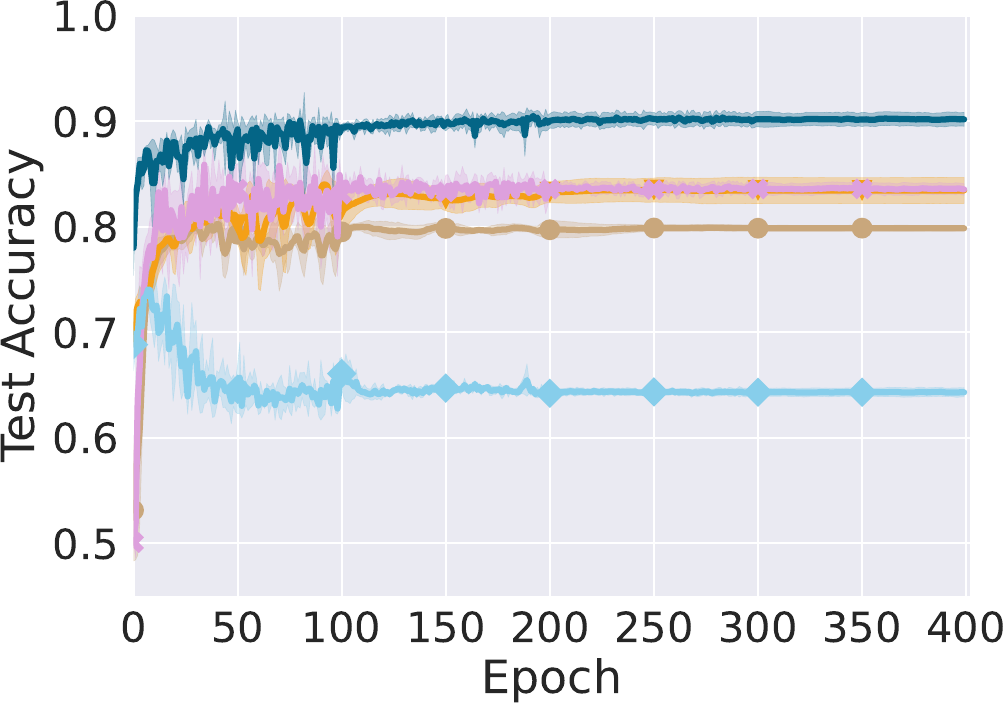}
        \includegraphics[width=0.245\textwidth]{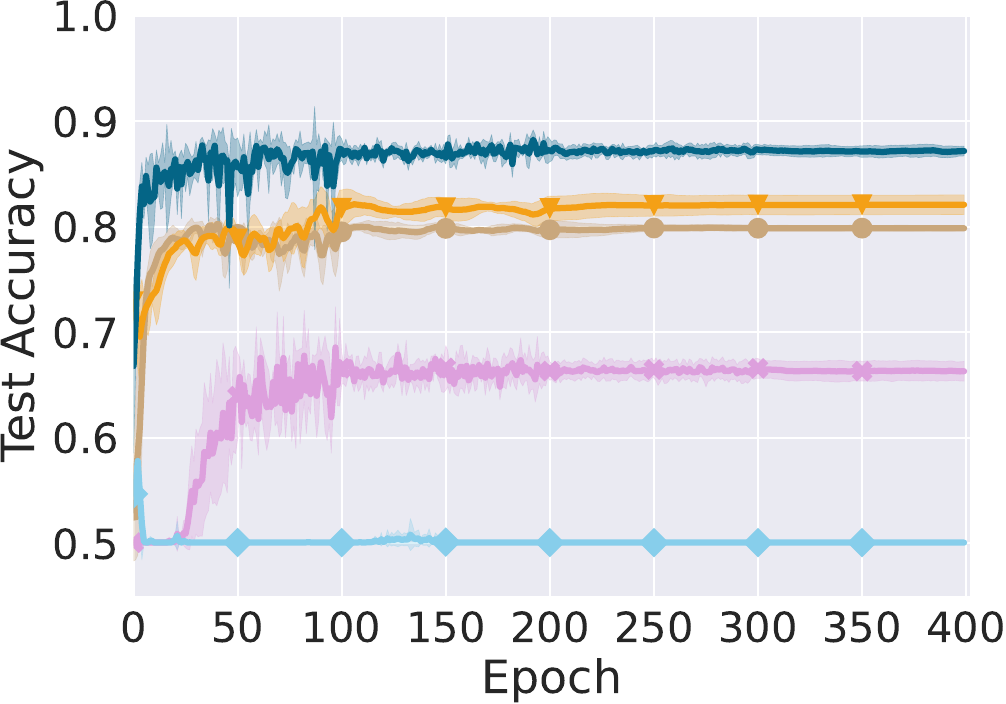}
    \end{minipage}\\
    \vspace{0.5em}
    \caption{Results on MNIST in case (iv) under distribution-support shift (5 trails).} 
    \label{fig:result_ss_iv}
    % \vspace{-0.5em}%
\end{figure}

% \begin{figure}
%     \centering
% 	\begin{subfigure}{.48\textwidth}
% 		\centering
% 		\includegraphics[width=\textwidth]{figure/mnist_c4_ln02_testacc.pdf}
% 		\caption{Label noise 0.2}
% 	\end{subfigure}
%     \hspace{0.5em}
%  	\begin{subfigure}{.48\textwidth}
% 		\centering
% 		\includegraphics[width=\textwidth]{figure/mnist_c4_ln04_testacc.pdf}
% 		\caption{Label noise 0.4}
% 	\end{subfigure} \\
%     \vspace{0.5em}
% 	\begin{subfigure}{.48\textwidth}
% 		\centering
% 		\includegraphics[width=\textwidth]{figure/mnist_c4_ci01_testacc.pdf}
% 		\caption{Class-prior shift 10}
% 	\end{subfigure}
%     \hspace{0.5em}
%  	\begin{subfigure}{.48\textwidth}
% 		\centering
% 		\includegraphics[width=\textwidth]{figure/mnist_c4_ci001_testacc.pdf}
% 		\caption{Class-prior shift 100}
% 	\end{subfigure}
%     \caption{Results on MNIST in case (iv) with IW-like baselines under distribution-support shift (5 trails).}
%     \label{fig:result_ss_iv_iw}
% \end{figure}

\subsection{Summary of classification accuracy}
Table~\ref{tab:ss} and~\ref{tab:ss_da} present the mean accuracy (standard deviation) in percentage on MNIST, Color-MNIST, and CIFAR-20 over the last ten epochs under support shift, comparing with IW-like methods and domain adaptation (DA) methods respectively, corresponding to Figure~\ref{fig:result_ss}. Table~\ref{tab:sd} and~\ref{tab:sd_da} present such results under support-distribution shift, corresponding to Figure~\ref{fig:result_sd} and~\ref{fig:result_sd_da}. 
Table~\ref{tab:sd-case4} and~\ref{tab:sd-case4-da} shows the summary of results in case (iv) on MNIST, corresponding to Figure~\ref{fig:result_ss_iv}.

\begin{table}
    \caption{Mean accuracy (standard deviation) in percentage on MNIST, Color-MNIST, and CIFAR-20 over the last ten epochs under support shift in case (iii) with IW-like baselines (5 trials). Best and comparable methods (paired \textit{t}-test at significance level 5\%) are highlighted in bold. This result corresponds to the top row in Figure~\ref{fig:result_ss}.}
    \label{tab:ss}
    \footnotesize
    \begin{center}
    \resizebox{140mm}{!}{
    \begin{tabular}{c|c|ccccccc}
        \toprule
        Data & Case & Val-only & Pretrain-val & Reweight & MW-Net & R-DIW & DIW & GIW \\
        \midrule
        \multirow{1}{*}{\makecell{MNIST}} & (iii) & 78.94 (0.99) & 88.31 (1.02) & 82.03 (0.13) & 82.17 (0.32) & 81.73 (0.42) & 80.53 (1.23) & \textbf{95.16 (0.20)} \\
        \multirow{1}{*}{\makecell{MNIST}} & (iv) & 79.86 (0.26) & 86.25 (0.95) & 77.78 (0.91) & 76.73 (0.63) & 77.11 (0.63) & 75.11 (1.43) & \textbf{90.94 (0.91)}  \\
        \multirow{1}{*}{\makecell{Color-MNIST}} & (iii) & 31.19 (0.65) & 80.40 (3.62) & 38.39 (1.03) & 39.26 (0.01) & 39.28 (0.03) & 39.50 (0.18) &\textbf{93.87 (0.19)} \\
        \multirow{1}{*}{\makecell{CIFAR-20}} & (iii) & 28.61 (0.66) & 42.79 (0.41) & 56.17 (0.29) & 57.90 (0.18) & 58.02 (0.35) & 55.78 (0.34) & \textbf{59.73 (0.47)} \\
        \bottomrule
    \end{tabular}}
    \end{center}
\end{table}

\begin{table}
    \caption{Mean accuracy (standard deviation) in percentage on MNIST, Color-MNIST, and CIFAR-20 over the last ten epochs under support shift in case (iii) with DA baselines (5 trials). Best and comparable methods (paired \textit{t}-test at significance level 5\%) are highlighted in bold. This result corresponds to the bottom row in Figure~\ref{fig:result_ss}.}
    \label{tab:ss_da}
    \footnotesize
    \begin{center}
    \resizebox{110mm}{!}{
    \begin{tabular}{c|c|ccccc}
        \toprule
        Data & Case & Val-only & Pretrain-val & CCSA & DANN & GIW \\
        \midrule
        \multirow{1}{*}{\makecell{MNIST}} & (iii) & 78.94 (0.99) & 88.31 (1.02) & 94.37 (0.46) & 87.24 (1.58) & \textbf{95.16 (0.20)} \\
        \multirow{1}{*}{\makecell{MNIST}} & (iv) & 79.86 (0.26) & 86.25 (0.95) & 88.76 (0.86) & 82.92 (1.89) & \textbf{90.94 (0.91)}  \\
        \multirow{1}{*}{\makecell{Color-MNIST}} & (iii) & 31.19 (0.65) & 80.40 (3.62) & 80.13 (5.90) & 64.34 (5.08) &\textbf{93.87 (0.19)} \\
        \multirow{1}{*}{\makecell{CIFAR-20}} & (iii) & 28.61 (0.66) & 42.79 (0.41) & \textbf{60.87 (0.39)} & 58.12 (0.44) & 59.73 (0.47) \\
        \bottomrule
    \end{tabular}}
    \end{center}
\end{table}

\begin{table}
    \caption{Mean accuracy (standard deviation) in percentage on MNIST, Color-MNIST (C-MNIST), and CIFAR-20 over the last ten epochs under support-distribution shift in case (iii) with IW-like baselines (5 trials). Best and comparable methods (paired \textit{t}-test at significance level 5\%) are highlighted in bold. LN/CS is short for label noise/class-prior shift. This result corresponds to Figure~\ref{fig:result_sd}.}
    \label{tab:sd}
    \footnotesize
    \begin{center}
    \resizebox{137mm}{!}{
    % \resizebox{\textwidth}{16mm}{
    % \renewcommand\tabcolsep{3pt}
    % \hspace*{-0.5em}
    % \begin{tabular}{c|p{0.5cm}p{0.9cm}p{0.9cm}p{0.9cm}p{1.1cm}p{1.1cm}p{1.1cm}p{1.1cm}p{1.1cm}p{1.1cm}}
    \begin{tabular}{c|c|ccccccc}
        \toprule
        Data & Shift & Val-only & Pretrain-val & Reweight & MW-Net & R-DIW & DIW & GIW \\
        \midrule
        \multirow{4}{*}{\rotatebox{90}{\textit{MNIST}}} & LN 0.2 & 78.94 (0.99) & 86.34 (0.77) & 89.28 (0.64) & 82.29 (0.80) & 82.70 (0.76) & 80.12 (2.90) & \textbf{94.60 (0.22)} \\
        & LN 0.4 & 78.94 (0.99) & 85.43 (0.97) & 84.73 (1.27) & 80.92 (0.52) & 81.19 (0.39) & 79.00 (0.33) & \textbf{93.74 (0.53)} \\
        & CS 10 & 78.94 (0.99) & 86.43 (0.71) & 80.99 (0.67) & 79.76 (0.60) & 80.15 (0.38) & 79.73 (0.38) & \textbf{93.43 (0.40)} \\
        & CS 100 & 78.94 (0.99) & 84.54 (1.61) & 80.65 (0.71) & 77.06 (0.71) & 76.53 (0.45) & 78.58 (0.96) & \textbf{91.52 (0.59)} \\
        \midrule
        \multirow{4}{*}{\rotatebox{90}{\textit{C-MNIST}}} & LN 0.2 & 31.19 (0.65) & 60.21 (4.47) & 30.82 (1.04) & 40.00 (0.44) & 38.25 (0.46) & 38.04 (1.83) & \textbf{90.99 (0.61)} \\
        & LN 0.4 & 31.19 (0.65) & 57.13 (5.48) & 27.26 (0.41) & 39.98 (0.06) & 39.08 (0.02) & 39.02 (0.50) & \textbf{90.09 (0.88)} \\
        & CS 10 & 31.22 (0.69) & 87.63 (1.97) & 38.61 (0.62) & 39.16 (0.19) & 39.15 (0.19) & 39.18 (0.19) & \textbf{94.43 (1.87)} \\
        & CS 100 & 31.22 (0.69) & 79.70 (3.29) & 37.70 (0.21) & 36.81 (0.10) & 36.75 (0.17) & 36.98 (0.23) & \textbf{88.34 (0.87)} \\
        \midrule
        \multirow{4}{*}{\rotatebox{90}{\textit{CIFAR-20}}} & LN 0.2 &  29.18 (0.30) & 42.48 (0.55) & 47.01 (0.87) & 49.19 (0.31) & 50.13 (0.29) & 48.70 (0.36) & \textbf{54.19 (0.69)} \\
        & LN 0.4 & 29.18 (0.30) & 39.46 (0.47) & 38.94 (1.10) & 43.29 (0.42) & 43.10 (0.29) & 43.23 (0.29) & \textbf{45.48 (0.36)} \\
        & CS 10 & 28.45 (0.27) & 37.31 (0.28) & 42.25 (0.68) & 44.96 (0.21) & 45.19 (0.20) & 39.68 (0.14) & \textbf{49.17 (0.40)} \\
        & CS 100 & 28.45 (0.27) & 36.55 (0.29) & 31.95 (0.77) & 34.66 (0.28) & 34.87 (0.12) & 26.39 (1.05) & \textbf{40.27 (2.17)} \\
        \bottomrule
    \end{tabular}}
    \end{center}
\end{table}

\begin{table}
    \caption{Mean accuracy (standard deviation) in percentage on MNIST, Color-MNIST (C-MNIST), and CIFAR-20 over the last ten epochs under support-distribution shift in case (iii) with DA baselines (5 trials). Best and comparable methods (paired \textit{t}-test at significance level 5\%) are highlighted in bold. LN/CS is short for label noise/class-prior shift. This result corresponds to Figure~\ref{fig:result_sd_da}.}
    \label{tab:sd_da}
    \footnotesize
    \begin{center}
    \resizebox{105mm}{!}{
    \begin{tabular}{c|c|ccccccc}
        \toprule
        Data & Shift & Val-only & Pretrain-val & CCSA & DANN & GIW \\
        \midrule
        \multirow{4}{*}{\rotatebox{90}{\textit{MNIST}}} & LN 0.2 & 78.94 (0.99) & 86.34 (0.77) & 92.95 (0.20) & 87.33 (3.22) & \textbf{94.60 (0.22)} \\
        & LN 0.4 & 78.94 (0.99) & 85.43 (0.97) & 89.83 (0.52) & 65.56 (8.31) & \textbf{93.74 (0.53)} \\
        & CS 10 & 78.94 (0.99) & 86.43 (0.71) & 89.18 (0.33) & 73.23 (0.66) & \textbf{93.43 (0.40)} \\
        & CS 100 & 78.94 (0.99) & 84.54 (1.61) & 72.67 (1.75) & 49.28 (0.00) & \textbf{91.52 (0.59)} \\
        \midrule
        \multirow{4}{*}{\rotatebox{90}{\textit{C-MNIST}}} & LN 0.2 & 31.19 (0.65) & 60.21 (4.47) & 84.32 (4.57) & 60.65 (5.96) & \textbf{90.99 (0.61)} \\
        & LN 0.4 & 31.19 (0.65) & 57.13 (5.48) & 80.32 (0.93) & 70.27 (9.09) & \textbf{90.09 (0.88)} \\
        & CS 10 & 31.22 (0.69) & 87.63 (1.97) & 67.39 (3.07) & 49.65 (7.98) & \textbf{94.43 (1.87)} \\
        & CS 100 & 31.22 (0.69) & 79.70 (3.29) & 64.15 (6.80) & 36.16 (5.80) & \textbf{88.34 (0.87)} \\
        \midrule
        \multirow{4}{*}{\rotatebox{90}{\textit{CIFAR-20}}} & LN 0.2 &  29.18 (0.30) & 42.48 (0.55) & 52.49 (0.58) & 48.31 (0.44) & \textbf{54.19 (0.69)} \\
        & LN 0.4 & 29.18 (0.30) & 39.46 (0.47) & 40.68 (0.70) & 42.29 (0.70) & \textbf{45.48 (0.36)} \\
        & CS 10 & 28.45 (0.27) & 37.31 (0.28) & 46.17 (0.20) & 41.58 (0.34) & \textbf{49.17 (0.40)} \\
        & CS 100 & 28.45 (0.27) & 36.55 (0.29) & 34.79 (0.35) & 33.90 (0.14) & \textbf{40.27 (2.17)} \\
        \bottomrule
    \end{tabular}}
    \end{center}
\end{table}

\begin{table}
    \caption{Mean accuracy (standard deviation) in percentage on MNIST over the last ten epochs under support-distribution shift in case (iv) with IW-like baselines (5 trials). Best and comparable methods (paired \textit{t}-test at significance level 5\%) are highlighted in bold. LN/CS means label noise/class-prior shift. This result corresponds to the top row in Figure~\ref{fig:result_ss_iv}.}
    \label{tab:sd-case4}
    \footnotesize
    \begin{center}
    \resizebox{140mm}{!}{
    \begin{tabular}{c|cccccccc}
        \toprule
        Shift & Val-only & Pretrain-val & Reweight & MW-Net & R-DIW & DIW & GIW \\
        \midrule
        \multirow{1}{*}{LN 0.2} & 79.86 (0.26) & 85.08 (0.71) & 86.69 (0.80) & 78.16 (1.30) & 78.36 (1.08) & 75.96 (1.39) & \textbf{91.44 (0.85)} \\
        \multirow{1}{*}{LN 0.4} & 79.86 (0.26) & 83.26 (1.08) & 81.01 (1.73) & 75.60 (0.49) & 75.76 (0.56) & 72.12 (0.86) & \textbf{88.84 (0.97)} \\
        \multirow{1}{*}{CS 10}  & 79.86 (0.26) & 83.44 (1.24) & 76.41 (1.28) & 74.67 (0.84) & 74.94 (0.59) & 74.75 (0.69) & \textbf{90.20 (0.66)} \\
        \multirow{1}{*}{CS 100} & 79.86 (0.26) & 82.09 (0.93) & 74.84 (1.10) & 71.42 (0.99) & 71.11 (0.54) & 73.66 (1.32) & \textbf{87.14 (0.51)} \\
        \bottomrule
    \end{tabular}}
    \end{center}
\end{table}

\begin{table}
    \caption{Mean accuracy (standard deviation) in percentage on MNIST over the last ten epochs under support-distribution shift in case (iv) with DA baselines (5 trials). Best and comparable methods (paired \textit{t}-test at significance level 5\%) are highlighted in bold. LN/CS means label noise/class-prior shift. This result corresponds to the bottom row in Figure~\ref{fig:result_ss_iv}.}
    \label{tab:sd-case4-da}
    \footnotesize
    \begin{center}
    \resizebox{104mm}{!}{
    \begin{tabular}{c|cccccc}
        \toprule
        Shift & Val-only & Pretrain-val & CCSA & DANN & GIW \\
        \midrule
        \multirow{1}{*}{LN 0.2} & 79.86 (0.26) & 85.08 (0.71) & 88.06 (0.48) & 85.32 (1.11) & \textbf{91.44 (0.85)} \\
        \multirow{1}{*}{LN 0.4} & 79.86 (0.26) & 83.26 (1.08) & 84.00 (0.70) & 58.02 (6.63) & \textbf{88.84 (0.97)} \\
        \multirow{1}{*}{CS 10}  & 79.86 (0.26) & 83.44 (1.24) & 83.60 (0.53) & 64.30 (0.42) & \textbf{90.20 (0.66)} \\
        \multirow{1}{*}{CS 100} & 79.86 (0.26) & 82.09 (0.93) & 66.31 (0.92) & 50.06 (0.00) & \textbf{87.14 (0.51)} \\
        \bottomrule
    \end{tabular}}
    \end{center}
\end{table}

% \section*{References}

% References follow the acknowledgments in the camera-ready paper. Use unnumbered first-level heading for
% the references. Any choice of citation style is acceptable as long as you are
% consistent. It is permissible to reduce the font size to \verb+small+ (9 point)
% when listing the references.
% Note that the Reference section does not count towards the page limit.
% \medskip

% {
% \small

% [1] Alexander, J.A.\ \& Mozer, M.C.\ (1995) Template-based algorithms for
% connectionist rule extraction. In G.\ Tesauro, D.S.\ Touretzky and T.K.\ Leen
% (eds.), {\it Advances in Neural Information Processing Systems 7},
% pp.\ 609--616. Cambridge, MA: MIT Press.

% [2] Bower, J.M.\ \& Beeman, D.\ (1995) {\it The Book of GENESIS: Exploring
%   Realistic Neural Models with the GEneral NEural SImulation System.}  New York:
% TELOS/Springer--Verlag.

% [3] Hasselmo, M.E., Schnell, E.\ \& Barkai, E.\ (1995) Dynamics of learning and
% recall at excitatory recurrent synapses and cholinergic modulation in rat
% hippocampal region CA3. {\it Journal of Neuroscience} {\bf 15}(7):5249-5262.
% }

%%%%%%%%%%%%%%%%%%%%%%%%%%%%%%%%%%%%%%%%%%%%%%%%%%%%%%%%%%%%

\end{document}